%% file: main.tex
\pdfoutput=1
\documentclass[aps,prx,groupedaddress]{revtex4-2}
\usepackage{microtype}
\usepackage{graphicx}
\usepackage{hyperref}
\usepackage{amsmath}
\usepackage{amssymb}
\usepackage{mathtools}
\usepackage{amsthm}
\usepackage{xfrac}
\usepackage{dsfont}
\usepackage{subcaption}

\DeclareMathOperator{\unif}{Unif}
\DeclareMathOperator{\cov}{Cov}
\DeclareMathOperator{\var}{Var}
\newcommand{\scprod}[2]{\left\langle #1, #2 \right\rangle}

\theoremstyle{plain}
\newtheorem{result}{Result}[section]
\newtheorem{proposition}{Proposition}[section]
\newtheorem{lemma}{Lemma}[section]
\newtheorem{conjecture}{Conjecture}[section]

\begin{document}
\title{Specialization of softmax attention heads: \\ insights from the high-dimensional single-location model.}

\author{Margarita Sagitova}
\author{\includegraphics[height=0.79em]{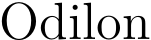} Duranthon}
\author{Lenka Zdeborová}
\email{firstname.secondname@epfl.ch}
\affiliation{%
Statistical physics of computation laboratory,\\
École Polytechnique Fédérale de Lausanne, Switzerland
}%

\title{Specialization of softmax attention heads: insights from the high-dimensional single-location model}

\begin{abstract}
  Multi-head attention enables transformer models to represent multiple attention patterns simultaneously. Empirically, head specialization emerges in distinct stages during training, while many heads remain redundant and learn similar representations. We propose a theoretical model capturing this phenomenon, based on the multi-index and single-location regression frameworks. In the first part, we analyze the training dynamics of multi-head softmax attention under SGD, revealing an initial unspecialized phase followed by a multi-stage specialization phase in which different heads sequentially align with latent signal directions. In the second part, we study the impact of attention activation functions on performance. We introduce the Bayes-softmax attention, which achieves optimal prediction performance in this setting.
\end{abstract}

\maketitle

\section{Introduction}

Multi-head attention is a central architectural ingredient of modern transformer models, enabling multiple attention patterns to coexist within a single layer. Empirical analyses show that attention heads do not all develop simultaneously during training: instead, new specialized heads emerge in distinct stages, often accompanied by sharp changes in loss or behavior suggestive of phase transitions \cite{chen24chutes}, with qualitatively new attention behaviors appearing as training progresses \cite{clark2019does,hoogland24etapesAttention,wang25rllc,chen24chutes,tigges24circuits}. At the same time, a substantial fraction of heads in trained models remain redundant and can be removed with little impact on performance \cite{michel2019sixteenheadsreallybetter,voita2019multiheadspecialized}. These observations raise a natural theoretical question: what drives staged head emergence, phase-transition-like behavior, and persistent redundancy in multi-head attention?

Recent theoretical work has begun to elucidate training dynamics and head specialization in simplified attention models. In in-context learning (ICL) of linear regression, linear attention exhibits saddle-to-saddle dynamics in which heads sequentially align with covariance eigenmodes of the data distribution \cite{saxe25iclLinear}, while multi-head softmax attention in multi-task linear regression displays short emergence phases during which different heads align with distinct tasks and converge to optimal predictors \cite{chen24icl}. These analyses establish in-context linear regression as a first solvable setting in which head specialization can be studied from first principles. We provide a complementary view on the specialization behavior by considering a model where multi-head attention itself constitutes the sole predictive mechanism and head outputs are uniformly aggregated. Rather than aiming for a more general model, our goal is to contrast solvable settings in order to identify which aspects of head specialization are shared across models and which depend on architectural or statistical details. 

In this work, we study head specialization in a controlled high-dimensional setting by specifying both a probabilistic data model and a minimal attention architecture trained on it. We consider a synthetic sequence-to-token task where the token to be recovered carries a structured signal generated from a multi-index latent model, while all other tokens contain pure noise. We learn this task with a multi-head softmax attention layer trained by stochastic gradient descent (SGD), whose head outputs are uniformly aggregated so that attention itself constitutes the sole predictive mechanism.

Our setting also connects to classical committee-machine and multi-index models: the multi-head architecture plays the role of a committee in which each head acts as a unit with its own parameters, and specialization corresponds to symmetry breaking among these units under SGD \cite{seung1992statistical,saad1995dynamics,saad1995line,engel2001statistical}. It is likewise related to recent high-dimensional analyses of SGD in multi-index and sequence single-index models \cite{aubin2018committee,goldt2019dynamics,arous2021online,arous2023highdimensionallimittheoremssgd,abbe23leap,arnaboldi2025asymptotics}. Our model is also closely related to recent solvable attention and sequence regression frameworks \cite{marion24slr,troiani2025fundamental,duranthon25slr,Barnfield25,dohmatob25slr}, which analyze high-dimensional training dynamics of simplified attention architectures from first principles.

Within the considered setting, in the high-dimensional limit, the evolution of the attention parameters reduces to a low-dimensional system of order parameters tracking head alignments with the latent signal structure, which captures the full training trajectory. 
This analysis reveals a two-stage learning dynamics: an initial fast phase where heads develop a common component aligned with the easiest (mean) signal direction, followed by a slower specialization phase in which different heads diverge and align with additional latent directions of the signal, eventually reaching stable specialized configurations. The hierarchy of these specialization events is governed by the latent signal structure, leading to sequential acquisition of increasingly subtle components (see also \cite{saxe19specialization} for stage-wise learning in deep linear networks).

We further study how attention activation functions, allowing some heads to be effectively deactivated, influence the model performance. In our setting, redundant or poorly specialized heads can inject persistent variance, making attention normalization a central modeling ingredient. While value or readout transformations can partially mitigate this effect, they act in an input-independent manner and therefore cannot fully suppress noise from redundant heads. By contrast, activations that enable head deactivation provide an input-adaptive mechanism for controlling redundancy. Alternative attention activations or extra sink tokens have been studied mainly to prevent over-focusing on irrelevant tokens \cite{kaul24softmax1,darcet24registres,xiao24eviers}. Here, we analyze such activations as tools for controlling head redundancy and specialization. Finally, \cite{qiu2025gatedattentionlargelanguage} showed that head-specific gating of attention outputs enhances performance, motivating the deactivation mechanisms without additional parameters.

Our main contributions are as follows.
\begin{itemize}
\item[(i)] We introduce a high-dimensional probabilistic framework for training multi-head softmax attention in a sequence-to-token regression task, where attention itself constitutes the sole predictive mechanism, enabling an exact characterization of learning dynamics under SGD.
\item[(ii)] We derive a closed system of equations tracking the full evolution of head alignments along the SGD trajectory, and show that training exhibits a fast unspecialized phase followed by a slower hierarchy of specialization events governed by the latent signal structure.
\item[(iii)] We analyze the effect of attention normalization on head redundancy, proving that standard softmax is generically suboptimal in this setting. We introduce the Bayes-softmax attention, which, in our setting, reaches the Bayes-risk and prescribes the optimal number of heads and the way to normalize them.
\end{itemize}

\textbf{Notations. }
For a positive integer $n$ we write $[n]=\{1,\ldots,n\}$. We use $\delta_{i,j}$ to denote a Kronecker delta that evaluates to $1$ if $i=j$ and $0$ otherwise. For a vector $v$, $v^{\odot 2}$ is the element-wise square, $\mathrm{Diag}(v)$ the diagonal matrix which diagonal is $v$. For a positive integer $n$, $\mathds{1}_{n}\in\mathbb R^n$ is the vector filled with ones, $I_n$ the $n\times n$ identity matrix, $\mathcal S_+^n$ is the set of positive symmetric $n\times n$ matrices. For a matrix $A$, $A_i$ is its $i$-th row and $A_{:i}$ is its $i$-th column. $||\cdot||_2$ is the $L_2$ norm of a vector, $||\cdot||_F$ is the Frobenius norm of a matrix. We denote a Gaussian law centered at $\omega$ with covariance $V$ as $\mathcal N(\omega,V)$.

\section{Task and data model}

We denote a sequence $X\in\mathbb R^{L\times D}$ made of $L$ tokens of dimension $D$. We choose a relevant token $X_\epsilon$ by setting a hidden index $\epsilon\in[L]$. This models the fact that, in transformers, the attention is often used to focus and extract one particular token that is needed by the subsequent layers. The relevant token is distinguished from the other tokens by a planted hidden spike $\hat k\in\mathbb R^D$. We emphasize that the latent $\epsilon$ and $\hat k$ are different in every sequence/context. Hence, their recovery is akin to a toy in-context learning (ICL) task.

\subsection{Probabilistic model of data}
\label{sec:data_distribution}
Inspired by \cite{marion24slr,troiani2025fundamental}, we introduce a probabilistic data model for this task, that is amenable to analysis in the high-dimensional limit. To define it, we begin by drawing $F$ hidden spikes
\begin{align}
k_f^*\sim\mathcal N\left(0,D^{-1}I_D\right)\quad \mathrm{for\ }f\in [F] \ .
\end{align}
The hidden spikes are common to all the sequences. In the following, we call them directions or features in an interchangeable manner. Each sequence $X\in\mathbb R^{L\times D}$ is then sampled as follows. The index $\epsilon$ of the relevant token is first taken uniformly at random over $\{1,\ldots,L\}$. We take weights $\theta\in\mathbb R^F$ for the hidden directions/features according to a distribution $P_\theta$. An effective sequence-dependent direction $\hat k\in\mathbb R^D$ is then taken as $\hat k = \sum_f^F\theta_fk_f^*$.
The tokens and the label $y$ are
\begin{align}
\label{eq:modelX}
X_\ell\sim\mathcal N\left(\delta_{\ell,\epsilon}\hat k, I_D\right)\quad \mathrm{for\ }\ell\in [L]\ ,\quad y=X_{\epsilon}\ .
\end{align}
The goal is, given $X$, to extract the relevant token $y$.
In the following, we will consider different possible choices for $P_\theta$, mainly focusing on:\\
\textbf{Flipping spike.} For $F=2$ and signal strengths $\nu_1>0, \nu_2>0$, $\theta_1=\sqrt\nu_1$ is a constant direction while $\theta_2\sim\mathrm{Unif}(\{-\sqrt\nu_2,+\sqrt\nu_2\})$ alternates. For $F>2$ and signal $\nu>0$, we take $\theta\sim\mathrm{Unif}(\{\sqrt\nu e_f\}_{f\in[F]})$ with $\{e_f\}_{f\in[F]}$ the canonical basis of $\mathbb R^F$.\\
\textbf{Non-isotropic Gaussian.} For any $F$ and two signal strengths $\nu_1\geq\nu_2>0$, we take $\theta_f\sim\mathcal N(0,\tilde\nu_f)$ for all $f$, with $\tilde\nu_1\geq\tilde\nu_2\geq\ldots\geq\tilde\nu_F$ linearly scaled between $\nu_1$ and $\nu_2$. This allows for different signal strengths for the different features. The case $\nu_1=\nu_2:=\nu$ corresponds to the \textbf{isotropic Gaussian}.\\
\textbf{MNIST semi-realistic data.} We empirically show that our results extend beyond the above data model by considering $\hat k$ drawn from the MNIST dataset. Further details are given in Appendix \ref{secApp:mnist}.

\subsection{Learning with attention}
We consider the class of estimators $\mathcal F_\sigma=\{\hat y_{\sigma,k,b,v}:\mathbb R^{L\times D}\to\mathbb R^D\}_{k\in\mathbb R^{H\times D},b\in\mathbb R^H,v\in\mathbb R}$ where for $H$ vectors $k_1,\ldots,k_H\in\mathbb R^D$, $H$ scalar biases $b_1,\ldots,b_H$, a scalar $v$ and an activation function $\sigma:(\mathbb R^{H\times L},\mathbb R^H,\mathbb R,[H])\to\mathbb R^L$, the function $\hat y_{\sigma,k,b,v}$ is defined by
\begin{align}
\label{eq:attention_model}
    \hat y_{\sigma,k,b,v}(X) = \frac{1}{H}\sum_h^H\sigma(\chi, b, v; h)^TX\ ,\quad
    \chi_h = X k_h\in \mathbb{R}^L\quad \mathrm{for\ }h\in [H]\ .
\end{align}
As to the choice of the activation function $\sigma$, we focus on the three following cases:\\
\textbf{Softmax,} the standard choice. For a head $h$ it is defined by $\sigma(\chi, b, v; h)_\ell=e^{\chi_{h\ell}} / \sum_{\ell'=1}^L e^{\chi_{h\ell'}}$. It does not depend on $b$ or $v$, and the heads only interact by uniformly aggregating their outputs.\\
\textbf{Softmax-1,} introduced in \cite{kaul24softmax1}, defined by $\sigma(\chi, b, v; h)_\ell=ve^{\chi_{h\ell}} / \big(e^{b_h} + \sum_{\ell'=1}^L e^{\chi_{h\ell'}}\big)$. Notice that the original article takes $e^{b_h}=1$, which is equivalent to our formulation up to additional head-dependent biases in the attention scores. $\sum_\ell\sigma(\chi, b, v; h)_\ell$ can become smaller than 1, thus ``deactivating'' some heads.  We allow a global rescaling by a factor $v$ to compensate inactive heads.\\
\textbf{Bayes-softmax} (or B-softmax) which normalizes each head by the output of the $H$ heads. It does not depend on $v$, but we allow for head-dependent biases. The form is, as motivated later, given by
\begin{align}
\sigma(\chi, b, v; h)_\ell=\frac{e^{\chi_{h\ell}+b_h}}{\frac{1}{H}\sum_{h'}^H\sum_{\ell'=1}^L e^{\chi_{h'\ell'}+b_{h'}}}\, .
\end{align}

This model can be viewed as a simplification of a cross-attention module where a query embedding is independent of the input sequence. Consider a single head attention layer with activation $\sigma$, scaling $\lambda$ (which in practice is usually set to $1/\sqrt{D}$), key and query matrices $K, Q\in \mathbb{R}^{D\times p}$, value matrix is identity $V=I_{D}$, and let $X_\text{query}$ be a query embedding independent of the input sequence $\mathrm{Attn}_{\sigma, K, Q, V}(X) = \sigma\left(\lambda(XK)(Q^TX_\text{query})\right)^TXV = \sigma\left(\lambda \sum_{i=1}^pa_i(XK_{:i})\right)^TX$,
where we denoted $Q^TX_\text{query} = a\in\mathbb{R}^p$ and $K_{:i}\in\mathbb{R}^D$ for $i=1, \ldots, p$ are the columns of the key matrix $K$. We get attention scores that linearly depend on the input sequence, with the key-vector $k = \lambda\sum_{i=1}^p a_iK_i$. Such attention is used, for example, in the decoder of DEtection TRansformer \citep{carion2020endtoendobjectdetectiontransformers}, where the input sequence represents encoded image, and the query represents an object to detect.

The assumption about the value matrix being identity will be relaxed in section \ref{sec:activations} on head deactivation where we will further consider the activation \textbf{softmax-v}, a softmax with non-uniform aggregation by learned value weights $v\in\mathbb R^H$, defined by $\sigma(\chi, b, v; h)_\ell=v_he^{\chi_{h\ell}} / \sum_{\ell'=1}^L e^{\chi_{h\ell'}}$.

The estimator $\hat y$ is trained by SGD. The weights $k_h$ are initialized at random $k_h^{(0)}\sim\mathcal N(0,\eta^2 D^{-1}I_D)$ with $\eta\in\mathbb R^+$ independent of $D$ controlling the initial norm. The biases are initialized to 0 and $v$ to 1. At time $t$, the weight update is
\begin{align}
\label{eq:SGD}
\begin{split}
k_h^{(t+1)}=k_h^{(t)}-\gamma\nabla_{k_h^{(t)}}\mathcal L^{(t)}, \quad
b_h^{(t+1)}&=b_h^{(t)}-\gamma\nabla_{b_h^{(t)}}\mathcal L^{(t)} \quad\mathrm{for\ }h\in [H]
\end{split}
\end{align}
and $v^{(t+1)}=v^{(t)}-\gamma\nabla_{v^{(t)}}\mathcal L^{(t)}$, where $\gamma>0$ is the learning rate and
\begin{align} \label{eq:empirical_loss}
\mathcal L^{(t)}=\frac{1}{N_b}\sum_{\mu=1}^{N_b}\frac{1}{D}||y^{\mu,(t)}-\hat y_{\sigma,k^{(t)},b^{(t)},v^{(t)}}(X^{\mu,(t)})||_2^2
\end{align}
is the empirical loss over a batch of $N_b$ sequences $\{(X^{\mu,(t)},y^{\mu,(t)})\}_{\mu\in [N_b]}$ drawn iid according to the model eq.~\eqref{eq:modelX}. We set $N=tN_b$ the total number of samples. Notice that, because of the independence of the batches, on average, the estimator minimizes the population loss
\begin{align}
\label{eq:population_loss}
\mathcal E_\sigma(k,b,v)=\frac{1}{D}\mathbb E_{X,y}\left[||y-\hat y_{\sigma,k,b,v}(X)||_2^2\right]\ .
\end{align} 

\section{Dynamics of the heads}
\label{sec:dynamics}

We consider the limit of large embedding dimension $D\to\infty$ and constant sequence length, number of spikes, number of heads, signal strength, and initialization, i.e., $L, F, H, ||\theta||_2, \eta=\Theta(1)$. A first consequence of this limit is that the population loss $\mathcal{E}_\sigma$ eq.~\eqref{eq:population_loss} can be expressed in terms of a few \emph{order parameters} (or \emph{sufficient statistics}).

\begin{proposition}[Reparametrized loss]
\label{res:paramLoss}  
The loss of the attention $(k,b,v)\mapsto \mathcal{E}_\sigma(k,b,v)$ can be reparametrized as a function $(m,r,b,v)\in(\mathbb R^{H\times F},\mathcal S_+^H,\mathbb R^H,\mathbb R)\mapsto \tilde{\mathcal{E}}_\sigma(m,r,b,v)$ of the following order parameters, for $h,h'\in [H]$, $f,f'\in [F]$: 
\begin{align}
    m_{hf}=(k_h)^\top k_f^*, \quad q_{hh'} =(k_h)^\top k_{h'}, \quad
    p_{ff'} = (k_f^*)^\top k_{f'}^*, \quad r = (q-mp^{-1}m^\top)^{\sfrac{1}{2}}
\end{align}
Let $\epsilon\sim\unif(\{1,\ldots,L\})$, $\theta\sim P_\theta$ and conditionally on $\epsilon$ and $\theta$, $\chi_\ell^*\sim\mathcal N(\delta_{\ell,\epsilon}\theta,I_F)$ and $\xi_\ell\sim\mathcal N(0,I_H)$ for $\ell\in[L]$. The reparametrized loss is
\begin{align}
& \tilde{\mathcal{E}}_\sigma(m,r,b,v) = \mathbb E_{\epsilon,\theta,\chi,\xi}\left[\sum_\ell^L\Big(\delta_{\ell,\epsilon}-\frac{1}{H}\sum_h^H\sigma(\chi, b, v; h)_\ell\Big)^2\right], \\
& \mathrm{where\ }\chi_{h\ell} = \sum_f^Fm_{hf}\chi_{f\ell}^*+\sum_{h'}^Hr_{hh'}\xi_{h'}\ ,\quad h\in[H], \ell\in[L]\ . \nonumber
\end{align}
\end{proposition}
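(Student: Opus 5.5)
The plan is to expand the squared error, separate the part that depends only on the attention weights from the token noise, and then show that the attention scores have a Gaussian law depending on $k$ only through the order parameters. Write $X_\ell = \delta_{\ell,\epsilon}\hat k + z_\ell$ with $z_\ell\sim\mathcal N(0,I_D)$ iid. Set $a_\ell = \delta_{\ell,\epsilon}-\frac1H\sum_h\sigma(\chi,b,v;h)_\ell$, which depends on $X$ only through the scores $\chi_{h\ell}=X_\ell^\top k_h$. Then
\[
y-\hat y=\sum_\ell a_\ell X_\ell .
\]
Expanding $\frac1D\|\sum_\ell a_\ell X_\ell\|_2^2=\sum_{\ell,\ell'}a_\ell a_{\ell'}\frac{X_\ell^\top X_{\ell'}}{D}$ gives the target form once we control the Gram matrix. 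In the limit $D\to\infty$ with $L,F,H=\Theta(1)$ and $\|\hat k\|_2=O(1)$, $X_\ell^\top X_{\ell'}/D$ concentrates on $\delta_{\ell\ell'}$. The cross terms are $O(D^{-1/2})$ and the spike contributes $O(1/D)$. Since the $a_\ell$ are bounded (the softmax weights lie in $[0,1]$; for the other activations we assume $b,v$ finite), the loss becomes $\mathbb E[\sum_\ell a_\ell^2]$ up to vanishing corrections.

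The correlation between the $a_\ell$ and the fluctuations of $X_\ell^\top X_{\ell'}/D$ is only through $H$ one-dimensional projections $z_\ell^\top k_h$. So decompose $z_\ell$ into its component in $\mathrm{span}(k_h,k^*_f)$ plus an orthogonal part independent of the scores. The projected part contributes $O(1/D)$ to the Gram entries, which justifies the concentration argument even conditionally on $\chi$.

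Next, compute the law of the scores. Conditionally on $\epsilon,\theta$ and the fixed $k^*,k$, $\chi_{h\ell}=\delta_{\ell,\epsilon}\sum_f\theta_f k_h^\top k_f^* + z_\ell^\top k_h$. Together with $\chi^*_{f\ell}:=z_\ell^\top k^*_f + \delta_{\ell\epsilon}\theta_f p_{ff}$-type auxiliary variables, this vector is jointly Gaussian, with covariance built from $q$, $m$, $p$. The key step is to use $p=I_F+O(D^{-1/2})\to I_F$, since the $k_f^*$ are nearly orthonormal. Then $\chi^*_\ell:=(X_\ell^\top k_f^*)_f\sim\mathcal N(\delta_{\ell\epsilon}\theta,I_F)$. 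Gaussian conditioning gives $\chi_{h\ell}=\sum_f (mp^{-1})_{hf}\chi^*_{f\ell}+\text{residual}$, where the residual is independent of $\chi^*$ with covariance $q-mp^{-1}m^\top=r^2$. Writing the residual as $r\xi_\ell$ with $\xi_\ell\sim\mathcal N(0,I_H)$, and replacing $p^{-1}$ by $I$, gives exactly the stated representation. The means also match: the mean of $\chi_{h\epsilon}$ is $\sum_f m_{hf}\theta_f$, consistent with $\mathbb E\chi^*_\epsilon=\theta$.

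The main obstacle is the first step: rigorously justifying that the Gram fluctuations decouple from the bounded, score-dependent weights $a_\ell$. I would handle it with the orthogonal decomposition described above, which reduces the problem to a finite-dimensional conditioning argument plus a law-of-large-numbers bound on $\|z_\ell^\perp\|^2/D$ and $z_\ell^{\perp\top}z_{\ell'}^\perp/D$. The statement also has to be read with $p$ replaced by its limit $I_F$, or with $m$ understood as $mp^{-1/2}$; I would note this explicitly.
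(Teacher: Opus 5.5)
Your proposal is correct and follows essentially the paper's route. The paper decomposes $k_h=\sum_f m_{hf}k_f^*+\sum_{h'}r_{hh'}k^\perp_{h'}$; your Gaussian regression of $z_\ell^\top k_h$ on $z_\ell^\top k_f^*$ is the same projection, and it also explains the $p^{-1}$ in the definition of $r$. The paper then uses $p\approx I_F$ and reduces $\frac1D\|y-\hat y\|_2^2$ to $\sum_\ell(\delta_{\ell,\epsilon}-\frac1H\sum_h\sigma(\chi,b,v;h)_\ell)^2$ via $\|X_\ell\|_2^2/D\to 1$ and independence of the tokens. Your step of splitting $z_\ell$ into its projection onto $\mathrm{span}(k_h,k_f^*)$ plus an independent orthogonal part is more careful than the paper. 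The paper appeals to token independence without addressing that the attention weights themselves depend on $X$, and your split handles exactly that dependence.
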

A proof is given in Appendix~\ref{secApp:low_dim_sgd}. Here $m_{hf}$ quantifies the alignment between the head $h$ and the hidden direction $f$, while $q$ is the overlap between the heads. $r$ is the amplitude of the components that are orthogonal to the spikes; $r=0$ whenever $k$ is in the span of $k^*$. In our model, $p\approx I_F$ because the hidden directions are independently sampled from a high-dimensional normal distribution.

The motivation to introduce these order parameters is that the learning dynamics of the attention can be expressed in a closed-form over $m$ and $r$. We consider the limit of gradient flow on the population loss, obtained by taking $\gamma N_b^{-1}=o(D^{-1})$ together with $N_b\geq 1$ and $\gamma=\mathcal O(1)$, as detailed e.g. in \cite{arnaboldi2024online}. In particular, this encompasses the two following cases: i) large batches $N_b=\Theta(D)$ and small learning rates $\gamma=o(1)$; and ii) constant-size batches $N_b=\Theta(1)$ and vanishing learning rates $\gamma=o(D^{-1})$. 
This scaling allows us to derive the correspondence between the online SGD and the gradient flow, similarly to \cite{saad1995line,arous2021online,arnaboldi2024online}.

\begin{proposition}[Effective dynamics]
\label{res:sgd}
Consider the dynamics of SGD defined by eq.~\eqref{eq:SGD}. Let $\tau=\gamma t$ be the effective time. Then, in the space of the order parameters, the dynamics is given by
\begin{align}
\label{eq:dynEff}
\frac{\partial}{\partial\tau}m(\tau) = -\nabla_m\tilde{\mathcal{E}}_\sigma(m(\tau),r(\tau),b(\tau),v(\tau)), \quad
\frac{\partial}{\partial\tau}r(\tau) = -\nabla_r\tilde{\mathcal{E}}_\sigma(m(\tau),r(\tau),b(\tau),v(\tau))
\end{align}
and $\partial_\tau b=-\nabla_b\tilde{\mathcal{E}}_\sigma$ and $\partial_\tau v=-\nabla_v\tilde{\mathcal{E}}_\sigma$.
\end{proposition}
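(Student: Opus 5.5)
We reduce the SGD update of the $H\times D$ weights to an update of the order parameters $m=kk^{*\top}$ and $q=kk^\top$ (with $p=k^*k^{*\top}\approx I_F$ fixed), and take the limit $D\to\infty$ with $\gamma N_b^{-1}=o(D^{-1})$. Two facts drive the argument. First, conditionally on $(k,b,v)$, the attention scores $\chi_{h\ell}=X_\ell^\top k_h$ are jointly Gaussian and depend on $k$ only through $(m,q)$; this is what Proposition~\ref{res:paramLoss} already uses. Second, the gradient $\nabla_{k_h}\mathcal L$ decomposes along the spikes $k^*_f$, the current heads $k_{h'}$, and a remaining random component. Projected onto $k^*_f$ or $k_{h'}$, this gives $\mathcal O(1)$ drifts; the random part only enters $q$ quadratically, through $\gamma^2\|\nabla\mathcal L\|^2$.

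\textbf{Step 1: exact one-step recursions.} We write
\begin{align*}
m^{(t+1)}_{hf}-m^{(t)}_{hf}&=-\gamma\,(\nabla_{k_h}\mathcal L^{(t)})^\top k_f^*,\\
q^{(t+1)}_{hh'}-q^{(t)}_{hh'}&=-\gamma\big[(\nabla_{k_h}\mathcal L)^\top k_{h'}+k_h^\top\nabla_{k_{h'}}\mathcal L\big]+\gamma^2(\nabla_{k_h}\mathcal L)^\top\nabla_{k_{h'}}\mathcal L .
\end{align*}
For each sample, $\nabla_{k_h}\frac1D\|y-\hat y\|^2=\frac1D\sum_\ell g_{h\ell}(X)X_\ell$ with a scalar $g_{h\ell}=\mathcal O(1)$. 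Hence the projections $X_\ell^\top k^*_f$ and $X_\ell^\top k_{h'}$ are exactly the low-dimensional Gaussians $\chi^*_{f\ell}$ and $\chi_{h'\ell}$ of Proposition~\ref{res:paramLoss}. Taking the expectation over a fresh batch and using Stein's lemma, we show that the drift terms equal $-\frac{\gamma}{D}$ times the corresponding chain-rule derivatives of $\tilde{\mathcal E}_\sigma$, expressed in $(m,q)$.

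This fixes the time scale: the per-step drift is $\gamma/D$. We therefore rescale $\nabla$ consistently (equivalently, absorb $D$ into $\tau$) so that $\tau$ counts $\gamma t$ in units where the drift is $\mathcal O(1)$. We follow the convention of \cite{arnaboldi2024online}, matching the paper's statement $\tau=\gamma t$ after the $1/D$ loss normalisation.

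\textbf{Step 2: fluctuations and the quadratic term.} The martingale increments have variance $\mathcal O(\gamma^2/(N_bD))$ per step. Summed over $\mathcal O(D/\gamma)$ steps, they contribute $\mathcal O(\gamma/N_b)\to0$ by Doob's inequality. The second-order term satisfies $\gamma^2\mathbb E\|\nabla\mathcal L\|^2\approx\gamma^2(\frac{1}{N_b D}\cdot\mathcal O(1)+\mathcal O(D^{-2}))$. This is because a single-sample gradient has squared norm $\mathcal O(1/D)$: it is $D$ coordinates of size $1/D$ times an $\mathcal O(1)$ factor. Over $D/\gamma$ steps this gives $\mathcal O(\gamma/N_b)=o(1/D)\cdot D$, which vanishes under the assumed scaling $\gamma N_b^{-1}=o(D^{-1})$. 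Step 2 is where the hypothesis is used. The biases $b$ and the scalar $v$ have direct $\mathcal O(1)$-dimensional updates and are handled identically.

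\textbf{Step 3: from $q$ to $r$, and closure.} We invoke a standard stochastic-approximation theorem (Gronwall plus the Lipschitz property of $\nabla\tilde{\mathcal E}_\sigma$ on compacts, as in \cite{saad1995line,arous2021online,arnaboldi2024online}). It yields the ODE $\dot m=-2\nabla_q$-type expressions. We then check that these coincide with $\dot m=-\nabla_m\tilde{\mathcal E}$ and $\dot r=-\nabla_r\tilde{\mathcal E}$ when the loss is written in the variables $(m,r)$, with $q=mm^\top+r^2$ for $p=I_F$. This needs care: $\partial_\tau q=\dot m m^\top+m\dot m^\top+\dot r r+r\dot r$, and the gradient flow of $k$ projected to $(m,q)$ must match this. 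The cleanest route is to note the following. Gradient flow on $k$, restricted to loss functions of the form $\tilde{\mathcal E}(kk^{*\top},kk^\top)$, is equivalent to gradient flow on the decomposition $k=mk^*+r\,U$, where $U$ has orthonormal rows orthogonal to $k^*$. In that decomposition $m$ and $r$ are Euclidean coordinates, up to an $O(H)$ rotation of $U$ that leaves the loss invariant; we verify the symmetric square-root choice of $r$ is consistent.

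The main obstacle is this last identification of the flows in the $r$ coordinates. It involves the rotational gauge in $r$ and the need for $p\to I_F$ with $\mathcal O(D^{-1/2})$ corrections. I would first handle it via that orthogonal decomposition, and otherwise state the result for $(m,q)$.
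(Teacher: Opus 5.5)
There is a genuine error in the scaling bookkeeping of Steps~1--2, and those steps carry the real content of the proof. Your overall skeleton is the same as the paper's: one-step recursions for $(m,q)$, drift from the low-dimensional projections, vanishing martingale and second-order terms, then an ODE limit. The paper obtains the same estimates by checking $\gamma$-localizability of $(m,r,b,v)$ and applying Theorem~2.3 of \cite{arous2023highdimensionallimittheoremssgd}.

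The error is a factor $D$ in the gradient. Write $\bar\sigma_{\ell}=\frac1H\sum_{h'}\sigma(\chi,b,v;h')_{\ell}$, so that $y-\hat y=\sum_{\ell'}(\delta_{\ell',\epsilon}-\bar\sigma_{\ell'})X_{\ell'}$. Then
\[
\nabla_{k_h}\tfrac1D\|y-\hat y\|_2^2=-\frac2H\sum_{\ell}\Big[\sum_{h',\ell',\ell''}(\delta_{\ell',\epsilon}-\bar\sigma_{\ell'})\,\frac{X_{\ell'}^\top X_{\ell''}}{D}\,\partial_{h\ell}\sigma(\chi,b,v;h')_{\ell''}\Big]X_\ell ,
\]
and the bracket is $\mathcal O(1)$ because $X_{\ell'}^\top X_{\ell''}/D=\delta_{\ell',\ell''}+\mathcal O(D^{-1/2})$. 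So in your notation $g_{h\ell}=\Theta(D)$, not $\mathcal O(1)$. Consequently:
\begin{itemize}
\item each gradient coordinate is $\mathcal O(1)$, as in the paper's moment lemma;
\item a single-sample gradient has squared norm $\Theta(D)$, not $\mathcal O(1/D)$;
\item the projected drift $\gamma(\nabla_{k_h}\mathcal L)^\top k^*_f$ is of order $\gamma$ per step, not $\gamma/D$.
\end{itemize}
That is exactly why $\tau=\gamma t$ is the right clock. With your per-step drift $\gamma/D$, $m$ would not move at all on $\tau=\Theta(1)$, contradicting the statement. ``Rescaling $\nabla$'' or ``absorbing $D$ into $\tau$'' is not an available fix, since it changes either the algorithm or the claimed time scale.

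The slip also hides where the hypothesis enters. With the correct magnitudes:
\begin{itemize}
\item The martingale parts of $m$ and of the linear terms in the $q$-recursion have variance $\mathcal O(\gamma^2/N_b)$ per step, so $\mathcal O(\tau\gamma/N_b)$ in total.
\item The second-order term is $\gamma^2\,\mathbb E\|\nabla_{k_h}\mathcal L\|_2^2=\gamma^2\big(\mathcal O(1)+\Theta(D/N_b)\big)$ per step. The mean gradient lies in the span of $k^*$ and $k$, while the batch noise has covariance trace $\Theta(D)/N_b$.
\end{itemize}
Over $\tau/\gamma$ steps the second-order term accumulates to $\mathcal O(\gamma\tau)+\Theta(\tau\gamma D/N_b)$. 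This is $o(1)$ exactly when $\gamma N_b^{-1}=o(D^{-1})$, together with $\gamma\to0$. It is the paper's estimate $\gamma\mathcal L_D\mathbf u=\gamma\,\mathcal O(D/N_b)$. Your accounting gives $\mathcal O(\gamma/N_b)$, which would only need $\gamma/N_b\to0$. So you miss the actual mechanism: $D$-dimensional noise feeding $q$, hence $r$, through $\gamma^2\|\nabla\mathcal L\|^2$.

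On Step~3, your fixed-$U$ decomposition $k=mk^*+AU$ is correct and cleaner than the paper's entrywise $r_{hh'}$. The gradient stays in $\mathrm{span}(k^*,k)$, so the flow is Euclidean in $(m,A)$, with $A$ a general $H\times H$ matrix satisfying $AA^\top=q-mm^\top$. However, the promised check for the symmetric square root fails in general. Writing $\tilde{\mathcal E}_\sigma=G(m,rr^\top)$, you get $\dot A=-2(\nabla_2G)A$, which is not symmetric unless $A$ commutes with $\nabla_2G$. State the result for $r$ up to the gauge $A\mapsto AO$ with $O$ orthogonal, or for $(m,q)$.
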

A proof of this result is given in Lemma \ref{resApp:high_dim_limit} in Appendix \ref{secApp:sampleComplexitySGD}. Similar results, on the equivalence between the dynamics of SGD in the high-dimensional space and the space of order parameters, have been derived for fully-connected two-layer neural networks in e.g. \cite{saad1995line,arous2021online,abbe23leap,arnaboldi2024online} and in \cite{arnaboldi2025asymptotics} for sequence single-index models. As a numerical check, we provide simulations in Fig.~\ref{fig:compDfiniVsFlot} and in Figs.~\ref{fig:compDfiniVsFlot_supp} and \ref{fig:compDfiniVsFlot_suppsuppsupp} in the Appendix. The agreement between the theory and the simulations is very good.

\begin{figure}[ht!]
    \centering
    \includegraphics[width=0.95\linewidth]{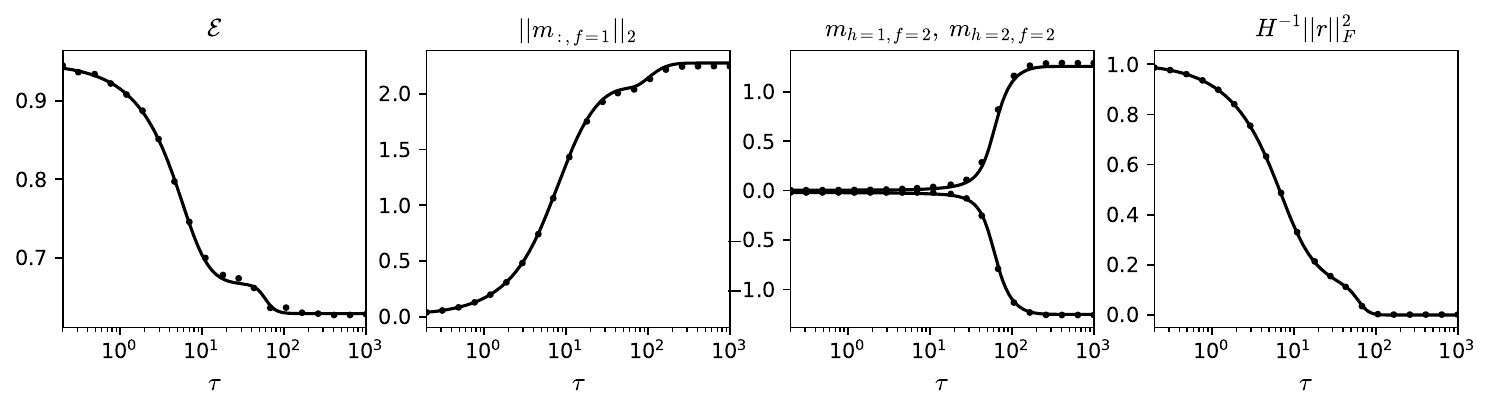}
    \caption{\label{fig:compDfiniVsFlot} Asymptotic description of the attention trained by SGD. We compare numerical simulations at finite $D=10^4$ (dots) and the theoretical description stated in proposition~\ref{res:sgd} (continuous lines). Left: loss; center left: alignment along $f=1$ the constant direction; center right: alignment along $f=2$ the ``flipping sign'' direction; right: orthogonal component. We consider sequence length $L=10$, $H=2$ heads with $\sigma$ softmax attention, $F=2$ features, $\theta$ drawn from the flipping spike distribution, with signal strengths $\nu_1=\nu_2=2$. Initialization $\eta=1$.}
\end{figure}

A consequence of Prop.~\ref{res:sgd} is the characterization of two distinct phases in the learning. At random initialization $m$ concentrates around 0. In the first phase, the heads do not specialize and move collectively towards the mean of the signal in a few time steps. The dynamics is controlled by the gradient of the loss in the direction of the mean signal.
\begin{proposition}[Unspecialized phase]
\label{res:1stPhase}
Let $\mathbb E\theta=(\mathbb E_{P_\theta}\,\theta_f)_{f\in[F]}\in\mathbb R^F$ be the mean of the feature weights and assume it does not vanish $\mathbb E\theta\neq 0$. There is a finite time $\tau^\mathrm{u}=\Theta(1)$ for which there is a $x\in\mathbb R^+, x=\Theta(1)$ such that, for all $h\in[H]$, $m_h(\tau^\mathrm{u})=x\mathbb E\theta+\mathcal O(D^{-\sfrac{1}{2}})$.
\end{proposition}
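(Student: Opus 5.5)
We will use the effective dynamics of Proposition~\ref{res:sgd}, started from the random initialization. At $\tau=0$ the overlaps are $m_{hf}(0)=(k_h^{(0)})^\top k_f^*=\mathcal O(D^{-\sfrac12})$, and $q(0)=\eta^2 I_H+\mathcal O(D^{-\sfrac12})$, so $r(0)=\eta I_H+\mathcal O(D^{-\sfrac12})$. The point is to examine the flow near the symmetric manifold $\mathcal M=\{m_h=m_{h'}\ \forall h,h'\}$, and in particular near $m=0$. We show that the leading-order drift of every $m_h$ is parallel to $\mathbb E\theta$ and identical across heads. Deviations from this common motion are then controlled by a Gr\"onwall argument over a time horizon of order one.

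\textbf{Step 1 (gradient at $m=0$).} Write $\chi_{h\ell}=\sum_f m_{hf}\chi^*_{f\ell}+(r\xi_\ell)_h$ as in Proposition~\ref{res:paramLoss} and differentiate under the expectation:
\[
\nabla_{m_{h}}\tilde{\mathcal E}_\sigma=-\frac{2}{H}\,\mathbb E\Big[\sum_\ell\big(\delta_{\ell\epsilon}-\bar\sigma_\ell\big)\sum_{\ell'}\partial_{\chi_{h\ell'}}\sigma(\cdot;h)_\ell\,\chi^*_{\ell'}\Big],
\]
where $\bar\sigma=\frac1H\sum_{h}\sigma(\cdot;h)$. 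At $m=0$ the scores $\chi$ are independent of $(\epsilon,\theta,\chi^*)$. The conditional expectation of $\chi^*_{\ell'}$ given $\epsilon$ is $\delta_{\ell'\epsilon}\mathbb E\theta$, because $\theta$ is independent of $\epsilon$. Averaging over the exchangeable positions then gives
\[
\nabla_{m_h}\tilde{\mathcal E}_\sigma\big|_{m=0}=-c_h(r,b,v)\,\mathbb E\theta .
\]
The scalar $c_h$ is an expectation over the noise-only scores alone. For softmax it reads $c=\frac{2}{H}\mathbb E[\sigma_\epsilon-\sum_\ell\bar\sigma_\ell\partial_{\chi_\epsilon}\sigma_\ell]$, and it is strictly positive whenever $L\ge2$, since increasing attention on the true token decreases the loss. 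By the permutation symmetry of the heads at $r\propto I$ and $b=0$, the $c_h$ do not depend on $h$. The same computation at $m\in\mathcal M$ with small $\|m\|$ shows that the gradient equals $-c\,\mathbb E\theta+\mathcal O(\|m\|)$. Directions orthogonal to $\mathbb E\theta$ pick up only higher-order contributions, which vanish for the flipping-spike case at linear order by symmetry of $\theta_2$.

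\textbf{Step 2 (invariance of the symmetric manifold).} The map $(m,r,b)\mapsto\tilde{\mathcal E}_\sigma$ is invariant under simultaneous permutation of the heads. Hence $\mathcal M\times\{r\ \text{permutation-symmetric}\}\times\{b\ \text{constant}\}$ is invariant under the gradient flow~\eqref{eq:dynEff}. The first obstacle is that the span of $\mathbb E\theta$ inside $\mathcal M$ need not be invariant in general. We therefore only claim the result at a suitable finite time. On $\mathcal M$, write $m_h=x\,\mathbb E\theta/\|\mathbb E\theta\|+w$ with $w\perp\mathbb E\theta$. Step 1 gives $\dot w=\mathcal O(\|w\|+x^2\text{-type terms projected orthogonally})$. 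For the stated model classes, namely flipping spike and the Gaussians with nonzero mean, a symmetry argument ($\theta_{f}\mapsto-\theta_f$ for the zero-mean coordinates, which leaves $P_\theta$ invariant) gives $w\equiv0$ exactly. We would use that symmetry. In general we would settle for choosing $\tau^{\mathrm u}$ small but of order one, so that $x=\Theta(1)$ while $w$ remains small.

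\textbf{Step 3 (stability of the symmetric solution on $[0,\tau^{\mathrm u}]$).} The flow is locally Lipschitz, since $\tilde{\mathcal E}_\sigma$ is smooth with bounded derivatives on compacts by Gaussian integrability. Gr\"onwall's inequality then shows that the true trajectory, which starts $\mathcal O(D^{-\sfrac12})$ from the symmetric initial condition $(0,\eta I,0,1)$, stays within $e^{C\tau^{\mathrm u}}\mathcal O(D^{-\sfrac12})=\mathcal O(D^{-\sfrac12})$ of the symmetric trajectory for times $\tau^{\mathrm u}=\Theta(1)$. Since $\dot x=c+\mathcal O(x)$ with $c>0$, we obtain $x(\tau^{\mathrm u})=\Theta(1)$ for any fixed positive $\tau^{\mathrm u}$ below the blow-up of the constants. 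The main obstacle is Step 1's positivity of $c$ and the control of the orthogonal component for a general $P_\theta$. For positivity we would first attempt a direct monotonicity argument on $\mathbb E[(1-\bar\sigma_\epsilon)^2+\sum_{\ell\neq\epsilon}\bar\sigma_\ell^2]$ with respect to a shift of $\chi_{\cdot\epsilon}$.
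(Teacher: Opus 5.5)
Your skeleton is the paper's: head-permutation invariance of the unspecialized manifold (Lemma~\ref{resApp:invUnspMan}), a gradient $-c^{(1)}\mathbb E\theta$ at $m=0$ with $c^{(1)}>0$ (Lemma~\ref{resApp:gradI}), then integration over a time $\Theta(1)$. Your Gr\"onwall step adds an explicit account of how the $\mathcal O(D^{-\sfrac12})$ initial offset propagates, which the paper leaves implicit.

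The real difference is how you keep the heads on the line $\mathds 1_H(\mathbb E\theta)^\top$. The paper claims this line is invariant for every $P_\theta$, asserting that $w^\top\chi^*_{:\ell'}$ and $x(\mathbb E\theta)^\top\chi^*_{:\ell}$ are independent when $w\perp\mathbb E\theta$. This holds only conditionally on $\theta$. The unconditional factorization needs $\mathbb E[w^\top\theta\mid(\mathbb E\theta)^\top\theta]=0$. For $\theta$ uniform on $\{2e_1,e_2\}$ one has $\cov(w^\top\theta,(\mathbb E\theta)^\top\theta)\neq0$, and the orthogonal gradient is then generically of order $x$. Your caution is therefore justified. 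Deriving invariance from an extra symmetry of $P_\theta$ gives a correct proof for the models the paper actually uses, at the cost of a generality the paper only appears to have.

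Two fixes are needed in that step. First, for the $F>2$ flipping spike no coordinate of $\theta$ has zero mean, so your sign flips are unavailable. However, in both flipping-spike models $(\mathbb E\theta)^\top\theta$ is almost surely constant ($\nu_1$, resp.\ $\nu/F$), which gives the required conditional centering directly (feature-permutation symmetry also works). Second, your general fallback of taking $\tau^\mathrm u$ small only makes $w$ small relative to $x$, not $\mathcal O(D^{-\sfrac12})$.

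Close the positivity of $c$ as the paper does, by signs: $\bar\sigma_\epsilon<1$, $\partial_{\chi_{h\epsilon}}\bar\sigma_\epsilon>0$, and $\partial_{\chi_{h\epsilon}}\bar\sigma_\ell<0$ for $\ell\neq\epsilon$. These hold for all three activations at $(b,v)=(0,1)$. In your softmax formula the first term should be $\sigma_{h\epsilon}(1-\sigma_{h\epsilon})$, not $\sigma_\epsilon$. For the B-softmax the gradient must collect the derivatives of every $\sigma(\cdot;h')$, since each depends on $\chi_h$.
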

This result comes from the facts that the space of unspecialized $m$, $r$ and $b$ is invariant by the gradient flow, as stated in Lemma~\ref{resApp:invUnspMan}, and that at $m=0$ the gradient does not vanish and points towards $\mathbb E\theta$, as stated in Lemma~\ref{resApp:gradI} in Appendix~\ref{secApp:derivatives}. Learning the mean direction $\mathbb E\theta$ is fast and in total it requires $N$ larger than $\Theta(D)$ samples to recover this direction. This is depicted in Fig.~\ref{fig:compDfiniVsFlot}, where we consider the flipping sign distribution, where $\mathbb E\,\theta_1 > 0$ and $\mathbb E\,\theta_2=0$. In the 2nd panel, a time $\tau^\mathrm{u}\approx 1$ is enough to start learning the direction $f=1$. In the case when $\mathbb E\theta=0$, $m$ does not move until the next phase. The biases $b$ stay unspecialized: around $\tau^\mathrm{u}$ there is $\tilde b\in\mathbb R$ such that $b_h=\tilde b$ for all $h$, with for the B-softmax $\tilde b=0$ and for the softmax-1 $\tilde b$ is such that on average the output of the attention scales to 1. Since the space of unspecialized $m$ is invariant under the gradient flow, the specialization of the heads can occur only due to the asymmetry introduced at initialization or with SGD updates. Therefore, a different rescaling of magnetizations $m_{hf}$ is needed to study the diffusive regime of the dynamics as suggested by \cite{arous2023highdimensionallimittheoremssgd}. We derive the limiting dynamics under a learning rate scale that is slightly smaller than $\gamma N_b^{-1}= o(D^{-1})$ in Lemma \ref{resApp:sgdLimit} and it leads to the following proposition:

\begin{proposition}[Specialization phase]
\label{res:2ndPhase}
Let $P_{\bot\mathbb E\theta}$ be the projection onto the space orthogonal to $\mathbb E\theta$. Assume that $P_{\bot\mathbb E\theta}\cov(\theta)P_{\bot\mathbb E\theta}^\top$ is not fully degenerated. Assume that $||\mathbb E\theta||_2$ and $\eta$ are small enough, independently of $D$. There is a time $\tau^\mathrm{s}=\Theta(\log D)$ at which, for all $h\in[H]$, $||P_{\bot\mathbb E\theta}\,m_h(\tau^\mathrm{s})||_2=\Theta(1)$ and the heads start specializing: $||m_h-m_{h'}||_2=\Theta(1)$ for some $h\neq h'$.
\end{proposition}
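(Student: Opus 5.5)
The plan is a linearize-and-escape argument for the gradient flow of Proposition~\ref{res:sgd}, with a random initial asymmetry as the seed. Write $m_h=\bar m+\delta m_h$, where $\bar m=\frac1H\sum_h m_h$, and split every vector into its component along $\mathbb E\theta$ and its component in the range of $P_{\bot\mathbb E\theta}$. Do the same for $r$ and $b$, splitting them into their head-averaged parts and the deviations. At initialization $k_h^{(0)}\sim\mathcal N(0,\eta^2D^{-1}I_D)$, so $m_{hf}(0)$ are independent $\mathcal N(0,\eta^2/D)$ variables. Hence $\|P_{\bot\mathbb E\theta}m_h(0)\|_2=\Theta(D^{-1/2})$, and the differences $m_h(0)-m_{h'}(0)$ are of the same order and nonzero almost surely. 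The SGD noise adds fluctuations of the same order; this is why we work under the reduced learning rate of Lemma~\ref{resApp:sgdLimit}, in which the flow is followed from an $\mathcal O(D^{-1/2})$-perturbed initial condition. By Lemma~\ref{resApp:invUnspMan}, the unspecialized manifold $\{m_h=\bar m,\ r \text{ head-symmetric},\ b_h=\tilde b\}$ is invariant. The same symmetry $\theta$-statistics argument should also show that the subspace $\{P_{\bot\mathbb E\theta}m_h=0\ \forall h\}$ is invariant whenever the conditional law of $P_{\bot\mathbb E\theta}\theta$ is symmetric, as in the flipping and Gaussian models. Consequently both transverse components evolve, to first order, under the Jacobian of $-\nabla\tilde{\mathcal E}_\sigma$ evaluated on the unspecialized trajectory.

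The first step uses Proposition~\ref{res:1stPhase}. After time $\tau^{\mathrm u}=\Theta(1)$, the symmetric part reaches the neighbourhood of $x\,\mathbb E\theta$ and then settles near a fixed point $(m^\circ,r^\circ,b^\circ)$ of the flow restricted to the invariant manifold. Throughout this $\Theta(1)$ window, a Gr\"onwall bound shows that the transverse components stay $\mathcal O(D^{-1/2})$. The second step computes the linearization at the fixed point. We expand $\tilde{\mathcal E}_\sigma$ to second order in $u_h:=P_{\bot\mathbb E\theta}m_h$ using Proposition~\ref{res:paramLoss}. Gaussian integration by parts over $\chi^*$ expresses the Hessian block in terms of the softmax derivatives averaged over $\epsilon,\theta$. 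The relevant term is the one that multiplies $P_{\bot\mathbb E\theta}\cov(\theta)P_{\bot\mathbb E\theta}^\top$. It comes from the token $\ell=\epsilon$, where $\chi_{h\epsilon}$ contains $u_h^\top\theta$, and lowering the loss requires placing more attention weight on $\epsilon$. This yields a negative curvature term of the form $-c\,\cov_\perp(\theta)$ with $c>0$. The competing positive terms come from the variance of $u_h^\top\chi^*_\ell$ on noise tokens and are of order $\|u\|^2$ times a factor involving $e^{\|m^\circ\|}$. The hypothesis that $\|\mathbb E\theta\|_2$ and $\eta$ are small serves exactly to keep the fixed point close to $m=0$, $r\approx\eta$. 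There, these terms can be computed perturbatively, and one can check that an unstable eigenvalue $\lambda>0$ survives along any nonzero eigendirection of $\cov_\perp(\theta)$, which exists because of the non-degeneracy assumption. We would verify the sign by Taylor expanding around $m=0$, $r=0$, $b=0$, where the softmax is uniform and everything reduces to moments of $\theta$.

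The third step is the escape. Along the unstable direction, $\|u_h(\tau)\|\approx D^{-1/2}e^{\lambda(\tau-\tau^{\mathrm u})}$, so it reaches $\Theta(1)$ at $\tau^{\mathrm s}=\tau^{\mathrm u}+\frac{1}{2\lambda}\log D+\mathcal O(1)=\Theta(\log D)$. We control the nonlinear remainder with a standard stable/unstable-manifold comparison argument, treating the $\mathcal O(\|u\|^2)$ terms as a perturbation until $\|u\|$ reaches a small constant and then running the flow for an additional $\Theta(1)$ time. Specialization follows from the behaviour of the head-antisymmetric mode $\delta u_h$. Its linearization equals the symmetric one plus cross-head coupling terms, which arise only through the $\frac1H\sum_h$ aggregation and for which the softmax adds a positive, diversity-favouring contribution. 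We therefore expect $\delta u_h$ to grow at a comparable rate. Its initial value is nonzero of order $D^{-1/2}$ almost surely, so some pair satisfies $\|m_h-m_{h'}\|=\Theta(1)$ at $\tau^{\mathrm s}$.

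The main obstacle is the second step. We need to establish the sign and non-degeneracy of the transverse Hessian, and to show that the antisymmetric mode is unstable at least as strongly as the symmetric one, for general $H$ and softmax. The smallness assumptions are there to make this perturbative calculation possible. A further subtlety is the diffusive SGD noise over a time of order $\log D$, which must be handled by Lemma~\ref{resApp:sgdLimit} so that the noise does not swamp the $D^{-1/2}$ seed.
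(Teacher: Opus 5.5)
\textbf{There is a genuine gap in the escape mechanism.} In Steps 2--3 you take the head-averaged transverse component as the unstable direction (``an unstable eigenvalue survives along any nonzero eigendirection of $\cov_\perp(\theta)$'', then $\|u_h(\tau)\|\approx D^{-1/2}e^{\lambda(\tau-\tau^{\mathrm u})}$). Specialization is then left to the hope that $\delta u_h$ grows ``at a comparable rate''. The curvature structure is the reverse of this.

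In the expansion of Lemma~\ref{res:hessI}, the only negative curvature is head-diagonal: $-(c_3^{(2)}+c_4^{(2)})I_H\otimes\cov\theta$, which is your token-$\epsilon$ term. Every positive term instead comes from the fluctuation of the \emph{aggregated} output $\frac1H\sum_h\sigma(\chi,b,v;h)$. It therefore depends on $m$ only through $\sum_h m_h$, and forms the block $\mathds 1_H\mathds 1_H^\top\otimes(c_1^{(2)}I_F+(c_2^{(2)}+c_4^{(2)})\cov\theta)$.

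These coefficients are $\Theta(1)$ at $m=0$ and do not disappear as $\|\mathbb E\theta\|_2,\eta\to0$. So your description of them as terms ``of order $\|u\|^2$'' that can be handled perturbatively misidentifies them: they penalize $\|\sum_h u_h\|^2$, not $\sum_h\|u_h\|^2$. Concretely, for the softmax at $m\approx0$, $r=0$, the quadratic form on the symmetric sector $m_h=u$ is $\frac{L-1}{L^3}\big(L\|u\|_2^2-(L-3)\,u^\top\cov\theta\,u\big)$. This is positive unless the relevant eigenvalue of $\cov_\perp\theta$ exceeds $L/(L-3)$. For example, isotropic Gaussian $\theta$ with $\nu<L/(L-3)$ satisfies every hypothesis of the proposition, yet the direction you propose to escape along is stable there. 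The sign check you plan to do at $m=0$ would fail.

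\textbf{The missing idea is to work directly in the sector $m^\top\mathds 1_H=0$.} There the $\mathds 1_H\mathds 1_H^\top$ block vanishes identically, and the Hessian reduces to $-(c_3^{(2)}+c_4^{(2)})I_H\otimes P_{\bot\mathbb E\theta}\cov(\theta)P_{\bot\mathbb E\theta}^\top$. This is strictly negative along every nonzero eigendirection because $c_3^{(2)}>0$. That positivity needs $L\ge3$: for the softmax at $L=2$ one has $c_3^{(2)}=0$, a restriction your claim ``$c>0$'' does not detect. The antisymmetric eigenvalues are also strictly below the symmetric-sector ones, by the positive $\mathds 1_H\mathds 1_H^\top$ contribution.

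Hence the fastest-growing mode is antisymmetric by construction. The $\Theta(\log D)$ escape from the $\mathcal O(D^{-1/2})$ seed (or from the Brownian term of Lemma~\ref{resApp:sgdLimit}) and the specialization $\|m_h-m_{h'}\|_2=\Theta(1)$ then come from one and the same linear computation. Moreover, $\|P_{\bot\mathbb E\theta}m_h\|_2=\Theta(1)$ holds for every $h$, because a generic initial projection onto the $(H-1)$-dimensional antisymmetric eigenspace has all its coordinates nonzero.

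What you list as the ``main obstacle'' is exactly this comparison between the symmetric and antisymmetric modes. It comes out the opposite way from the one your argument is built on.
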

Learning the directions orthogonal to $\mathbb E\theta$ and specializing the heads is slower and in total it requires $N$ larger than $\Theta(D\log D)$ samples to start specializing. As depicted in Fig.~\ref{fig:compDfiniVsFlot}, 3rd panel, time $\tau^\mathrm{s}\approx 10$ is needed at $D=10^4$, during which the alignment with the orthogonal direction and the difference between the heads plateau at 0. $\tau^\mathrm{s}$ is the time needed to escape the saddle at $P_{\bot\mathbb E\theta}m=0$; its precise value depends on the details of the initialization $P_{\bot\mathbb E\theta}m(\tau=0)$ and varies between runs. In particular, to obtain a match between the theory and the simulation one has to initialize $m$ and $r$ to their empirical values at $\tau=0$. Prop.~\ref{res:2ndPhase} is derived for small enough $||\mathbb E\theta||_2$ and $\eta$ to ensure that the specialization happens close to the initialization and can be analytically tracked. Such a hypothesis is commonly made, e.g. in \cite{saxe25iclLinear}, \cite{pesme23saddle} and references therein. Still, Prop.~\ref{res:2ndPhase} holds for larger values, as numerically shown by Figs.~\ref{fig:compDfiniVsFlot} to \ref{fig:hierarchical3D} and in Appendix~\ref{sec:training_dynamics_plots}.

These two stages are further illustrated in Fig.~\ref{fig:spec_vs_H} for the flipping sign distribution, for different $H$, where we see that in the first phase of training, all the heads align with the constant direction, and only after some time do they begin to diverge along the ``flipping sign'' direction. When increasing the number of heads, some heads specialize and diverge more from the average direction. The specialization of the heads versus the two signal strengths and versus $H$ is further numerically analyzed in Appendix~\ref{sec:specialization_vs_nu}.

\begin{figure}[t]
\centering
\begin{subfigure}[t]{0.485\textwidth}
    \centering
    \includegraphics[width=\linewidth]{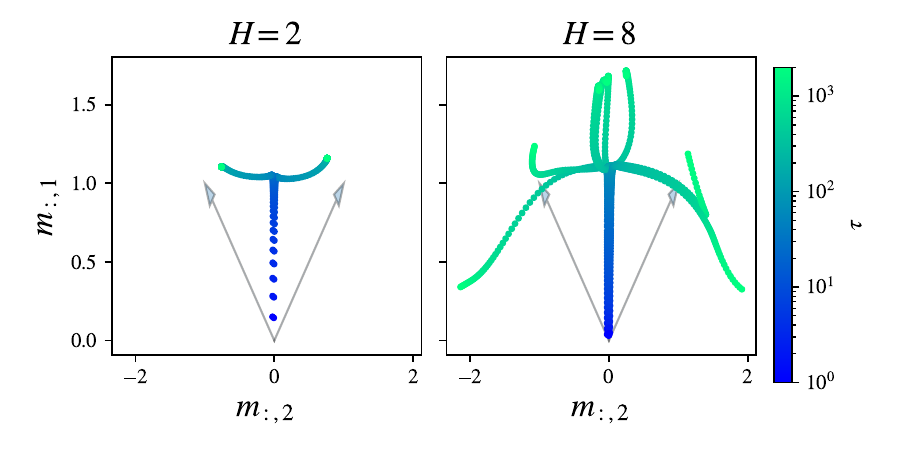}
    \caption{\label{fig:spec_vs_H}}
\end{subfigure}
\hfill
\begin{subfigure}[t]{0.485\textwidth}
    \centering
    \includegraphics[width=\linewidth]{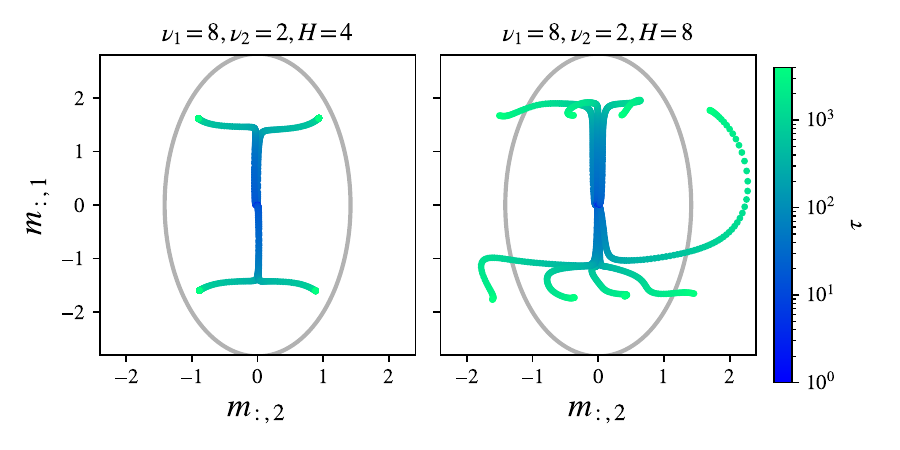}
    \caption{\label{fig:elipsoid} }
\end{subfigure}
\caption{\label{fig:f2_evolution}Evolution of the $H$ heads at $F=2$, softmax activation, $L=4$, and $\eta=1$. Left: for the flipping spike distribution, $\nu_1=\nu_2=2$, and right for the non-isotropic Gaussian distribution, $\nu_1=8, \nu_2=2$.}
\end{figure}
~\\

\textbf{Phase transition. } The (fast) specialization of the heads and the acquisition of new behaviours is often called phase transition, breakthrough or sudden emergence \cite{chen24chutes}. Our model allows us to clarify the nature of the dynamical transition around $\tau^\mathrm{s}$. The transition from unspecialized to specialized heads is truly a dynamical phase transition, in the sense of statistical physics. More precisely, we conjecture that, under the right rescaling when $D\to\infty$, the specialization time $\tau^\mathrm{s}$ concentrates to a deterministic value and the specialization transition around it is sharp. This conjecture relies on a heuristic argument and is supported by numerics detailed in Appendix \ref{secApp:phaseTransition} and Fig.~\ref{fig:phaseTransition}. It moreover extends to semi-realistic data, where $D$ is fixed, by considering the limit of $\eta\to 0$, as we show in App.~\ref{secApp:phaseTransition} Fig.~\ref{fig:phaseTransition_mnist}; similarly to the sharp transition that appears in \cite{saxe25iclLinear} with small initialization.
\begin{conjecture}[Dynamical phase transition]\label{conj:phase}
Take $\delta>0$ small enough. Define $\tau^\mathrm{s}$ as the first time $\tau$ such that there are some $h,h'$ such that the heads are specialized $||m_h-m_{h'}||_2>\delta$. Under the rescaling $\tilde\tau^\mathrm{s}=\tau^\mathrm{s}\log(\sqrt D/\eta)^{-1}$, in the limit $D\to\infty$, $\tilde\tau^\mathrm{s}$ converges to a deterministic value, independent of the realization of the data $k^*, X, y$, of the initialization $k^{(0)}$ and of $D,\eta,\delta$.
\end{conjecture}
~\\

\textbf{Sequential specialization. }
We give insights on how the specialization of the heads occurs during the specialization phase.
The following discussion is based on an analytical result Lemma~\ref{res:hessI} for the initial times of the specialization, and for the later times relies on heuristic arguments on the structure of the landscape and on a numerical integration of Prop.~\ref{res:sgd}.
\begin{result}[Sequential specialization]
\label{res:seqSpe}
During and after $\tau^\mathrm{s}$, the specialization of the heads occurs in a sequential way, via a saddle-to-saddle dynamics, learning the eigenvectors $(e_f)_{f\in[F]}$ of $\cov\theta$ from the largest eigenvalue to the smallest. Moreover, softmax and softmax-1 learn mixtures of $(e_f)_f$.
\end{result}
This is a key connection with what \cite{hoogland24etapesAttention,wang25rllc} observe in practice, where the attention first learn easy tasks such that bigram statistics and then harder such that n-grams and induction. Such sequential specialization is also described in \cite{saxe25iclLinear} for ICL of linear regression. This sequential learning is shown for the non-isotropic Gaussian distribution, where $\mathbb E\theta=0$ and all the features $f$ have different signal strengths in Figs.~\ref{fig:elipsoid} and \ref{fig:hierarchical3D}, as well as in Appendix Figs.~\ref{fig:compDfiniVsFlot_suppsuppsupp} and \ref{fig:ellipsoide_softmax}. In the case where two features have the same signal strength, they are learned at the same time, as shown in Fig.~\ref{fig:ellipsoide_softmax} top.

Compared to \cite{saxe25iclLinear,chen24icl}, we observe that each head of the attention does not focus on a single direction $(e_f)_f$. Instead, the softmax and the softmax-1 learn the $2^F$ possible combinations $\pm e_1\pm e_2\ldots \pm e_F$. This is shown on Fig.~\ref{fig:elipsoid} at $F=2$ and $H=4$ and in the inset of Fig.~\ref{fig:hierarchical3D} left at $F=3$ and $H=8$, as well as in Appendix Fig.~\ref{fig:ellipsoide_softmax}. A similar hierarchical learning is described in \cite{saxe19specialization} for deep linear fully-connected neural networks; yet notice that, contrary to this work, we do not impose any strong structure on the data $P_\theta$ since we only require anisotropy. Rather, the hierarchical learning comes from the structure of the softmax multi-head attention itself. The behaviour of the B-softmax is different: the heads tend to learn single directions $\pm e_f$, as shown in Fig.~\ref{fig:hierarchical3D} right; further insights are given by Prop.~\ref{res:optimalAttention} in the next part and an extension to semi-realistic data is shown in Fig.~\ref{fig:mnist_vsH}.

\begin{figure}[t]
    \centering
    \includegraphics[height=0.3\linewidth]{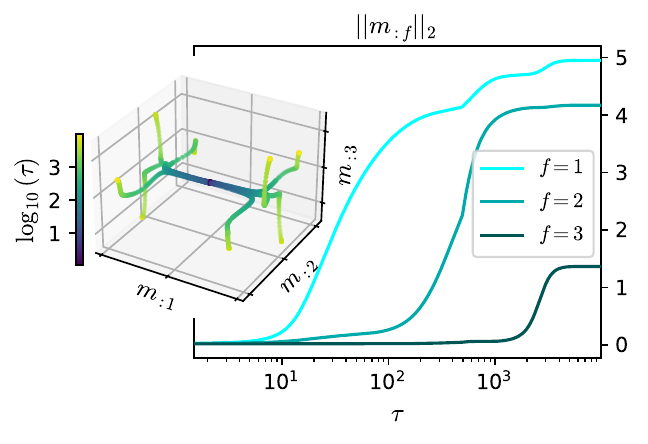}
    \includegraphics[height=0.3\linewidth]{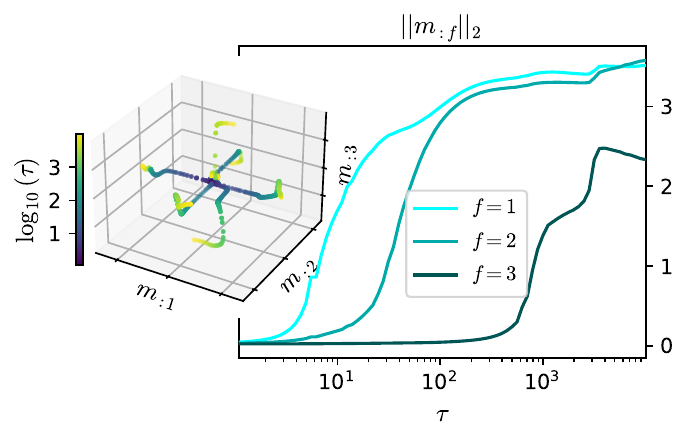}
    \caption{\label{fig:hierarchical3D} Evolution of the $H=8$ heads for the non-isotropic Gaussian distribution at $F=3$, according to Prop.~\ref{res:sgd}. Left: $\sigma$ softmax; right: $\sigma$ B-softmax. $L=5$, $\nu_1=20$, $\nu_2=1$ and $\eta=1$.}
\end{figure}

Result \ref{res:seqSpe} comes from the following analysis. The time to escape the saddle and to specialize is controlled by the curvature of the loss in the different directions: the more negatively curbed the fastest. We state the following lemma on the Hessian before the specialization time $\tau^\mathrm{s}$, when the attention is not yet specialized, and after the time $\tau^\mathrm{u}$, so $b$ and $v$ for the softmax-1 reached their unspecialized minimum. This lemma is a simplification of the Lemma~\ref{resApp:hessI} given in Appendix~\ref{secApp:derivatives}. We assume that $||\mathbb E\theta||_2$ and $\eta$ are small enough so we can expand the loss around $m\approx 0$ and at $r=0$.

\begin{lemma}[Hessian before specialization]
\label{res:hessI}
Consider $m\in\mathbb R^{H\times F}$ orthogonal to $\mathbb E\theta$ i.e. $m\mathbb E\theta=0$. Assume that $b$ is not specialized, and for the softmax-1 assume that $b$ and $v$ reached the fixed-point described by Lemma \ref{resApp:gradBVI}. The loss can be expanded in $m$ as the quadratic form
\begin{align}
& \tilde{\mathcal{E}}_\sigma(m,0,b,v) = \tilde{\mathcal{E}}_\sigma(0,0,b,v)\\
&+ \left(\mathds{1}_H\mathds{1}_H^\top\otimes(c_1^{(2)}I_F+(c_2^{(2)}+c_4^{(2)})\cov\theta)-(c_3^{(2)}+c_4^{(2)})I_H\otimes\cov\theta\right)\cdot(m,m) + \mathcal O(||m||_F^4)  \nonumber
\end{align}
with the tensorial product $\otimes$ between the spaces $\mathbb R^H$ and $\mathbb R^F$, and
$c_1^{(2)}, c^{(2)}_2, c^{(2)}_3\in\mathbb R$ strictly positive and $c^{(2)}_4\in\mathbb R$ positive for all $L\geq 3$.
\end{lemma}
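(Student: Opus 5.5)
We start from the reparametrized loss of Proposition~\ref{res:paramLoss} at $r=0$, so that $\chi_{h\ell}=m_h^\top\chi^*_\ell$ with $\chi^*_\ell=\delta_{\ell,\epsilon}\theta+z_\ell$ and $z_\ell\sim\mathcal N(0,I_F)$ i.i.d. For each of the three activations we Taylor-expand $\sigma(\chi,b,v;h)_\ell$ to second order in $m$ around $m=0$. At $m=0$ all heads output the same vector: $\bar s\,\mathds 1_L/L$, where $\bar s=1$ for softmax and B-softmax (unspecialized $b$), and $\bar s=vL/(e^{\tilde b}+L)$ for softmax-1. Writing $u_{h\ell}=m_h^\top\chi^*_\ell$ and $\bar u_h=\frac1L\sum_\ell u_{h\ell}$, the expansions take a common form. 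For softmax, $\sigma_{h\ell}=\frac1L\big(1+(u_{h\ell}-\bar u_h)+\tfrac12(u_{h\ell}-\bar u_h)^2-\tfrac12\overline{(u_h-\bar u_h)^2}\big)+\mathcal O(m^3)$. For softmax-1 there is an analogous formula with a partial centering $u_{h\ell}-\alpha\bar u_h$, where $\alpha=L/(e^{\tilde b}+L)$, together with the prefactor $v$. For B-softmax the centering is $u_{h\ell}-\frac1H\sum_{h'}\bar u_{h'}$, i.e. a centering across heads as well as across positions.

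Next we average over heads to obtain $\frac1H\sum_h\sigma_{h\ell}$, insert it into $\sum_\ell(\delta_{\ell\epsilon}-\cdot)^2$, and keep the terms of order $m^2$. These come from two sources: the square of the first-order term, and the cross term between the zeroth-order residual $\delta_{\ell\epsilon}-\bar s/L$ and the second-order part. We then take expectations, using $\mathbb E z=0$, $\mathbb E zz^\top=I_F$, independence across $\ell$, and $\mathbb E\theta\theta^\top=\cov\theta+\mathbb E\theta\mathbb E\theta^\top$. The hypothesis $m\mathbb E\theta=0$ removes every $\mathbb E\theta$ contribution, and it also kills the first-order term because $\mathbb E u_{h\epsilon}=m_h^\top\mathbb E\theta=0$. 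The fixed-point conditions on $b,v$ (Lemma~\ref{resApp:gradBVI}) likewise remove any residual first-order terms in the softmax-1 case.

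Every surviving term is of one of three types: $m_h^\top m_{h'}$, $m_h^\top\cov\theta\, m_{h'}$, or a diagonal version of these coming from single-head squares. The cross-head quadratic terms all arise from $\big(\frac1H\sum_h u_{h\ell}\big)^2$ and therefore carry $\mathds 1_H\mathds 1_H^\top$, with both $I_F$ (from noise) and $\cov\theta$ (from signal) pieces. The second-order terms of each head, multiplied by the residual, are per-head and give $I_H\otimes(\cdot)$. In those per-head terms the noise parts from $z$ cancel between $\tfrac12(u-\bar u)^2$ and its average, leaving only a $\cov\theta$ contribution, which comes with a negative sign because the residual at $\ell=\epsilon$ is positive. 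Collecting terms, we read off $c_1^{(2)},\dots,c_4^{(2)}$ as explicit rational functions of $L$ (and of $\alpha,v$ for softmax-1). The $\mathcal O(\|m\|_F^4)$ remainder has no odd order-3 term: either it vanishes by the symmetry $m\mapsto -m$ combined with $m\mathbb E\theta=0$, or it is absorbed into the error term, as in the appendix version.

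The main obstacle will be the sign claims, in particular showing $c_4^{(2)}\ge0$ for all $L\ge3$ and that $c_3^{(2)}>0$ uniformly in the softmax-1 fixed point. Our approach is to express each $c_i^{(2)}$ as a polynomial in $1/L$ (and in $\alpha$), then check positivity directly. The $L\ge3$ threshold should appear as a factor like $(L-2)$ coming from the terms $\mathbb E(u_{h\epsilon}-\bar u_h)^2$ and $\sum_{\ell\neq\epsilon}$.
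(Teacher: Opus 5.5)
Your plan is the paper's proof. You expand each head's activation to second order at $m=0$, keep the squared linear term and the product of the zeroth-order residual with the quadratic term, average over $z,\theta$, and use $m\mathbb E\theta=0$. Your explicit centred expansions just evaluate the derivatives of $\sigma$ at $0$ that the paper keeps abstract, and they reproduce its table (e.g.\ $c_3^{(2)}=(L-1)(L-2)/(HL^3)$ for softmax). Two steps, however, are argued incorrectly.

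\emph{Role of the softmax-1 fixed point.} The linear term vanishes for all $b,v$, since its mean is proportional to $m_h^\top\mathbb E\theta=0$; the fixed point plays no role there. It is needed to discard the $\ell$-independent pieces of the quadratic term, in particular its $z$-part. Your claim that the $z$-contributions cancel between $\tfrac12(u_{h\ell}-\bar u_h)^2$ and its average holds for softmax, and for the head-averaged B-softmax output. For softmax-1, with $\alpha=L/(L+e^{\tilde b})<1$, the $z$-part of the quadratic term of head $h$ at position $\ell$ has mean $\frac{v}{L+e^{\tilde b}}(1-\alpha)(\tfrac12-\tfrac{\alpha}{L})\|m_h\|_2^2\neq0$. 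This is independent of $\ell$, so it drops only because $\sum_\ell\big(\delta_{\ell\epsilon}-\frac{v}{L+e^{\tilde b}}\big)=1-\frac{Lv}{L+e^{\tilde b}}=0$ at the fixed point of Lemma~\ref{resApp:gradBVI}. This is exactly the paper's simplification ``residual $\times$ ($\ell$-independent function) $=0$''; relying on the cancellation claim instead would leave a spurious $I_H\otimes I_F$ term.

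On the other hand, the sign checks you single out as the main obstacle are easy. In the paper's split, $c_4^{(2)}$ is the $h'\neq h''$ part of the residual-times-quadratic term. It is identically zero for softmax and softmax-1, since each head depends only on its own scores, and equals $2(L-1)/(H^2L^3)$ for B-softmax. At the softmax-1 fixed point, $c_3^{(2)}=\frac{L-1}{HL^2}\big(1-\frac{2}{L+e^{\tilde b}}\big)>0$.

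\emph{The cubic term.} Invariance under $m\mapsto-m$ requires $m\theta$ to be symmetric in law, which $m\mathbb E\theta=0$ does not give. Nor can a cubic term be absorbed into $\mathcal O(\|m\|_F^4)$. Take one softmax head with $a=m^\top\theta$. All cubic monomials involving $z$ have zero mean (using $\mathbb E a=0$). The pure-signal loss $L(L-1)(e^{a}+L-1)^{-2}$ has $a^3$-coefficient $\frac{L-1}{L^4}\big(3L-4-\frac{L^2}{3}\big)$, which is nonzero for every integer $L\geq3$. So a term proportional to $\mathbb E(m^\top\theta)^3$ survives. An example is the $F>2$ flipping spike, where $\mathbb E(m^\top\theta)^3=\nu^{3/2}F^{-1}\sum_f m_f^3$ need not vanish under $\sum_f m_f=0$. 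The paper's proof does not address this either; it simply truncates at second order. In general the remainder is therefore only $\mathcal O(\|m\|_F^3)$, and $\mathcal O(\|m\|_F^4)$ holds when $m\theta$ is symmetric in law, as in the Gaussian and $F=2$ flipping-spike cases. The quadratic form, which is all the lemma is used for, is unaffected.
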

The first descent directions of the magnetization $m$ in the feature space $\mathbb R^F$ are thus the eigenvectors of $\cov\theta$ with largest eigenvalue, as stated by Result~\ref{res:seqSpe}.

We can provide a more precise description of the dynamics. In the head space $\mathbb R^H$, according to Lemma~\ref{res:hessI} the first descent directions are all the directions orthogonal to $\mathds{1}_H$; that is to say the heads $m_{:f}$ tend to split evenly across each feature $f$. While Lemma \ref{res:hessI} applies at the beginning of specialization, Lemma~\ref{resApp:diff4} in Appendix~\ref{secApp:derivatives} further extends the analysis to later times. The outcome is that, assuming that the features $\bar f_1,\ldots,\bar f_n$ are already partially learned and that the feature $f$ is not yet learned, $m_{:f}$ tends to grow orthogonally to the magnetizations $m_{:\bar f_i}$ of each already-learned feature. This description corresponds to what is shown on Fig.~\ref{fig:elipsoid} and \ref{fig:hierarchical3D}.
~\\

\textbf{Later dynamics, excursions. }
The later specialization dynamics, and in particular the final split of the heads, depend on the initial condition $k^{(0)}$; it is thus challenging to accurately and entirely describe. As can be seen in Fig.~\ref{fig:elipsoid} at $H=8$, because of the stochasticity in initialization, at the beginning of the learning the heads may not split evenly along a direction $f$, even if $P_\theta(\theta_f)$ is symmetric in this direction. However, later the misplaced heads can rearrange, by performing an \textit{excursion}, so that at the end the split is even and the loss optimized. This shows that the structure of the multi-head softmax helps SGD to navigate the landscape and not to stay stuck in local minima. Moreover, in the next section Fig.~\ref{fig:compActivationFunctions_vsHNu} we provide a numerical guarantee that the Bayes-softmax reaches the global minimum of the loss whenever $H$ is large enough, which echoes the result of \cite{chen24icl} on the optimality of gradient flow.

\section{Head deactivation via alternative activation}
\label{sec:activations}

In this section, we consider the trained models and the achieved loss $\mathcal{E}^\infty_\sigma=\lim_{\tau\to\infty}\mathcal E_\sigma(k(\tau),b(\tau),v(\tau))$, for different activations $\sigma$. We start by deriving the Bayes risk on our probabilistic data model, motivating the Bayes-softmax attention; we compare the expressivity and the performances of the other activations functions and we highlight the impact of the normalization of the heads.

A natural benchmark is the Bayes risk $\mathcal{E}_\mathrm{Bayes}=\mathbb E_{k^*,X,y}||y-\hat{y}_\mathrm{Bayes}(X, k^*)||_2^2$. It corresponds to the loss of the Bayes estimator $\hat{y}_\mathrm{Bayes}$, which is the optimal estimator of $y$ in terms of population loss. We characterize it in the following proposition and derive it in Appendix~\ref{sec:BayesProofs}.

\begin{proposition}[Bayes estimator]
\label{res:BayesRisk}
Given the spikes $\{k^*_f\}_{f\in[F]}\in\mathbb R^{F\times D}$ and a sequence $X\in\mathbb R^{L\times D}$, the Bayes estimator of the label is
\begin{align}
\hat{y}_\mathrm{Bayes}(X, k^*) &= \sum_{\ell}^L \frac{\int_\theta \exp{(\hat{k}(\theta)^TX_\ell)}e^{-\frac{||\theta||_2^2}{2}}P_\theta(d\theta)}{\sum_{\ell'}^L\int_\theta \exp{(\hat{k}(\theta)^TX_{\ell'})}e^{-\frac{||\theta||_2^2}{2}}P_\theta(d\theta)}\,X_\ell\ , \quad \mathrm{where\ } \hat{k}(\theta) = \sum_f^F\theta_fk_f^*.
\end{align}
\end{proposition}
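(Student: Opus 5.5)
The Bayes estimator under squared loss is the posterior mean $\hat y_\mathrm{Bayes}(X,k^*)=\mathbb E[y\mid X,k^*]$. Since $y=X_\epsilon$ and $X$ is observed, it equals $\sum_\ell P(\epsilon=\ell\mid X,k^*)\,X_\ell$. The proof therefore reduces to computing the posterior of the hidden index $\epsilon$, with $\theta$ marginalized out.

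The plan has three steps.

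\textbf{Step 1: the squared-loss reduction.} For any measurable estimator $g(X)$, decompose
$\mathbb E\|y-g\|^2=\mathbb E\|y-\mathbb E[y|X,k^*]\|^2+\mathbb E\|\mathbb E[y|X,k^*]-g\|^2$.
This shows the posterior mean is optimal.

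\textbf{Step 2: the joint law.} Conditionally on $k^*$, the generative model of eq.~\eqref{eq:modelX} has joint density over $(\epsilon,\theta,X)$ proportional to
\[
\frac1L P_\theta(d\theta)\prod_{\ell'}\exp\!\Big(-\tfrac12\|X_{\ell'}-\delta_{\ell',\epsilon}\hat k(\theta)\|_2^2\Big).
\]
Expanding the square in the $\epsilon$-th token gives
$\prod_{\ell'}e^{-\|X_{\ell'}\|^2/2}\cdot \exp(\hat k(\theta)^\top X_\epsilon-\tfrac12\|\hat k(\theta)\|_2^2)$.
The first factor is independent of $(\epsilon,\theta)$ and cancels upon normalization. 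Marginalizing over $\theta$ then yields
\[
P(\epsilon=\ell\mid X,k^*)\propto\int\exp(\hat k(\theta)^\top X_\ell)\,e^{-\|\hat k(\theta)\|_2^2/2}P_\theta(d\theta).
\]
Normalizing over $\ell$ and inserting into Step 1 gives the stated formula.

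\textbf{Step 3: matching the stated form.} The only real obstacle is the replacement of $\|\hat k(\theta)\|_2^2=\theta^\top p\,\theta$ by $\|\theta\|_2^2$ in the statement. Here $p_{ff'}=k_f^{*\top}k_{f'}^*$ as in Prop.~\ref{res:paramLoss}. I would treat this in one of two ways. The exact Bayes estimator carries $e^{-\theta^\top p\theta/2}$. Because $k_f^*\sim\mathcal N(0,D^{-1}I_D)$ independently, $p=I_F+\mathcal O(D^{-1/2})$, and the paper uses $p\approx I_F$ in the high-dimensional limit. So I would state the exact formula with $p$, then argue that replacing $p$ by $I_F$ changes the posterior weights only by a factor $1+\mathcal O(D^{-1/2})$, uniformly over $\theta$ on bounded sets. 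This suffices when $\|\theta\|_2=\Theta(1)$ and $P_\theta$ has bounded support or Gaussian tails; for the Gaussian case, dominated convergence handles the tails. The replacement therefore does not affect the estimator, or the risk at leading order.

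Alternatively, one can note that for the flipping-spike prior $\|\theta\|_2$ is constant, so the factor cancels identically.
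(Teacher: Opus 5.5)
This is essentially the paper's proof (posterior mean, Bayes' rule, expanding $\|X_\ell-\hat k(\theta)\|_2^2$ and cancelling the $\theta$-independent Gaussian factors), and your Step~3 is more careful than the paper, which simply writes $\|\hat k(\theta)\|_2^2=\|\theta\|_2^2$ using $p\approx I_F$. Drop your closing alternative, though. For the flipping spike the exact factor $e^{-\theta^\top p\,\theta/2}$ is not constant on the support: the cross term $2\sqrt{\nu_1\nu_2}\,p_{12}$ changes sign with $\theta_2$, and for $F>2$ the weights involve $\nu\,p_{ff}$, which differ across $f$. So the constancy of $\|\theta\|_2$ only removes the factor after the $p\approx I_F$ approximation, not instead of it.
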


In practice, one does not have access to the spikes $k^*$ nor to $P_\theta$, and $\hat{y}_\mathrm{Bayes}$ seems to be a purely theoretical estimator. Yet, $k^*$ and $P_\theta$ can be learned and $\hat{y}_\mathrm{Bayes}$ can be interpreted as the Bayes-softmax attention, which we rewrite as
\begin{align}
\hat y_{\textnormal{B-softmax},k,b}(X) &= \sum_{\ell}^L\frac{\sum_h^H\exp{(k_h^\top X_\ell+b_h)}}{\sum_{\ell'}^L\sum_{h'}^H\exp{(k_{h'}^\top X_{\ell'}+b_{h'})}} X_\ell\ .
\end{align}
We state the equivalence between the Bayes estimator and the Bayes-softmax attention in the following proposition, which directly follows from a substitution of the given parameters into the B-softmax attention model leading to the expression of the Bayes estimator Prop.~\ref{res:BayesRisk}.

\begin{proposition}[Optimality of the Bayes-softmax attention]
\label{res:optimalAttention}
Consider some distribution $P_\theta$ with discrete support $\{\theta^h\}_{h\in[H]}$.
Then the Bayes-softmax attention with $H$ heads and parameters $k_h = \hat{k}(\theta^h)$ and $b_h = -||\theta^h||_2^2\log{P_\theta(\theta^h)}/2$ achieves the Bayes risk.
\end{proposition}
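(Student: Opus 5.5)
The proof is a direct substitution. I will show that, with the prescribed parameters, the B-softmax attention coincides pointwise in $X$ (for fixed spikes $k^*$) with the estimator $\hat y_\mathrm{Bayes}(X,k^*)$ of Prop.~\ref{res:BayesRisk}. Its population loss is then, by definition, the Bayes risk. I take Prop.~\ref{res:BayesRisk} as given, including the convention that $||\hat k(\theta)||_2^2$ is replaced by $||\theta||_2^2$ (i.e.\ $p\approx I_F$), since that is the form in which the Bayes estimator is stated.

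\textbf{Step 1 (B-softmax in closed form).} Plug the B-softmax activation into eq.~\eqref{eq:attention_model}. The prefactor $\frac1H$ in the aggregation cancels the $\frac1H$ in the denominator of $\sigma$, which gives exactly the rewritten form
\[
\hat y_{\textnormal{B-softmax},k,b}(X)=\sum_\ell \frac{\sum_h e^{k_h^\top X_\ell+b_h}}{\sum_{\ell'}\sum_{h'} e^{k_{h'}^\top X_{\ell'}+b_{h'}}}X_\ell .
\]

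\textbf{Step 2 (Bayes estimator for a discrete prior).} When $P_\theta=\sum_{h}P_\theta(\theta^h)\delta_{\theta^h}$, each integral $\int_\theta(\cdot)\,P_\theta(d\theta)$ in Prop.~\ref{res:BayesRisk} becomes a finite sum:
\[
\int_\theta e^{\hat k(\theta)^\top X_\ell-\frac{||\theta||_2^2}{2}}P_\theta(d\theta)=\sum_h \exp\!\Big(\hat k(\theta^h)^\top X_\ell+\log P_\theta(\theta^h)-\tfrac{||\theta^h||_2^2}{2}\Big).
\]
Comparing term by term with Step~1, the two estimators agree for every $X$ provided
\[
k_h=\hat k(\theta^h),\qquad b_h=\log P_\theta(\theta^h)-\tfrac12||\theta^h||_2^2 .
\]
The bias written in the statement, $-||\theta^h||_2^2\log P_\theta(\theta^h)/2$, does not match this identification. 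I would read it as a typographical slip and prove the result with the bias above. An additive constant common to all $b_h$ would also be harmless, since it cancels between numerator and denominator, but a multiplicative coupling of the two terms would not be.

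\textbf{Step 3 (conclusion).} Since $\hat y_{\textnormal{B-softmax},k,b}(X)=\hat y_\mathrm{Bayes}(X,k^*)$ for every $X$, the two estimators have the same expected squared error over $(X,y)$, and this common value is $\mathcal E_\mathrm{Bayes}$. The number of heads is exactly the support size $H$.

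There is no real analytic obstacle here. The only care needed is bookkeeping: checking that the $\frac1H$ factors cancel, and fixing the bias correctly, in particular sorting out the likely typo in the stated $b_h$.
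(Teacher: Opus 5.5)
Your proposal is correct and follows the paper's own argument: substitute the prescribed parameters into the B-softmax (the $\frac{1}{H}$ factors cancel) and recover the estimator of Prop.~\ref{res:BayesRisk}. Your correction of the bias is also right, since the identification requires $b_h=\log P_\theta(\theta^h)-||\theta^h||_2^2/2$ up to an $h$-independent constant, and the product form in the statement only works when it happens to be constant across heads (as for the equal-norm, equal-weight discrete priors in the paper's flipping-spike examples).
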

This proposition gives a prescription on the right number $H$ of attention heads: each point of the support of $P_\theta$ should correspond to a different attention head. This can be generalized to continuous distribution $P_\theta$ by discretizing the integral in Prop.~\ref{res:BayesRisk} and taking $H$ large enough to approximate it correctly. 
In Fig.~\ref{fig:compActivationFunctions_vsHNu} we can see that when the distribution of the spikes is discrete, the loss of the B-softmax model reaches a plateau to the Bayes risk when the number of heads is greater than or equal to the number of spikes. This shows that the B-softmax trained with SGD can exactly estimate the hidden parameters of the Bayes-risk and reach the optimality, whenever $H$ is large enough. For a continuous distribution, the loss does not plateaus and continues to decrease, and more heads are required to interpolate $P_\theta$ and approach the Bayes-risk. According to Fig.~\ref{fig:compActivationFunctions_vsHNu} center left, at $F=4$ for Gaussian $P_\theta$, $H=5$ is already enough to be close to the optimality; and the B-softmax does not seem cursed by the dimensionality of $P_\theta$. The same behaviour qualitatively holds for the softmax and the softmax-1, though the equivalence with the Bayes risk does not hold.

In our setting, heads that are not aligned with the signal $\hat k$ introduce noise that cannot be reduced by other means. This misalignment has to be mitigated by the architecture of the attention itself, by ``switching off'' the heads with low attention scores. While the standard softmax is not able to do so and therefore cannot reach zero error, the softmax-1 and B-softmax can deactivate heads and outperform the standard variant.

We additionally consider the \textbf{softmax-v} attention, designed to emulate the effect of value matrix of the traditional attention architecture on the deactivation of heads. In more realistic settings, relevant and irrelevant tokens may span different subspaces in the space of embeddings and can be separated by value matrices, output projections or MLPs. Yet, the attention still plays a central role, as shown by the works on the softmax-1 and attention sinks \cite{kaul24softmax1,darcet24registres,xiao24eviers}, where these mechanisms do improve the effectiveness of the standard softmax. Since in our model embeddings of both the relevant and irrelevant tokens span the whole $\mathbb{R}^D$ in an isotropic manner, we only consider the scaling effect introduced by the value matrices. The additional parameters in softmax-v model allow to partially cancel outputs of the not specialized heads, but coefficients $v_h$ cannot adapt to the input sequence. However, as stated by the following proposition, proved in Appendix \ref{sec:BayesProofs}, the ability to normalize attention scores adaptively based on the current sequence is crucial for performance in our model.

\begin{figure}[t]
    \centering
    \includegraphics[width=0.47\linewidth]{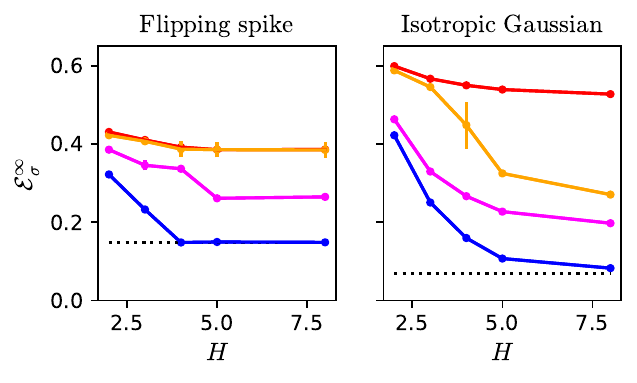}
    \hfill
    \includegraphics[width=0.47\linewidth]{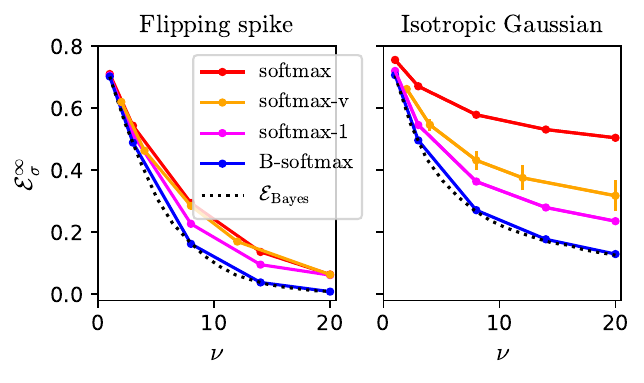}
    \caption{\label{fig:compActivationFunctions_vsHNu} Predicted error $\mathcal{E}^\infty_\sigma$ of the different activation functions. $L=5$, $\eta=1$. Left: $F=4$, $\nu=10$, varying $H$; right: $F=2$, $H=4$, varying signal strength. We performed 5 independent runs.}
\end{figure}

\begin{proposition}[Expressivity of softmax, softmax-v and softmax-1]
\label{res:exprSoftmax}
Assume that there is some disjoint $S\subset\mathbb R^F$ and $\bar S=\{-\theta, \theta\in S\}$ such that $P_\theta(S)$ and $P_\theta(\bar S)$ are bounded away from zero by some constant. Then the softmax and softmax-v attentions are not well specified, i.e. $\mathcal E_\mathrm{softmax}(k,0,0)$ and $\mathcal E_\textnormal{softmax-v}(k,0,0)$ are bounded away from zero for all $k$.

Consider the attention equipped with the softmax-1 activation function. Assume that $||\theta||_2>B$ almost surely. Then in limit of large signal $B\to\infty$ (taken after $D\to\infty$) the softmax-1 attention is well specified, i.e. there is $k$, $b$ and $v$ such that $\mathcal E_\textnormal{softmax-1}(k, b, v)\to 0$.
\end{proposition}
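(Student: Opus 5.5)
The plan is to work entirely with the reparametrized loss of Prop.~\ref{res:paramLoss}. Write $w_\ell=\frac1H\sum_h\sigma(\chi,b,v;h)_\ell$, so that $\tilde{\mathcal E}_\sigma=\mathbb E\sum_\ell(\delta_{\ell\epsilon}-w_\ell)^2$. By exchangeability of the tokens we may condition on $\epsilon=1$.

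\textbf{Lower bound on the loss.} Keeping only the diagonal term and using Cauchy--Schwarz on the off-diagonal terms gives
\begin{align*}
\tilde{\mathcal E}_\sigma\ \ge\ \mathbb E\Big[(1-w_1)^2+\tfrac{1}{L-1}\Big(\textstyle\sum_{\ell\ge2}w_\ell\Big)^2\Big].
\end{align*}
For the plain softmax, $\sum_\ell w_\ell=1$, so the second term also equals $(1-w_1)^2/(L-1)$. It therefore suffices, by Jensen, to show that $\mathbb E[w_1]\le 1-c$ for a constant $c>0$ independent of $k$.

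\textbf{Reflection argument for one head.} For a head $h$, set $a_h(\theta)=m_h\theta$, the mean shift of the relevant score. The remaining randomness, $m_h z+(r\xi)_h$ with $z\sim\mathcal N(0,I_F)$, has a law that does not depend on $\theta$ and is i.i.d.\ across tokens. Define
\begin{align*}
g_h(a)=\mathbb E\Big[\frac{e^{a+u_1}}{e^{a+u_1}+\sum_{\ell\ge2}e^{u_\ell}}\Big]
\end{align*}
with $u_\ell$ i.i.d.\ copies of this noise. Then $g_h$ is nondecreasing in $a$, $g_h\le 1$, and $g_h(0)=1/L$ by symmetry. Consequently $g_h(a)+g_h(-a)\le 1+g_h(-|a|)\le 1+1/L$ for every $a$.

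\textbf{Averaging over $S$ and $\bar S$.} Pairing $\theta\in S$ with $-\theta\in\bar S$, I would bound $\mathbb E[w_1]$ from above. Let $\rho$ be the overlap measure $\min(P_\theta|_S,\,\text{reflection of }P_\theta|_{\bar S})$. The hypothesis that both masses are bounded away from zero is meant to guarantee that this overlap has mass bounded below by some $\kappa>0$.

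\emph{Caveat on this step.} This is exactly where I am least sure the stated hypothesis suffices. If only the masses of $S$ and $\bar S$, not their overlap after reflection, are controlled, I would either strengthen the reading of the assumption or use that $g_h$ depends on $\theta$ only through the scalar $a_h(\theta)$ to pair level sets.

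Given the overlap mass, on that part of the measure the paired contribution per head is at most $(1+1/L)/2$ instead of $1$. This yields $\mathbb E[w_1]\le 1-\kappa(1-1/L)/2$, which bounds the softmax loss away from zero uniformly in $k$.

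\textbf{Softmax-v.} Here $\sum_\ell w_\ell=\bar v:=\frac1H\sum_h v_h$ is deterministic. If $|\bar v-1|\ge c'$, the loss is at least of order $c'^2/L$, because
\begin{align*}
\sum_\ell(\delta_{\ell1}-w_\ell)^2\ \ge\ \tfrac1L\Big(1-\textstyle\sum_\ell w_\ell\Big)^2 .
\end{align*}
Otherwise $\bar v\approx1$, and I would split heads by the sign of $v_h$. Heads with $v_h<0$ only decrease $w_1$. Heads with $v_h>0$ obey the same reflection bound, giving
\begin{align*}
\mathbb E[w_1]\ \le\ \tfrac1H\textstyle\sum_h v_h^+\,\big(1-\kappa'(1-1/L)\big).
\end{align*}
Combining this with $\sum_\ell w_\ell\approx1$, which forces $\sum_h v_h^-$ to be controlled, should again give $\mathbb E[(1-w_1)^2]\ge c''$.

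\textbf{Softmax-1: well-specification at large signal.} I would give an explicit construction and let $B\to\infty$.
\begin{itemize}
\item Place heads $k_h=\alpha\sum_f s^h_f k^*_f$ on a finite set of directions $s^h$, with $r=0$.
\item Take a large common bias $b_h=\beta$ and the output scaling $v$.
\item Choose $\alpha,\beta$ growing with $B$ so that the relevant score $\alpha s^h\!\cdot\theta$ exceeds $\beta$ by a diverging margin for the heads aligned with $\theta$, while for all other heads and tokens the scores (which are $\mathcal O(\alpha)$ Gaussian) stay below $\beta$ by a diverging margin.
\item Active heads then put weight $\to1$ on $\epsilon$, and inactive heads output $\to0$.
\end{itemize}
The \textbf{main obstacle} is that $v$ is global, so the number of active heads must be the same (say exactly one) for almost every $\theta$. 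The first attempt is to choose the directions $s^h$ and the threshold $\beta$ so that $\{\theta:\alpha s^h\!\cdot\theta>\beta\}$ essentially partitions $\{\|\theta\|_2>B\}$. If this cannot be done exactly, the fallback is to restrict to the discrete supports relevant in the paper (flipping spike, one head per atom, $v=H$), for which the partition is automatic as $B\to\infty$.
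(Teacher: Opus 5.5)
\textbf{Softmax.} Your argument is sound, and it is more careful than the paper's. The paper flips $\theta\mapsto-\theta$ at fixed noise and then writes the gap at $\ell=\epsilon$ as if every head had $m_h^\top\theta>0$. Your bound $g_h(a)+g_h(-a)\le 1+1/L$ handles both signs at once. Your caveat is justified, but only the ``strengthen the assumption'' branch can work. $P_\theta=\frac12(\delta_{Be_1}+\delta_{-Be_2})$ with $S=\{Be_1,Be_2\}$ satisfies the literal hypothesis. Yet heads $k_h=k_1^*-k_2^*$ see a relevant-token margin $B$ on both atoms, so the softmax loss tends to $0$ as $B\to\infty$. (A signal-independent lower bound is the only nontrivial reading, since for a fixed $P_\theta$ the Bayes risk is already positive.) No level-set pairing can avoid this. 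You need the reflected overlap $\kappa$ bounded below, which is exactly what the paper's switching between $\theta$ and $-\theta$ tacitly uses.

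\textbf{Softmax-v.} Here there is a genuine gap. $\sum_\ell w_\ell=\frac{1}{H}\sum_h v_h$ pins only $V^+-V^-$, with $V^\pm=\frac{1}{H}\sum_h v_h^\pm$. So $\mathbb E[w_1]\le V^+(1-c)$ says nothing once $V^-$ is of order one, and the step cannot be repaired for signed $v$. Take $P_\theta=\frac12(\delta_{Bu}+\delta_{-Bu})$ with $\|u\|_2=1$, $L\ge3$, $H=3$, and set $k_{1,2}=\pm B^{-1/2}\sum_f u_f k_f^*$, $k_3=0$, $v_1=v_2=\frac{3(L-1)}{L-2}$, $v_3=-\frac{3L}{L-2}$. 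As $B\to\infty$:
\begin{itemize}
\item the aligned head tends to $e_\epsilon$;
\item the anti-aligned head, whose noise is $\mathcal O(B^{-1/2})$, tends to the uniform vector on $[L]\setminus\{\epsilon\}$;
\item the null head is uniform on $[L]$.
\end{itemize}
Hence $w_\epsilon\to\frac13(v_1+\frac{v_3}{L})=1$ and $w_{\ell\neq\epsilon}\to\frac13(\frac{v_2}{L-1}+\frac{v_3}{L})=0$, so the loss vanishes. The claim, and the paper's proof (which combines $\sum_h v_h\approx 1$ with a positive per-head gap), really require $v_h\ge 0$. Under that restriction $V^-=0$, and your argument closes together with the total-mass bound $(1-\bar v)^2/L$.

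\textbf{Softmax-1.} You identified the right obstacle but not the idea that removes it. Falling back on discrete supports does not prove the statement, which concerns any $P_\theta$ with $\|\theta\|_2>B$. The paper uses an antipodal pair: $H=2$, $k_1=-k_2=B\sum_f s_f k_f^*$ for a unit vector $s$, $b_1=b_2=B^{3/2}$, and $v=H$. This is your scheme with $s^2=-s^1$ and $\alpha\ll\beta\ll\alpha B$. The half-spaces $\{\pm s^\top\theta>0\}$ partition $\mathbb R^F$ up to a hyperplane. So whenever $|s^\top\theta|\ge 2B^{1/2}$:
\begin{itemize}
\item exactly one head has a relevant score exceeding the bias by a diverging margin;
\item all other scores of that head are $\mathcal O(B)\ll B^{3/2}$;
\item all scores of the other head are either $\mathcal O(B)$ or very negative.
\end{itemize}
Hence one head outputs $e_\epsilon$, the other outputs $0$, and $w=\frac{v}{H}e_\epsilon=e_\epsilon$. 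The loss therefore vanishes as long as the slab $|s^\top\theta|<2B^{1/2}$ carries vanishing mass, which is the paper's null-hyperplane assumption.
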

Prop. \ref{res:exprSoftmax} is illustrated by Fig.~\ref{fig:compActivationFunctions_vsHNu}. For the flipping sign direction, the signal $\hat k$ is restricted to a quadrant of $\mathbb R^F$ and therefore all the heads of the softmax(-v) can be positively aligned with it; the softmax(-v) and the softmax-1 have close performances for all signal strengths. This has to be contrasted with the isotropic Gaussian distribution, where the heads cannot always be aligned with $\hat k$: the gap between softmax(-v) and softmax-1 increases with the signal strength $\nu$, and the softmax plateaus at large $\nu$. 

At the same time, the B-softmax is very close to the Bayes risk and achieve better performances, as expected. Independent of its link with the Bayes estimator, the superiority of the B-softmax over the softmax-1 can be explained by its ability to perform normalization adapting not only to the scores of each head separately but also to the "most confident" heads for each input sequence. In Appendix \ref{secApp:activations_comp} we provide additional experiment on the pruning of the trained heads \ref{fig:ablationHeads} and the attention maps for different activation functions \ref{fig:attentionMaps}. They show that the B-softmax and the softmax-1 focus more on the single relevant token and are more specialized.
~\\

\textbf{Semi-realistic data. } The difference between the activations can also bee observed when training on more realistic data. In Fig.~\ref{fig:mnist_vsH} (left) we compare the key-vectors of different heads trained on the MNIST detection task (described in App.~\ref{secApp:mnist}). We see that, as indicated by our discussion in section \ref{sec:dynamics}, softmax attention learns representations where digits are mixed together in every head, while B-softmax produces clearly distinct digit patterns in different heads. Moreover, as can be seen in Fig. \ref{fig:mnist_vsH} (right), the final errors of different activation functions behave as expected from our previous analysis, with softmax-1 and B-softmax outperforming standard softmax.

\begin{figure}[h]
\centering
\includegraphics[width=0.4\linewidth]{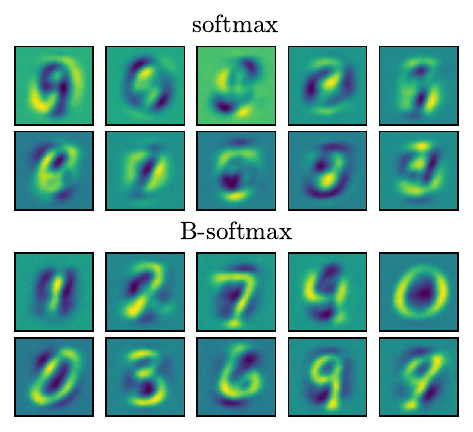}
~~
\includegraphics[width=0.45\linewidth]{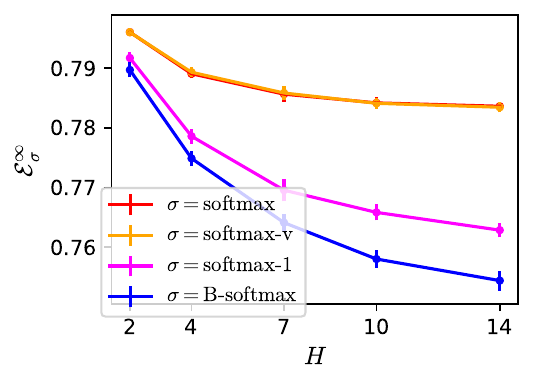}
\caption{\label{fig:mnist_vsH} MNIST detection task (described in App.~\ref{secApp:mnist}). Left: learned $(k_h)_{h\in[H]}$ at $H=10$; right: achieved error $\mathcal{E}^\infty_\sigma$; for the different activation functions after training.}
\end{figure}

\section{Conclusion and limitations.}
We introduced a solvable high-dimensional model of multi-head attention where redundant heads induce persistent variance unless explicitly suppressed by the activation. By restricting to this minimal architecture, we obtained an exact description of training dynamics and isolated staged head specialization, head redundancy, and the role of attention normalization under SGD training.

Our model is intentionally simplified. It considers a single attention layer, excludes output projections and residual pathways, and focuses on a stylized sequence-to-token regression task. Extending the analysis to deeper architectures and more structured data distributions remains an open direction. As in the theory of the multi-index model \cite{aubin2018committee}, the population-level training dynamics derived here are expected to translate into sharp sample-complexity transitions under empirical risk minimization, suggesting a route toward understanding data-dependent phase transitions in attention-based models.

Finally, comparing the overall phenomenology discovered in our model with recent works on head specialization in in-context learning \cite{saxe25iclLinear,chen24icl}, we see that stage-wise head emergence appears in both settings. By contrast, the effect of head redundancy is model-dependent: in our setting, redundant heads induce persistent variance unless suppressed by the attention normalization, while in ICL regression, redundant components are asymptotically harmless.

\section*{Acknowledgment}
We thank Pierre Marion and Claire Boyer for discussion about the single location models and Ludovic Stephan for discussion about the correspondence between SGD and GF dynamics.

We acknowledge funding from the Swiss National Science Foundation grants SNSF SMArtNet (grant number 212049), and the Simons Collaboration on the Physics of Learning and Neural Computation via the Simons Foundation grant (\#1257413 (LZ)).

\bibliography{main}


\appendix

\allowdisplaybreaks
\input{appendix.tex}

\clearpage
\newpage

\section{Supplementary figures}

\subsection{Characterization of the training dynamics}
\label{sec:training_dynamics_plots}
We compare our theoretical prediction, Prop.~\ref{res:sgd}, to the numerical simulation of SGD at finite large $D$. We consider the different activation functions, on the flipping spike or non-isotropic Gaussian.

\begin{figure}[h!]
    \centering
    \includegraphics[width=\linewidth]{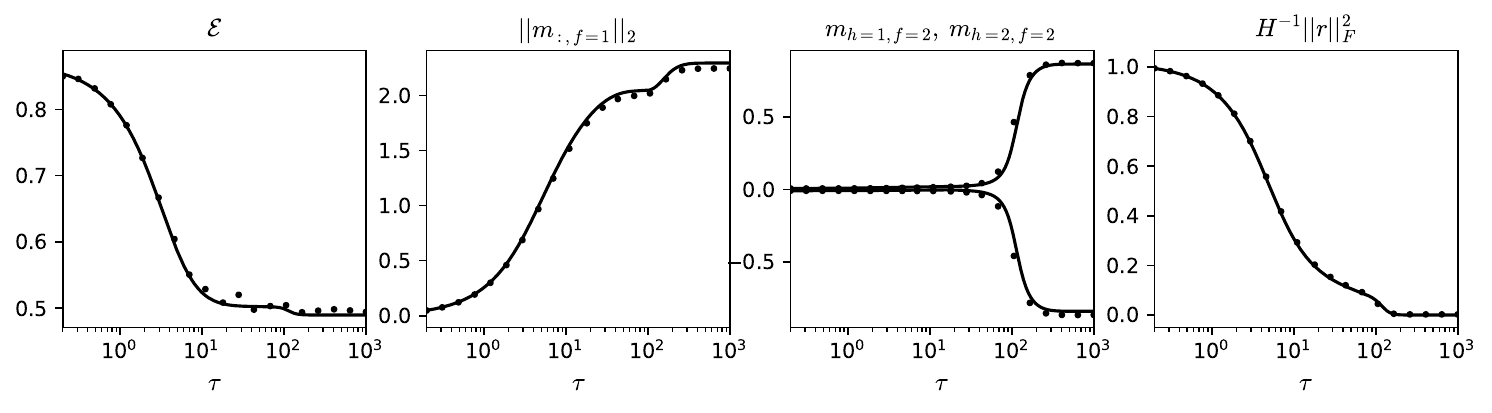}
    \includegraphics[width=\linewidth]{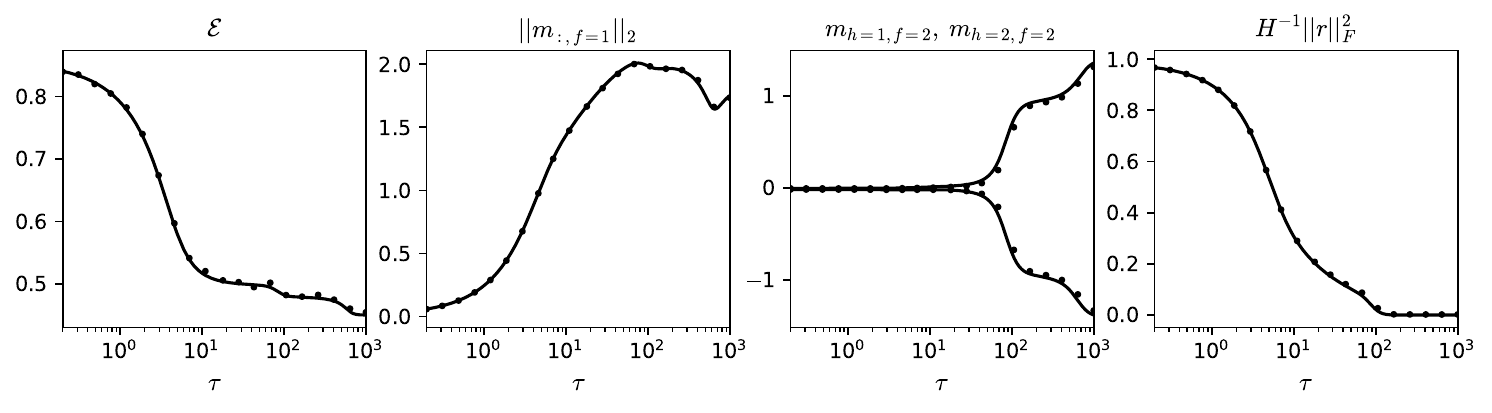}
    \includegraphics[width=\linewidth]{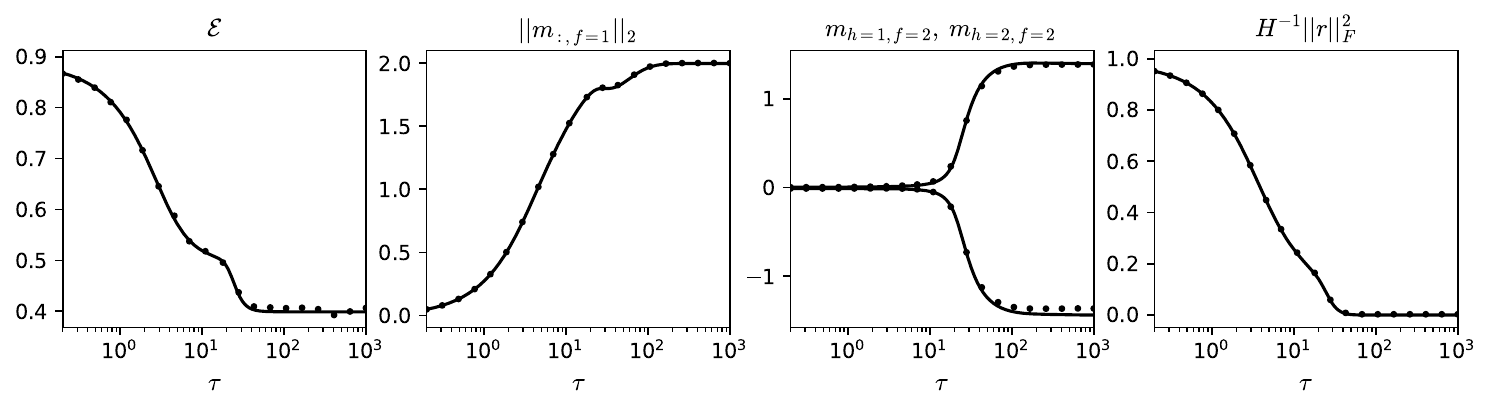}
    \caption{\label{fig:compDfiniVsFlot_supp} Asymptotic description of the attention trained by SGD. We compare numerical simulations at finite $D=10^4$ (dots) and the theoretical description stated in proposition~\ref{res:sgd} (continuous lines). We consider sequence length $L=5$, $H=2$ heads, $F=2$ features and $\theta$ distributed according to the flipping spike distribution, with signal strengths $\nu_1=\nu_2=2$. Initialization $\eta=1$. Top: softmax; middle: softmax-1; bottom: B-softmax.}
\end{figure}

\newpage
\begin{figure}[ht!]
    \centering
    \includegraphics[width=\linewidth]{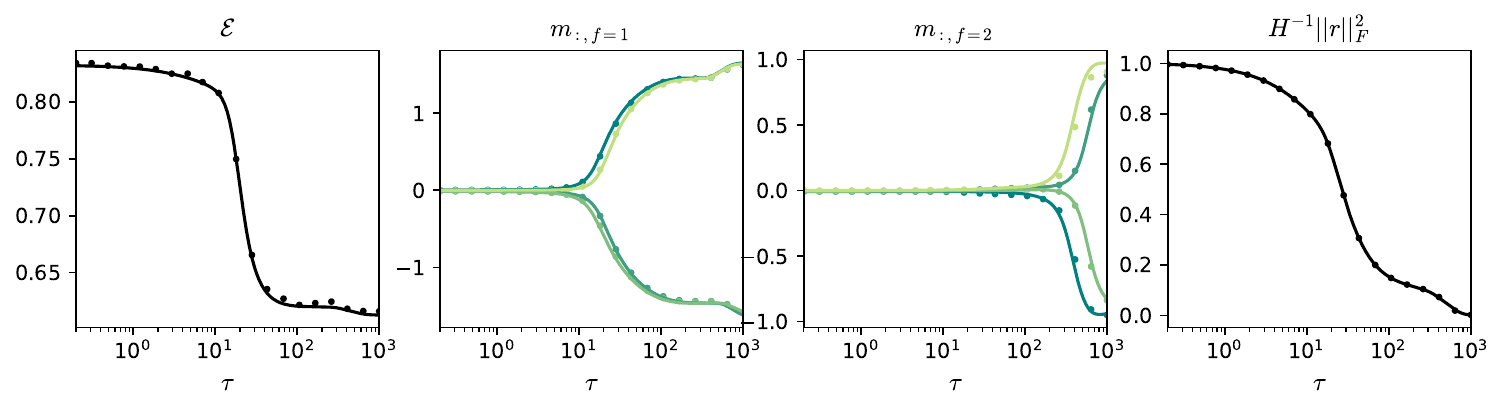}
    \includegraphics[width=\linewidth]{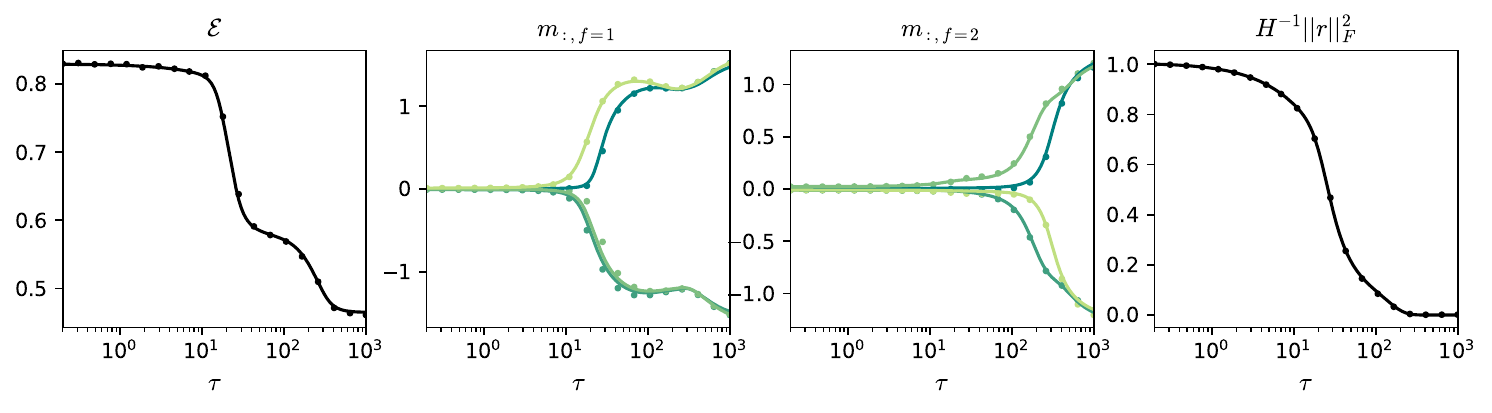}
    \includegraphics[width=\linewidth]{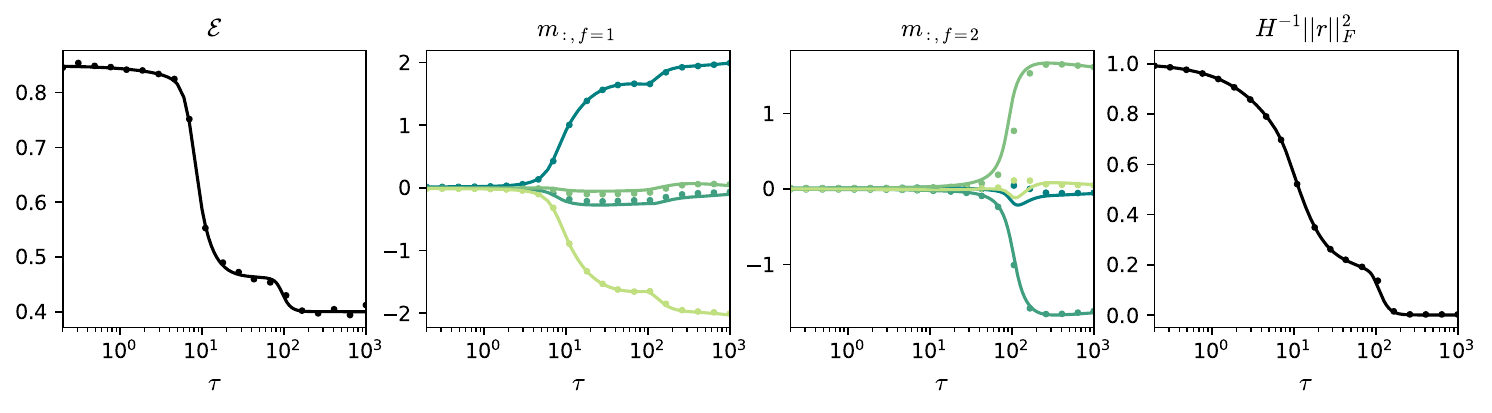}
    \caption{\label{fig:compDfiniVsFlot_suppsuppsupp} Asymptotic description of the attention trained by SGD. We compare numerical simulations at finite $D=10^4$ (dots) and the theoretical description stated in proposition~\ref{res:sgd} (continuous lines). We consider sequence length $L=5$, $H=4$ heads, $F=2$ features and $\theta$ distributed according to the non-isotropic Gaussian distribution, with signal strengths $\nu_1=8, \nu_2=2$. Initialization $\eta=1$. Top: softmax; middle: softmax-1; bottom: B-softmax.}
\end{figure} 

\clearpage
\newpage

\subsection{Specialization of the heads}
\label{sec:specialization_vs_nu}
We provide an additional figure illustrating the specialization of the heads for the flipping spike distribution, depending on the signal strength $\nu_1$ of the average direction and the signal strength $\nu_2$ of the flipping-sign direction. The specialization of the two heads is measured as their cosine similarity and is reported in Fig.~\ref{fig:cosin_by_SNR} for the softmax. It shows that the specialization grows monotonically with $\nu_1$ and decreases monotonically with $\nu_2$, as expected.

\begin{figure}[h!]
    \centering
    \includegraphics[width=0.75\linewidth]{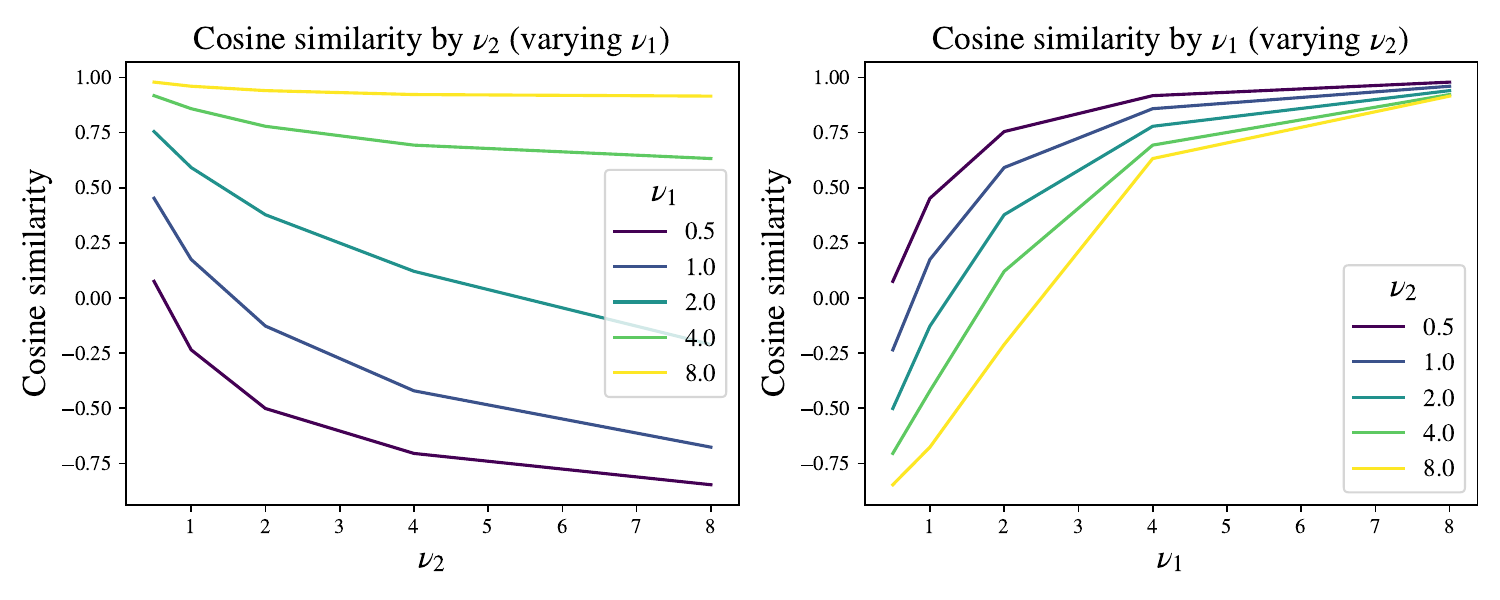}
    \caption{Cosine similarity between heads for softmax attention with $H=2$, and flipping spike distribution with $F=2$, $L=4$ depending on the signal strengths for the constant and flipping-sign directions. }
    \label{fig:cosin_by_SNR}
\end{figure}

Figure \ref{fig:specialization_vs_H_4_panels} shows that when increasing the number of heads $H$, some heads specialize and diverge more from the average direction, more than at lower $H$.
\begin{figure}[h!]
    \centering
    \includegraphics[width=\linewidth]{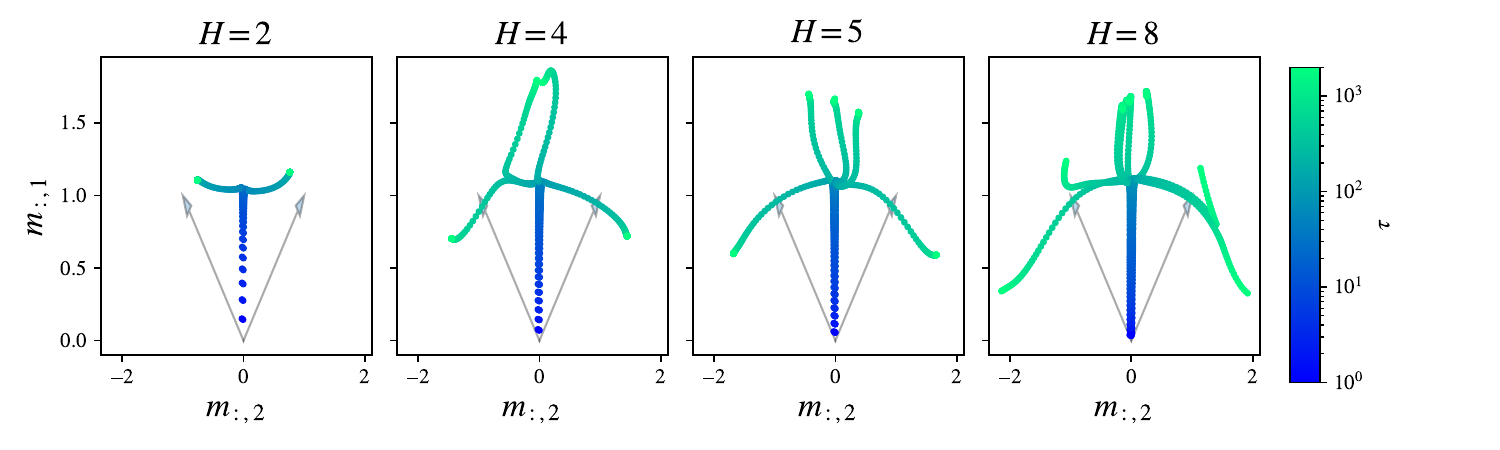}
    \caption{Evolution of the heads in the model of attention with varying number of heads $H\in \{2, 4, 5, 8\}$, and flipping spike distribution with $F=2$, $L=4$, with signal strengths $\nu_1=\nu_2=2$. $\eta=1$.}
    \label{fig:specialization_vs_H_4_panels}
\end{figure}

\clearpage
\newpage

\subsection{Sequential specialization}

We provide Figs.~\ref{fig:ellipsoide_softmax} and \ref{fig:hierarchical3D_bis} that complement the description of the specialization dynamics of section \ref{sec:dynamics}, showing the specialization of the heads for an anisotropic Gaussian distribution for different $H$ or the different activation functions. The three different runs in Fig.~\ref{fig:hierarchical3D_bis} show that the behaviour of the heads is similar across initializations.

\begin{figure}[h!]
    \centering
    \includegraphics[width=0.75\linewidth]{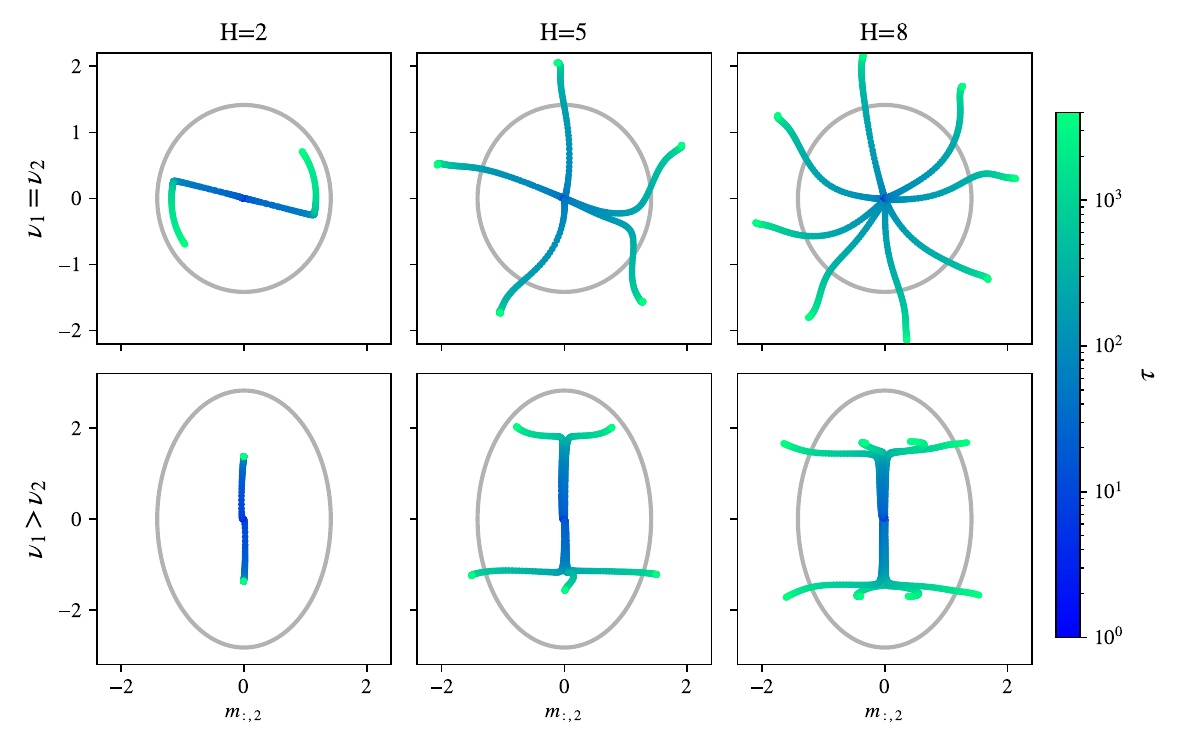}
    \caption{Evolution of the heads attention with varying number of heads $H\in \{2, 5, 8\}$, and non-isotropic Gaussian distribution with $F=2$, $L=4$, with signal strengths $\nu_1=\nu_2=2$ (top) and $\nu_1=8, \nu_2=2$ (bottom). $\eta=1$.}
    \label{fig:ellipsoide_softmax}
\end{figure}

\begin{figure}[h!]
    \centering
    \includegraphics[width=0.92\linewidth]{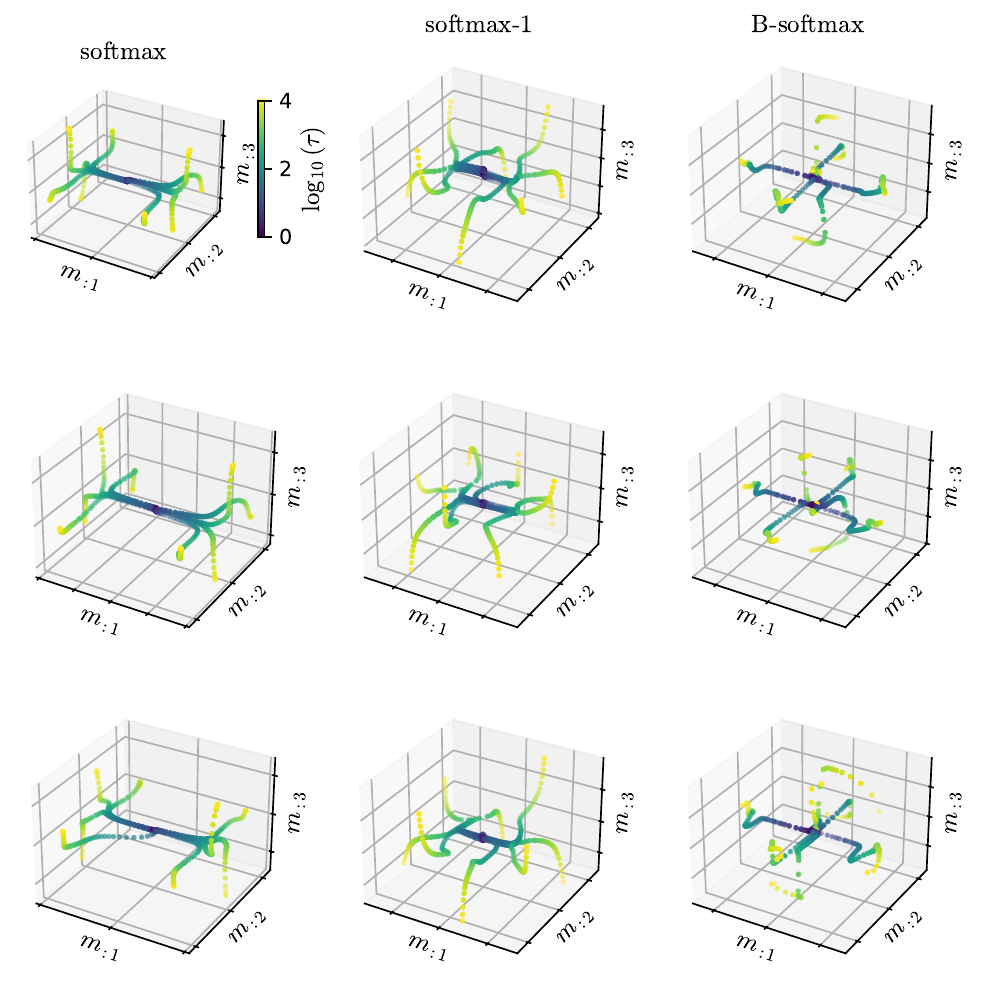}
    \caption{\label{fig:hierarchical3D_bis} Evolution of the $H=8$ heads for the non-isotropic Gaussian distribution at $F=3$. The 2 lines are 2 different runs with different initial conditions.
    }
\end{figure}

\clearpage
\newpage

\subsection{Comparison of the different activations after training}
\label{secApp:activations_comp}
In this section we provide further elements of comparison of the different activations after training, complementing the discussion of part~\ref{sec:activations}.

The attention maps of the different attentions are shown in Fig.~\ref{fig:attentionMaps} for a few sequences drawn from the isotropic Gaussian distribution. The attention map of the softmax is noisy: the heads that do not focus on the relevant token have to focus on irrelevant ones, while the softmax-1 reduces attention scores of the ``irrelevant'' heads, and this effect is even more prominent for the B-softmax.

The softmax-1 and B-softmax rely on the multiple heads in a more optimal and specialized way. To see this, we perform a head pruning experiment and compare the importance of the heads. The pruning of the trained heads is done in a greedy manner and we uniformly rescale the output to keep constant the amplitude of the attention scores. The results of the experiment are reported on Fig.~\ref{fig:ablationHeads}. Similarly to previous empirical studies \cite{michel2019sixteenheadsreallybetter, voita2019multiheadspecialized}, we observe that a substantial number of heads can be removed without seriously affecting performance. In our model, in the case of a flipping spike distribution over $F$ spikes or an isotropic Gaussian in dimension $F$, the number of heads $\tilde H$ that can be pruned without significant loss of performance is close to $H - F$; i.e. we can keep one head per each hidden feature approximately. If one prunes more heads $\tilde H>H-F$ and removes the heads that are actually necessary for the inference, the softmax-1 and the B-softmax behave differently from the standard softmax. Their performances degrade more severely than softmax and with a larger variance over the repeated runs. This suggests that attention with softmax-1 or B-softmax activation strongly relies on all the necessary heads together, and that these are strongly specialized.

\begin{figure}[ht]
  \centering
  \begin{minipage}{0.485\textwidth}
    \centering
    \includegraphics[width=0.95\linewidth]{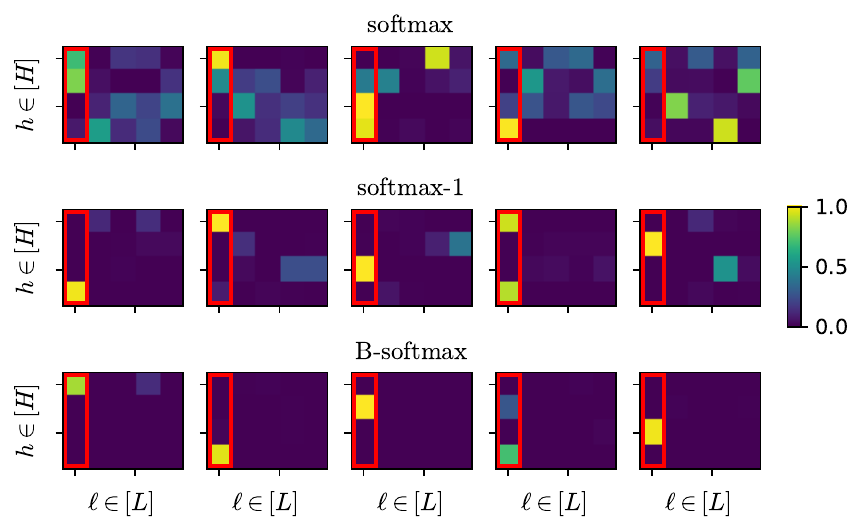}
    \caption{\label{fig:attentionMaps} Attention maps of the different activation functions after training. $H=4$ heads, $L=5$ and the relevant token is highlighted in the red rectangle. We show the attention maps for five independent sequences. $F=3$, signal isotropic Gaussian of strength $\nu=9$.}
  \end{minipage}
  \hfill
  \begin{minipage}{0.485\textwidth}
    \centering
    \includegraphics[width=0.99\linewidth]{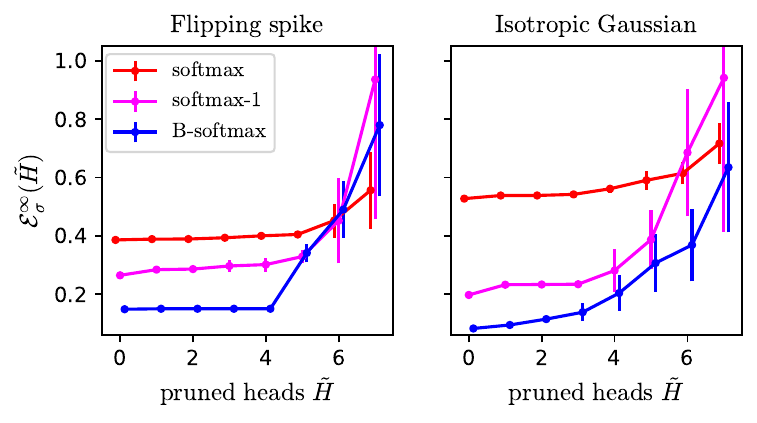}
    \caption{\label{fig:ablationHeads} Pruning head experiment. Error $\mathcal{E}^\infty_\sigma(\tilde H)$ of the different activation functions after training, after pruning $\tilde H$ heads over $H=8$ total heads. $L=5$, $F=4$, $\nu=10$. We performed five training runs with different initial conditions.}
  \end{minipage}
\end{figure}

\end{document}

%% file: appendix.tex
\section{Proofs of propositions and lemma.}
In this part we give the different proofs and justifications for our theoretical results of parts \ref{sec:dynamics} and \ref{sec:activations}.

\subsection{Low-dimensional characterization of SGD}
\label{secApp:low_dim_sgd}
We start by proving the asymptotic characterization of the training dynamics in terms of the order parameters.

\begin{proof}[Proof of Proposition \ref{res:paramLoss}]
We first derive the order parameters and the expression for the reparameterized loss.
Notice that in the high-dimensional limit when $D\to\infty$, the hidden spikes directions $k^{*}_f$ are almost orthogonal. Then parameters of the attention model $k_{1},\ldots, k_{H}$ can be expressed as follows:
\begin{equation}
    k_{h} = \sum_{f=1}^F m_{hf}k^{*}_f + \sum_{h'=1}^H r_{hh'}k^{\perp}_{h'}\ , 
\end{equation}
where $\{k^{\perp}_{h'}\}_{h'\in[H]}$ are vectors orthogonal to $\{k^{*}_f\}_{f\in[F]}$ and between each other, and
\begin{align}
& m_{hf} =(k_h)^\top k_f^*\ ,\quad h\in [H], \; f\in [F] && q_{hh'} =(k_h)^\top k_{h'}\ ,\quad h,h'\in [H] \\
& p_{ff'} = (k_f^*)^\top k_{f'}^*\approx \delta_{f,f'}\ ,\quad f,f'\in [F] && r = (q-mp^{-1}m^\top)^{\sfrac{1}{2}}
\end{align}

Thus, the pre-activations $\chi_{h}$ of the attention head $h$ can be expressed as
\begin{equation}
    \chi_{h} = \sum_{f=1}^Fm_{hf}Xk^{*}_f + \sum_{h'=1}^Hr_{hh'}Xk^{\perp}_{h'},
\end{equation}
or, if we denote $\chi^{*}_f=Xk^{*}_f$ and $\xi_{h'} = Xk^{\perp}_{h'}$
\begin{equation} \label{6_eq:preactivation}
    \chi_{h} = \sum_{f=1}^F m_{hf}\chi^{*}_f + \sum_{h'=1}^H r_{hh'}\xi_{h'}.
\end{equation}

The risk depends on the parameters only via the projections $\chi_{h}$, and in the high-dimensional limit the projections $\chi^{*}_f$ and $\xi_{h'}$ are normally distributed, such that all elements of the vectors are independent, $\chi^{*}_{f\ell}\sim\mathcal{N}(\delta_{\ell, \epsilon^*}\theta_f, 1)$, and $\xi_{h'\ell}\sim\mathcal{N}(0, 1)$.

We can now express the loss in terms of the order parameters. We begin by explicitly writing the output of the attention model and expanding the norm squared:
\begin{align}
  &\mathcal{E}_\sigma(k, b, v) = \frac{1}{D}\mathbb E_{\epsilon,\theta,X}\left[\|X_{\epsilon}\|^2_2 - \frac{2}{H}\sum_{h=1}^H \sigma(\chi, b, v; h)^TXX_{\epsilon}\right. \\
& \qquad\left.{}+ \frac{1}{H^2}\sum_{h=1}^H \sum_{h'=1}^H\sigma(\chi, b, v; h)^TXX^T\sigma(\chi, b, v; h')\right]. \nonumber
\end{align}
This can be simplified using independence of the tokens $X_\ell$ and noticing that $\mathbb E_{X}\|X_\ell\|^2_2 \to D$ when $D\to\infty$.
\begin{align}
  &\mathcal{E}_\sigma(k, b, v) = \frac{1}{D}\mathbb E_{\epsilon,\theta,X}\left[\|X_{\epsilon}\|^2_2 - \frac{2}{H}\sum_{h=1}^H \sigma(\chi, b, v; h)_\epsilon\|X_{\epsilon}\|^2_2 \right. \\
  & \qquad\left.{}+ \frac{1}{H^2}\sum_{h=1}^H \sum_{h'=1}^H\sum_{\ell=1}^L\sigma(\chi, b, v; h)_\ell\sigma(\chi, b, v; h')_\ell\|X_\ell\|^2_2\right] \nonumber \\
    & = \mathbb E_{\epsilon,\theta,X}\sum_{\ell=1}^L \left(\delta_{\ell, \epsilon} - \frac{1}{H}\sum_{h=1}^H \sigma(\chi, b, v; h)_\ell\right)^2.
\end{align}

Using the expression \eqref{6_eq:preactivation}, we get the final expression for $\tilde{\mathcal{E}}$.
\end{proof}

\subsection{Description of the dynamics}
\label{secApp:phases}

\subsubsection{Sample complexity of SGD}
\label{secApp:sampleComplexitySGD}
In this section we formally prove the convergence of the trajectories of the order parameters under SGD to the GF in the limit and derive sample complexity of weak recovery with SGD using Theorem 2.3 from \cite{arous2023highdimensionallimittheoremssgd}.

To do so, we should prove that the triple: summary statistics (order parameters) $\mathbf{u}^D=(\{m_{hf}\}_{h\in[H], f\in[F]}, \{r_{hh'}\}_{h, h'\in[H]},$ $\{b_h\}_{h\in[H]}, v)$, loss function \eqref{eq:empirical_loss} and data distribution $P_D$(as defined in section \ref{sec:data_distribution}) is $\gamma$-localizable according to Definition 2.1 in \cite{arous2023highdimensionallimittheoremssgd}. And find $\mathbf{h}:\mathbb{R}^{k}\to \mathbb{R}^{k}$ and $\mathbf{\Sigma}:\mathbb{R}^{k}\to \mathbb{R}^{k\times k}$ (where $k=HF + HH + H + 1$) such that the family of summary statistics is asymptotically closable with effective drift $\mathbf{h}$, and effective volatility $\mathbf{\Sigma}$.

Following \cite{arous2023highdimensionallimittheoremssgd}, we denote $$H(\{X^\mu, y^\mu\}_{\mu=1}^{N_b}; k, b, v) = \mathcal{L}(\{X^\mu, y^\mu\}_{\mu=1}^{N_b}; k, b, v) - \mathcal{E}(k, b, v).$$ We omit the activation function $\sigma$ to lighten the notation. We also denote $J_{D}=\nabla \mathbf{u}^{D}$, $V(x) = \mathbb{E}[\nabla H(x) \otimes \nabla H(x)]$.

\begin{lemma}[Moments of loss deviation partial derivatives]
\label{resApp:LossDeviationMoments}
    Let $\theta$ be some parameter $\theta \in \{k_{hd}\}_{h\in[H], d\in[D]}\cup\{b_h\}_{h\in[H]}\cup\{v\}$. Denote 
    $$Y_{\theta} = \frac{1}{D}\partial_{\theta}\|y - \hat{y}_{\sigma, k, b, v}(X)\|^2,$$
    where $\hat{y}$ is given by \ref{eq:attention_model}, for some fixed values of $k, b, v$. This random variables have finite absolute moments:
    $$\mathbb{E}[|Y_{\theta}|^{q}] = O(1), \quad q\in\mathbb{N}.$$
    Moreover, even moments of partial derivatives of $H$ are bounded by:
    $$\mathbb{E}[(\partial_{\theta} H)^{2q}] = O(1/N_b^q).$$
\end{lemma}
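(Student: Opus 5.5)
We would prove the statement by computing $Y_\theta$ explicitly with the chain rule and bounding each factor using the fact that the activations are uniformly bounded and the tokens are Gaussian. Write $\hat y=\frac1H\sum_h \sigma(\chi,b,v;h)^\top X$, so that
\[
Y_\theta=-\frac{2}{D}\,(y-\hat y)^\top\partial_\theta\hat y=-\frac{2}{DH}\sum_h\sum_\ell \partial_\theta\sigma(\chi,b,v;h)_\ell\,(y-\hat y)^\top X_\ell .
\]
For softmax, softmax-1 and B-softmax, the entries $\sigma_\ell$ are bounded by a constant depending only on $H$ and $v$. The same holds for softmax-v, with $v_h$ in place of $v$. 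Their derivatives with respect to the scores $\chi_{h'\ell'}$, the biases $b_{h'}$ and $v$ are likewise bounded, since they are products of bounded normalized exponentials. For instance, $\partial_{\chi_{h\ell'}}\sigma_\ell=\sigma_\ell(\delta_{\ell\ell'}-\sigma_{\ell'})$ for softmax, and analogous formulas hold for the other activations.

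\textbf{Bias and value parameters.} For $\theta\in\{b_h\}\cup\{v\}$, $\partial_\theta\hat y=\sum_\ell c_\ell X_\ell$ with $|c_\ell|\le C$. Hence $|Y_\theta|\le \frac{C}{D}\sum_{\ell,\ell'}|X_{\ell'}^\top X_\ell|+\frac{C}{D}\sum_\ell|X_\epsilon^\top X_\ell|$. Each ratio $X_\ell^\top X_{\ell'}/D$ has all moments $O(1)$: for $\ell=\ell'$ it is a normalized noncentral $\chi^2_D$ with noncentrality $\|\hat k\|^2=O(1)$, and otherwise it is $O(D^{-1/2})$ in $L^q$. Since $L$ is fixed, Hölder's inequality gives $\mathbb E|Y_\theta|^q=O(1)$.

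\textbf{Key coordinates.} For $\theta=k_{hd}$ we have $\partial_{k_{hd}}\chi_{h\ell'}=X_{\ell' d}$. This gives
\[
Y_{k_{hd}}=-\frac{2}{DH}\sum_{\ell,\ell'}\partial_{\chi_{h\ell'}}\Big(\sum_{h''}\sigma_{h''\ell}\Big)\,X_{\ell'd}\,(y-\hat y)^\top X_\ell .
\]
The factor $(y-\hat y)^\top X_\ell/D$ was just shown to have $O(1)$ moments. The scalar $X_{\ell'd}$ is Gaussian with mean $\hat k_d=O(D^{-1/2})$ and unit variance, so its moments are $O(1)$ uniformly in $d$. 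Cauchy–Schwarz or Hölder then yields $\mathbb E|Y_{k_{hd}}|^q=O(1)$.

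\textbf{Bound on the loss deviation.} The partial derivative $\partial_\theta H=\frac1{N_b}\sum_\mu\big(Y^\mu_\theta-\mathbb E Y_\theta\big)$ is an average of $N_b$ i.i.d.\ centered variables. Differentiation and expectation may be exchanged by dominated convergence, using the integrable bounds above. Each centered summand $Z_\mu$ has $O(1)$ moments of every order. Expanding $\mathbb E(\sum_\mu Z_\mu)^{2q}$, every term in which some index appears only once vanishes. The surviving terms number $O(N_b^{q})$, and each is $O(1)$ by Hölder. Equivalently, one may invoke the Marcinkiewicz–Zygmund or Rosenthal inequality. Dividing by $N_b^{2q}$ gives $O(N_b^{-q})$.

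\textbf{Main obstacle.} The hardest step is making the bounds uniform in all the parameters. The constants depend on $k,b,v$ only through $\sup|\sigma|$ and $\sup|\partial\sigma|$. For softmax-1 and softmax-v these grow linearly in $|v|$ and $\max_h|v_h|$ respectively, which is fine for fixed parameters or on compacts. A second point needing care is the B-softmax derivative in $\chi$. Its denominator couples all heads, so I would check directly that $\partial_{\chi_{h'\ell'}}\frac1H\sum_h\sigma_{h\ell}$ equals a difference of two convex weights, and is therefore bounded by $2$.
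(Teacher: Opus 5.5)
Your proposal is correct and follows essentially the same route as the paper. Both use the chain rule to write $Y_\theta$ as uniformly bounded activation-derivative coefficients times low-degree Gaussian polynomials of the form $X_{\ell' d}\,X_\ell^\top X_{\ell''}/D$, with $O(1)$ moments, and then take the moment expansion of an average of $N_b$ i.i.d.\ centered variables; your version is slightly more careful than the paper's, since you count surviving terms as those where every index appears at least twice (giving $O(N_b^q)$ terms) rather than only exact pairings, and you justify exchanging derivative and expectation.
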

\begin{proof}
    The expressions for partial derivatives of the loss are
    \begin{align}
        \partial_{k_{hd'}} \frac{1}{D}\partial_{\theta}\|y - \hat{y}_{\sigma, k, b, v}(X)\|^2 = &\frac{2}{D}\sum_{d=1}^D \sum_{\ell=1}^LX_{\ell d}\left(\frac{1}{H}\sum_h\sigma_{\ell}(X; h) - \delta_{\ell\epsilon}\right)\\
        &\sum_{\ell'=1}^LX_{\ell' d}\left(\frac{1}{H}\sum_{h'}\partial_{k_{hd'}}\sigma_{\ell'}(X; h')\right) \\
        = &\sum_{l, l', l''}^L c_{h}(X^\mu; l, l', l'') X_{d'l''}\left(\frac{1}{D}\sum_{d}^DX_{dl}X_{dl'}\right),\\
        \partial_{b_{h}} \frac{1}{D}\partial_{\theta}\|y - \hat{y}_{\sigma, k, b, v}(X)\|^2 
        &= \sum_{l, l'}^L c_{h}(X; l, l') X_{d'l'}\frac{1}{D}\left(\sum_{d}^D X_{dl}\right),
    \end{align}
    where $c_{h}(X; l, l'), c_{h, d'}(X; l, l', l'')$ are bounded by some constant $c_\sigma$ due to the activation $\sigma$ being bounded for fixed values of $v$ and $b_h$.
    
    Notice, that random variables $Y_\theta$ has the form $\frac{1}{D}\sum_d^D A_d$. We can estimate $\mathbb{E}|\frac{1}{D}\sum_d^D A_d|^q \leq \frac{1}{D^q} \sum_{d_1}^D\ldots \sum_{d_q}^D \mathbb{E}\left[\prod_{i=1}^q|A_{d_i}|\right]$.
    In our case $|A_{d_i}|\leq c_\sigma|X_{ld}X_{l'd}X_{l''d'}|$ or $|A_{d_i}|\leq c_\sigma|X_{ld}X_{l'd'}|$, where $X_{ld}$ are normally distributed. In both cases, $\mathbb{E}\left[\prod_{i=1}^q|A_{d_i}|\right] \leq C_\sigma \mathbb{E}\left[\prod_i|X_{l_id_i}X_{l'_id_i}X_{l''_id'_i}|\right](\text{ or }\mathbb{E}\left[\prod_i|X_{l_id_i}X_{l'_id'_i}|\right]) = O(1)$. 
    Summing all this expectations, we get $\mathbb{E}|Y_\theta|^q = O(1)$.

    Notice that $\partial_{\theta} H $ is an average of $N_b$ independent centered random variables $Y^\mu_{\theta}-\mathbb{E} Y^\mu_{\theta}$ for $\mu\in[N_b]$. Then
    \begin{align}
        \mathbb{E}[(\partial_{\theta} H)^{2q}] &= \frac{1}{N_b^{2q}} \sum_{\mu_1}^{N_b}\ldots\sum_{\mu_{2q}}^{N_b}\mathbb{E}\left[\prod_i^{2q}Y^{\mu_i}\right] \\
        &= \frac{1}{N_b^{2q}} \sum_{\mu_1}^{N_b}\ldots\sum_{\mu_{k}}^{N_b}\mathbb{E}\left[\prod_i^{q}(Y^{\mu_i})^2\right] \\
        &= \frac{1}{N_b^{2q}} N_b^q O(1) = O(1/N_b^q),
    \end{align}
    where the second transition is due to $Y^{\mu_i}$ and $Y^{\mu_j}$ being independent when $\mu_i \neq \mu_j$, and centered. So, if the indices $\mu_1, \ldots, \mu_{2q}$ are not ``paired'', the term is zero in expectation.
\end{proof}

\begin{lemma}[Summary statistics are $\gamma$-localizable]
    The family of summary statistics $\mathbf{u}^D$ 
    \begin{align}
    \label{eqApp:sufficientStatistics}
        &\mathbf{u}^D = (\{m_{hf}\}_{h\in[H], f\in[F]}, \{r_{hh'}\}_{h, h'\in[H]}, \{b_h\}_{h\in[H]}, v),\\
        &\mathrm{where}\quad m_{hf} = \langle k_h, k^*_f\rangle, \quad r_{hh'} = \sqrt{\langle k_h, k_{h'}\rangle - \sum_f m_{hf}m_{h'f}}
    \end{align}
    is such that the triple ($\mathbf{u}^D$, $\mathcal{L}$, $P_D$) is $\gamma$-localizable with sequence of compacts $E_K = \bar{B}_K(0)\setminus B_{1/K}(0)\in\mathbb{R}^{k}$.
\end{lemma}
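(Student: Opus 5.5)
To prove this, I check the three conditions of Definition~2.1 in \cite{arous2023highdimensionallimittheoremssgd} for the triple $(\mathbf{u}^D,\mathcal L,P_D)$ on each compact $E_K=\bar B_K(0)\setminus B_{1/K}(0)$. They are: (i) $\mathbf u^D$ is $C^3$ near the preimage of $E_K$, with first derivatives $O(1)$, second derivatives $O(1)$ in operator norm (so of order $1/\sqrt D$ per entry in the relevant directions), and third derivatives $O(1)$; (ii) the population gradient $\nabla\mathcal E$ is $O(1)$ in norm and $\langle\nabla^2\mathbf u^D,V\rangle=O(1)$ uniformly there; (iii) the fourth moments of the projected noise satisfy $\mathbb E\langle\nabla H,\nabla u_i\rangle^4=O(\delta_D^{-2})$ and $\mathbb E\|\nabla H\|^4=O(D^2\delta_D^{-2})$, where $\delta_D=\gamma$ in the scaling $\gamma N_b^{-1}=o(D^{-1})$.

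\textbf{Step 1: regularity of the statistics.} The maps $m_{hf}=\langle k_h,k^*_f\rangle$, $b_h$ and $v$ are linear, so their gradients are $k^*_f$ (norm $\approx1$) or unit vectors, and all higher derivatives vanish. The only nontrivial statistic is $r=(q-mm^\top)^{1/2}$ (using $p\approx I_F$). The matrix square root is smooth on positive definite matrices but not at singular ones. This is exactly why the origin ball $B_{1/K}(0)$ is removed, and I would restrict $E_K$ further if needed so that the smallest eigenvalue of $q-mm^\top$ is bounded below by some $c_K>0$. On that set, $q$ and $mm^\top$ are quadratic in $k$ with $O(1)$ Hessians ($2I$ tensored with index structure). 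The derivatives of the square root are bounded by powers of $c_K^{-1}$. So by the chain rule, all derivatives of $r$ up to order three are $O_K(1)$ in operator norm, which gives condition (i).

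\textbf{Step 2: drift and volatility terms.} The population gradient satisfies $\nabla_{k_h}\mathcal E=\mathbb E\big[\tfrac1D\partial_{k_h}\|y-\hat y\|^2\big]$, and by Proposition~\ref{res:paramLoss} it is the gradient of $\tilde{\mathcal E}_\sigma$ composed with the chain rule through $(m,r)$. This is bounded because $\tilde{\mathcal E}_\sigma$ is smooth with bounded derivatives: $\sigma$ is bounded and has bounded derivatives for $(b,v)$ in a compact set. For the volatility term, I write $\langle\nabla^2 u_i,V\rangle=\mathbb E[\nabla H^\top\nabla^2u_i\nabla H]$. For linear $u_i$ this vanishes. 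For $r$ it reduces to traces of $V$ against $O(1)$ matrices, and these can be bounded by $\sum_d\mathbb E(\partial_{k_{hd}}H)^2=O(D/N_b)$ using Lemma~\ref{resApp:LossDeviationMoments}. Multiplied by $\delta_D$, this is $o(1)$ in the stated regime.

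\textbf{Step 3: noise moments.} Lemma~\ref{resApp:LossDeviationMoments} gives $\mathbb E(\partial_\theta H)^{2q}=O(N_b^{-q})$ coordinatewise. For $\|\nabla H\|^4=(\sum_d(\partial_{k_{hd}}H)^2)^2$, Cauchy--Schwarz over the $D^2$ pairs gives $O(D^2/N_b^2)$, which is within the required bound. For the projected noise $\langle\nabla H,k^*_f\rangle$, a naive Cauchy--Schwarz bound loses a factor of $D$. Instead, I would compute directly that $\langle\nabla H,k^*_f\rangle$ is again an average over the batch of centered variables of the form $\tfrac1D\sum_d(\cdots)$ times $\chi^*_{f\ell}$, which has $O(1)$ moments. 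This gives $O(N_b^{-2})$.

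I expect this projected fourth-moment bound for the $r$-coordinates, together with the lower bound on the spectrum of $q-mm^\top$ needed for the square root, to be the main obstacle. I would handle it by expressing $\nabla r$ as a bounded linear combination of the $k_{h'}$ and $k^*_f$, so that each projection reduces to scalar quantities $Xk_{h'}$ with Gaussian tails.
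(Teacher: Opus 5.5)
Your plan is correct in substance and follows the same skeleton as the paper's proof. Both verify the three properties of Definition~2.1 in \cite{arous2023highdimensionallimittheoremssgd} using the coordinatewise moments of Lemma~\ref{resApp:LossDeviationMoments} plus Cauchy--Schwarz. It departs from the paper in two places.

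\textbf{Projected noise.} The paper uses exactly the naive bound you want to avoid, $\langle\nabla H,\nabla m_{hf}\rangle^4\le C_K D\sum_d(\partial_{k_{hd}}H)^4$, which gives $\mathbb E\langle\nabla H,\nabla u_i\rangle^4=O(D^2N_b^{-2})$. This already meets the requirement $C\gamma^{-2}$, because $\gamma D/N_b=o(1)$ in the scaling of Proposition~\ref{res:sgd}. So what you call the main obstacle is not one. Your direct computation is correct and sharper, $O(N_b^{-2})$: the per-sample gradient is a bounded combination of the $X_{\ell'}$ with coefficients $D^{-1}\langle X_\ell,\hat y-y\rangle$, so projecting on $k^*_f$ or $k_{h'}$ yields $\chi^*_{f\ell'}$ or $\chi_{h'\ell'}$. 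It gives slack, e.g.\ under the rescaling $m\mapsto D^{\zeta}m$ of Lemma~\ref{resApp:sgdLimit}, but it is not needed here.

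\textbf{Regularity.} You take the matrix square root of Proposition~\ref{res:paramLoss} and impose $\lambda_{\min}(q-mm^\top)\ge c_K$. The lemma instead uses entrywise roots and $E_K=\bar B_K(0)\setminus B_{1/K}(0)$. Yours is the better-posed version. The paper's formulas for $\nabla^2 r_{hh'}$ and $\nabla^3 r_{hh'}$ carry factors $1/r_{hh'}$ and $\langle k_h^\perp,k_{h'}^\perp\rangle^{-2}$. Removing a ball around the origin does not control these: $v=1$ already keeps $\mathbf u$ outside $B_{1/K}(0)$ while $r$ can be nearly singular. Moreover, off-diagonal entrywise overlaps are $\approx0$ at random initialization, i.e.\ exactly at the square-root singularity. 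Your extra restriction is therefore what actually makes property~(1) hold. The cost is that you prove localizability for a different localizing sequence than the one stated.

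\textbf{Your list of conditions needs fixing.} Property~(2) asks for $\mathbb E\|\nabla H\|^8\le C(K)\gamma^{-4}$, not a fourth-moment bound with a $D^2$ factor. Property~(3) also requires $\gamma^3\,\mathbb E\langle\nabla^2u_i,\nabla H\otimes\nabla H-V\rangle^2=o(1)$, which you never check. The mean $\langle\nabla^2u_i,V\rangle$ that you bound in Step~2 belongs to the drift/closability computation of Lemma~\ref{resApp:high_dim_limit}, not to localizability. Both missing conditions follow immediately from your ingredients, as in the paper:
\begin{itemize}
\item Cauchy--Schwarz gives $\mathbb E\|\nabla H\|^8=O(D^4N_b^{-4})$, which is $O(\gamma^{-4})$ since $\gamma D/N_b=o(1)$.
\item $|\langle\nabla^2u_i,\nabla H\otimes\nabla H\rangle|\le\|\nabla^2u_i\|_{\mathrm{op}}\|\nabla H\|^2$, combined with your $O(1)$ operator-norm bound, gives a second moment $O(D^2N_b^{-2})$. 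Hence $\gamma^3\cdot O(D^2N_b^{-2})=\gamma\cdot O((\gamma D/N_b)^2)=o(1)$.
\end{itemize}
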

\begin{proof}
\textbf{Property (1).} Second derivative of the summary statistics in not zero only for $r_{hh'}$:
\begin{align}
        \nabla_{k_h} r_{hh'} &= \frac{1}{2r_{hh'}}(k_{h'} - \sum_f m_{h'f}k^*_f) \\
        \nabla^2_{k_h, k_h} r_{hh'} &= -\frac{1}{2r^2_{hh'}}(k_{h'} - \sum_f m_{h'f}k^*_f)(k_{h'} - \sum_f m_{h'f}k^*_f)^T = -\frac{1}{2\scprod{k^{\perp}_{h}}{k^{\perp}_{h'}}^2} k^{\perp}_{h'}(k^{\perp}_{h'})^T\\
        \nabla^2_{k_h, k_{h'}} r_{hh'} &= \frac{1}{2\scprod{k^{\perp}_{h}}{k^{\perp}_{h'}}}(I_D - \sum_f k^*_f(k^*_f)^T) -\frac{1}{2\scprod{k^{\perp}_{h}}{k^{\perp}_{h'}}^2} k^{\perp}_{h'}(k^{\perp}_{h})^T\\
        \nabla^3_{k_h} r_{hh'} &= \frac{1}{\scprod{k^{\perp}_{h}}{k^{\perp}_{h'}}^3} (k^{\perp}_{h'})^{\otimes 3} \\
        \nabla^3_{k_h, k_h, k_{h'}} r_{hh'} &= -\frac{1}{2\scprod{k^{\perp}_{h}}{k^{\perp}_{h'}}^2_{hh'}}(I_D - \sum_f k^*_f(k^*_f)^T)\otimes k^{\perp}_{h} +\frac{1}{\scprod{k^{\perp}_{h}}{k^{\perp}_{h'}}^3} k^{\perp}_{h'} \otimes k^{\perp}_{h'} \otimes k^{\perp}_{h},
    \end{align}
where we denote $k^\perp_h = k_{h} - \sum_f m_{hf}k^*_f$
We get that the operator norm is bounded by some constant $C_K$ as soon as $\|k^{\perp}_h\|$ is bounded, then it is bounded on $E_K = \bar{B}_K(0)\setminus B_{1/K}(0)$.

\textbf{Property (2).} First, notice that 
$$\|\nabla\Phi\|^2 \leq \sum_{u\in\mathbf{u}_D}|\partial_{u}\Phi|\|\nabla u\|^2.$$
Here, $\|\nabla u\|^2$ is bounded by constant on the compact. And $\Phi(\mathbf{u}_D)$ is such that $$\partial_u\Phi = \mathbb{E}_{\epsilon, \theta, \chi_D, \xi_D}\sum_\ell\phi_\ell(\epsilon, \theta, \chi_D, \xi_D; \mathbf{u}_D)(\delta_{u = m_{hf}}(\chi_D)_\ell + \delta_{u = r_{hf}}(\xi_D)_\ell + \delta_{u = b_h, v}),$$ where $|\phi_\ell(\epsilon, \theta, \chi_D, \xi_D; \mathbf{u}_D)| = O(1)$ is bounded and random variables $\chi_D, \xi_D$ weakly converge to normal distribution $\chi_D, \xi_D \to \mathcal{N}(0, 1)$, therefore $|\partial_u \Phi(\mathbf{u}_D)| = O(1)$.

Now, to bound $\mathbb{E}[\|\nabla H\|^8]$, by applying Cauchy–Schwartz inequality twice, we get
\begin{align}
    \|\nabla H\|^8 = \left(\sum_{\theta}(\partial_\theta H)^2\right)^4\leq p_D^3 \sum_{\theta=1}^{p_D}(\partial_\theta L_D)^8 
\end{align}
and using \ref{resApp:LossDeviationMoments}, we have 
\begin{align}
    \mathbb{E}[(\partial_\theta H)^8] = O\left(\frac{1}{N_b^4}\right),
\end{align}
Therefore we get $\mathbb{E}[\gamma^4\|\nabla L_D\|^8 ]=O(1)$ and the required bound is satisfied.

\textbf{Property (3).} 
Notice that $\scprod{\nabla H}{\nabla (b_h/v)}^4 = (\partial_{b_h/v}H)^4 = O(1)$, while 
\begin{align}
    \scprod{\nabla H}{\nabla (m_{hf}/r_{hh'})}^4 &= \left(\sum_d (k^*_{fd}/k^{\perp}_{h'd}) \partial_{k_{hd}}H\right)^4 \\
    &\leq ((\|k^*\|^2/\|k^{\perp}_{h'd}\|^2)\sum_d(\partial_{k_{hd}}H)^2)^2 \leq C_K D \sum_d (\partial_{k_{hd}}H)^4,
\end{align}
and applying lemma \ref{resApp:LossDeviationMoments}, we get $\gamma^2\mathbb{E}\scprod{\nabla H}{\nabla (b_h/v)}^4 \leq \gamma^2O(D^2N_b^{-2}) = O(1)$.

Second derivative is only non-zero for the statistics $r_{hh'}$ and we get 
\begin{align}
    |\scprod{\nabla^2u}{\nabla H\otimes \nabla H }| &= |\text{Tr}(\nabla^2u (\nabla H\otimes \nabla H))| \\ 
    &= \frac{1}{2r_{hh'}} (\scprod{k^{\perp}_h}{\nabla H}^2 + \scprod{k^{\perp}_{h'}}{\nabla H}^2 + 2\scprod{k^{\perp}_h}{\nabla H}\scprod{k^{\perp}_{h'}}{\nabla H}) \\
    & + \frac{1}{r_{hh'}}|(\|\nabla H\|^2 - \sum_f \scprod{k^*_f}{\nabla H}^2)| \\
    &\leq \frac{2}{r_{hh'}} (\max\{\scprod{k^{\perp}_h}{\nabla H}, \scprod{k^{\perp}_{h'}}{\nabla H}\})^2 + \frac{1}{r_{hh'}}(1 + \sum_f\|k^*_f\|^2)\|\nabla H\|^2,
\end{align}
Notice that both $\mathbb{E} \scprod{\nabla^2u}{\nabla H\otimes \nabla H } = O(DN_b^{-1})$ and $\mathbb{E} \scprod{\nabla^2u}{\nabla H\otimes \nabla H }^2 = O(D^2N_b^{-2})$ (using that $(a+b)^2 \leq 2(a^2 + b^2)$). Thus, we get $\gamma^3 \mathbb{E} \scprod{\nabla^2u}{\nabla H\otimes \nabla H -V }^2 = \gamma O(1) = o(1)$.

\end{proof}

\begin{lemma}[High-dimensional limit of SGD]
\label{resApp:high_dim_limit}
    The family of sufficient statistics \eqref{eqApp:sufficientStatistics} converges (in the sense defined in Theorem 2.3 from \cite{arous2023highdimensionallimittheoremssgd}) to the gradient flow \eqref{eq:dynEff} initialized at $m_{hf}=0, r_{hh'}=1, b_h=0, v=1$ for $h, h'\in[H], f\in[F]$ when $D\to\infty$. 
\end{lemma}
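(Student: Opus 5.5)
The plan is to apply Theorem 2.3 of \cite{arous2023highdimensionallimittheoremssgd}. That theorem states that if the triple $(\mathbf{u}^D,\mathcal L,P_D)$ is $\gamma$-localizable and the summary statistics are asymptotically closable with effective drift $\mathbf{h}$ and volatility $\mathbf{\Sigma}$, then $\mathbf{u}^D(\lfloor \tau/\gamma\rfloor)$ converges weakly to the solution of $d\mathbf{u}=\mathbf{h}(\mathbf{u})d\tau+\sqrt{\mathbf{\Sigma}}\,dW$, up to the exit time from the compacts $E_K$.

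Localizability was established in the previous lemma, so it remains to compute the generator applied to each statistic $u\in\mathbf{u}^D$. We will:
\begin{itemize}
\item identify $\mathbf{h}=-\nabla\tilde{\mathcal E}_\sigma$, matching the flow \eqref{eq:dynEff};
\item show that $\mathbf{\Sigma}\to 0$ in the scaling $\gamma N_b^{-1}=o(D^{-1})$;
\item check the initial condition.
\end{itemize}

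\textbf{Drift.} Write one SGD step, $u(k-\gamma\nabla\mathcal L)-u(k)=-\gamma\scprod{\nabla u}{\nabla\mathcal E}-\gamma\scprod{\nabla u}{\nabla H}+\frac{\gamma^2}{2}\scprod{\nabla^2u}{\nabla\mathcal L\otimes\nabla\mathcal L}+\dots$, where the martingale part $-\gamma\scprod{\nabla u}{\nabla H}$ has mean zero.

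For $m_{hf}$, $b_h$ and $v$ the Hessian $\nabla^2 u$ vanishes. The drift per unit $\tau$ is then $-\scprod{\nabla u}{\nabla\mathcal E}$, and since $\nabla u$ lies in the span of $k^*$ (or is a coordinate vector), the chain rule through Proposition~\ref{res:paramLoss} gives $-\partial_u\tilde{\mathcal E}_\sigma+o(1)$. Here we use $p\to I_F$ and that $\chi^*,\xi$ converge to the Gaussians of Proposition~\ref{res:paramLoss}. Uniformity on $E_K$ follows from the boundedness of $\sigma$ and its derivatives, exactly as in Property (2).

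For $r_{hh'}$ there is an additional Itô-type correction $\frac{\gamma}{2}\mathbb E\scprod{\nabla^2 r}{\nabla\mathcal L\otimes\nabla\mathcal L}$. The estimates from Property (3) give $\mathbb E\scprod{\nabla^2u}{\nabla H\otimes\nabla H}=O(D/N_b)$, so this correction is $O(\gamma D/N_b)=o(1)$. The term $\scprod{\nabla^2 u}{\nabla\mathcal E\otimes\nabla\mathcal E}$ is $O(1)$ and, multiplied by $\gamma=o(1)$ or $O(1)$ with $\gamma D/N_b=o(1)$, likewise vanishes. Hence the drift of $r$ reduces to $-\scprod{\nabla r}{\nabla \mathcal E}$, which again equals $-\partial_r\tilde{\mathcal E}_\sigma$ by the chain rule. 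Differentiating with respect to $r=(q-mm^\top)^{1/2}$ rather than $q$ requires care: we will use that $\tilde{\mathcal E}$ depends on $k$ only through $(m,r)$ and parametrize $k_h=\sum_f m_{hf}k_f^*+\sum_{h'}r_{hh'}k^\perp_{h'}$ as in the proof of Proposition~\ref{res:paramLoss}.

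\textbf{Volatility and initialization.} The volatility is $\mathbf{\Sigma}=\lim \gamma\,\mathbb E[\scprod{\nabla u}{\nabla H}\scprod{\nabla u'}{\nabla H}]$. By Lemma~\ref{resApp:LossDeviationMoments} and Cauchy--Schwarz (as in Property (3)), this is $O(\gamma/N_b)$ for $b$ and $v$, and $O(\gamma D/N_b)\cdot D^{-1}$ times a constant for $m$ and $r$, since $\nabla m_{hf}=k_f^*$ has unit norm. In both cases it tends to 0, so the limit is the deterministic ODE \eqref{eq:dynEff}.

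At initialization, $m_{hf}=\scprod{k_h^{(0)}}{k_f^*}=O(\eta D^{-1/2})\to0$ and $q_{hh'}\to\eta^2\delta_{hh'}$. The lemma's stated values $r=1$ correspond to $\eta=1$; we will note this, and note that $r\in E_K$ is bounded away from 0, which keeps the trajectory inside $E_K$ up to a finite time. $\tilde{\mathcal E}$ is smooth with locally bounded derivatives, so the ODE is well posed and the exit time is a.s.\ positive.

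\textbf{Main obstacle.} The delicate step is the drift of $r$. It requires controlling the $\nabla^2 r$ correction uniformly and ensuring that $r$ stays away from the singular set $\{r=0\}$, where the bounds of the localizability lemma blow up. We will handle this by stopping at the exit time from $E_K$.
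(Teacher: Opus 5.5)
Your proposal follows the same route as the paper's proof. It applies Theorem~2.3 of \cite{arous2023highdimensionallimittheoremssgd} on top of the localizability lemma, shows that the It\^o correction $\gamma\mathcal L_D\mathbf u=O(\gamma D/N_b)$ and the volatility $\gamma JVJ^\top$ vanish, and identifies the drift with $-\nabla\tilde{\mathcal E}_\sigma$ through the chain rule.

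The step you single out is also where the paper is loose. The paper simply sets the Gram prefactor of $\nabla r_{hh'}$ to $1$, whereas for a square-root statistic $\scprod{\nabla r_{hh'}}{\nabla r_{gg'}}$ is not the identity in general. Freezing the orthonormal frame $k^\perp_{h'}$ at initialization, as your parametrization suggests, is what makes the step work. The coefficient matrix $r$ then becomes linear in $k$, so there is no It\^o term at all, and it obeys $\dot r=-\nabla_r\tilde{\mathcal E}_\sigma$ with the gradient taken over all $H\times H$ matrices. The price is that $r$ is no longer the symmetric square root; it only satisfies $rr^\top=q-mm^\top+o(1)$.

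One small correction: the flow limit requires $\gamma\to0$. The deterministic term $\tfrac{\gamma}{2}\scprod{\nabla^2 r}{\nabla\mathcal E\otimes\nabla\mathcal E}$ does not vanish for $\gamma=\Theta(1)$, so drop the claim that it vanishes for $\gamma=O(1)$.
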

\begin{proof}
Notice that $V$ consist of the entries of order $O(N_b^{-1})$ and $\partial_i\partial_j u(x)$ is non zero only for statistics $r_{hh'}$ but still bounded on the compact $E_K$, therefore $\gamma\mathcal{L}_D\mathbf{u} = \gamma O(D/N_b) = o(1)$.
Notice that either $\partial_\theta u(x)$ is of order $O(D^{-1/2})$ for $O(D)$ parameters $\theta$ or $\partial_\theta u(x)$ is of order $O(1)$ for exactly 1 parameter $\theta$ and for all other parameters $\theta$ $\partial_\theta u(x) = 0$. Thus, $\sum_{i,j}^D V_{ij}J_{ki}J_{lj} = O(D/N_b)$ and $\gamma JVJ^T = o(1)$.

Applying Theorem 2.3 from \cite{arous2023highdimensionallimittheoremssgd}, in the limit we have
$$\dot{\mathbf{u}} = \mathbf{h}(\mathbf{u}),$$
where $\mathbf{h}$ is such that:
$$\sup_{x\in\mathbf{u}_D^{-1}(E_K)} \|\mathbf{h}(\mathbf{u}_D(x)) - \mathcal{A}_D\mathbf{u}_D(x)\|\to 0.$$

Notice that
\begin{align}
    \mathcal{A}_D\mathbf{u}(x) = \sum_\theta \nabla_\mathbf{u}\Phi(\partial_\theta\mathbf{u)}^2
\end{align}
which corresponds to the following system of ODE's:
\begin{align}
    \dot{m}_{hf} &= -(\|k^*_f\|^2)_{D\to\infty}\partial_{m_{hf}}\Phi(\mathbf{u}) = -\partial_{m_{hf}}\Phi(\mathbf{u}) \\
    \dot{r}_{hh'} &= -\left(\frac{\|k^\perp_h\|^2\|k^\perp_{h'}\|^2}{\scprod{k^\perp_{h}}{k^\perp_{h'}}^2}\right)_{D\to\infty}\partial_{r_{hh'}}\Phi(\mathbf{u}) = - \partial_{r_{hh'}}\Phi(\mathbf{u})\\
    \dot{b}_{h} &= -\partial_{b_{h}}\Phi(\mathbf{u}) \\
    \dot{v} &= -\partial_v\Phi(\mathbf{u})
\end{align}
\end{proof}

\paragraph{Unspecialized phase Prop. \ref{res:1stPhase}.}
Assuming $\mathbb E\theta\neq 0$, according to Lemma \ref{resApp:invUnspMan} and \ref{resApp:gradI}, the space $\mathcal{M}_\mathrm{u}$ of unspecialized $m$, $r$ and $b$ is invariant by the gradient flow; and moreover the subspace of $m$ proportional to $\mathds{1}_H(\mathbb E\theta)^\top$ is also invariant by the gradient flow. At initialization $m\approx0$, $r\approx I_H$, $b=0$ are unspecialized; and the gradient of the loss w.r.t. $m$ does not vanish and points towards $\mathds{1}_H(\mathbb E\theta)^\top$. Consequently only a few time steps $\tau^\mathrm{u}$ are necessary to move in this direction, i.e. a time $\tau^\mathrm{u}=\Theta(1)$ is enough so that $m=x\mathds{1}_H(\mathbb E\theta)^\top$ with $x=\Theta(1)\in\mathbb R^+$. The total number of required samples is $N=tN_b=\tau^\mathrm{u}\gamma^{-1}N_b=\omega(D)$.

As to $b$ and $v$, for the softmax-1, according to Lemma \ref{resApp:gradBVI}, the gradient at initialization does not vanish and points towards unspecialized fixed-points that are reached in time $\tau=\Theta(1)$. For the B-softmax the gradient w.r.t. $b$ at initialization is null, $b$ does not move and stays unspecialized.

\paragraph{Escaping the invariant subspace.} According to lemma \ref{resApp:gradI} if $\mathbb{E}\theta=0$, then gradient flow initialized at $m_{h} = 0$ (which is the case, when we initialize the model with $k_h\sim\mathcal{N}(0, \eta I_D)$ with $\eta=O(1)$) remains stuck at this magnetization. Therefore, to obtain meaningful analysis of specialization sample complexity, we should, as suggested by \cite{arous2023highdimensionallimittheoremssgd}, rescale the sufficient statistics to obtain the SDE limiting trajectory of SGD. We believe that similar reasoning can be applied to the recentered sufficient statistics $m^{(D)}_{hf}-m_*$ the case of $\mathbb{E}\theta\neq 0$ when magnetization reaches the unspecialized fixed point $m_*$ in the first phase of the dynamics as suggested by \cite{arous2023highdimensionallimittheoremssgd}.

\begin{lemma}[SDE limit with rescaled magnetizations]
\label{resApp:sgdLimit}
    The family of sufficient statistics with rescaled $m_{hf} \rightarrow \tilde{m}_{hf} = D^{\zeta}m_{hf}$ of the dynamics of SDE with step size $\gamma N_b^{-1} = c_\text{lr}D^{-1-2\zeta}$ for some constant $c_\text{lr}$ and arbitrarily small $\zeta$ converges to the following system of equations:
    \begin{align}
        d\tilde{m}_{hf} &= -\partial_{m_{hf}}\Phi(\mathbf{u})dt + c_{\Sigma} dB_t \\
        \dot{r}_{hh'} &= - \partial_{r_{hh'}}\Phi(\mathbf{u})\\
        \dot{b}_{h} &= -\partial_{b_{h}}\Phi(\mathbf{u}) \\
        \dot{v} &= -\partial_v\Phi(\mathbf{u}),
    \end{align}
    where $c_{\Sigma}>0$ is some constant and $B_t$ is a standard Brownian motion in $\mathbb{R}^{HF}$.
\end{lemma}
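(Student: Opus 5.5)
We follow the same route as Lemma~\ref{resApp:high_dim_limit}: verify the hypotheses of Theorem 2.3 of \cite{arous2023highdimensionallimittheoremssgd}, now for the rescaled statistics $\tilde{\mathbf u}^D=(\tilde m,r,b,v)$ with $\tilde m_{hf}=D^{\zeta}m_{hf}$ and step size $\gamma N_b^{-1}=c_\mathrm{lr}D^{-1-2\zeta}$. At initialization $m_{hf}=\mathcal O(D^{-1/2})$, so $\tilde m_{hf}=\mathcal O(D^{\zeta-1/2})$ stays in a compact set. The theorem then identifies the limit from two objects. The \emph{drift} is $\mathbf h=\lim(-\mathcal A_D\tilde{\mathbf u}+\gamma\mathcal L_D\tilde{\mathbf u})$, where $\mathcal L_D$ is the generator correction $\tfrac12\langle\nabla^2 u,V\rangle$. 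The \emph{volatility} is $\Sigma=\lim\gamma J V J^\top$.

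\textbf{Step 1: localizability.} We redo Properties (1)--(3) of the $\gamma$-localizability lemma with the extra factor $D^\zeta$ on the $m$-coordinates. Property (1) is unaffected because $\tilde m$ is linear in $k$. Properties (2)--(3) acquire factors $D^{2\zeta}$ or $D^{4\zeta}$ in the bounds on $\langle\nabla H,\nabla\tilde m\rangle^4$. These are compensated by the factors $D^{-2\zeta}$ in $\gamma$. Lemma~\ref{resApp:LossDeviationMoments} gives $\mathbb E(\partial_\theta H)^{2q}=\mathcal O(N_b^{-q})$, and the resulting powers of $D$ cancel exactly as before.

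\textbf{Step 2: drift.} Time is counted so that one unit of $\tau$ corresponds to the same number of samples as before, up to the rescaling. Then $\mathcal A_D\tilde m_{hf}=D^{\zeta}\,\partial_{m_{hf}}\Phi\,\|k_f^*\|^2$, and the time change absorbs the $D^{\pm\zeta}$ factors. This gives drift $-\partial_{m_{hf}}\Phi(\mathbf u)$, evaluated at $m=D^{-\zeta}\tilde m$, i.e.\ near the unspecialized point. The remaining coordinates keep the gradient-flow drift of Lemma~\ref{resApp:high_dim_limit}. The Itô correction $\gamma\mathcal L_D$ vanishes for $\tilde m$, since $\nabla^2\tilde m=0$. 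For $r$ it is $\mathcal O(\gamma D/N_b)=o(1)$, as in the proof of Lemma~\ref{resApp:high_dim_limit}.

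\textbf{Step 3: volatility, the main obstacle.} For the $\tilde m$ coordinates, $J_{\tilde m_{hf}}=D^{\zeta}k_f^*$. Hence
\[
\gamma J V J^\top=\gamma D^{2\zeta}\,\mathbb E\big[\langle k_f^*,\nabla_{k_h}H\rangle\langle k_{f'}^*,\nabla_{k_{h'}}H\rangle\big].
\]
We must show this converges to a nondegenerate matrix $c_\Sigma^2 I_{HF}$ (or a fixed positive matrix, which can be absorbed), rather than to zero as in Lemma~\ref{resApp:high_dim_limit}. Using the explicit gradient from Lemma~\ref{resApp:LossDeviationMoments}, $\langle k_f^*,\nabla_{k_h}\ell\rangle$ is a finite combination of $\chi^*_{f\ell}$ times bounded activation-dependent coefficients. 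Its variance is therefore $\Theta(1)$ per sample and $\Theta(N_b^{-1})$ after batch averaging. Multiplying by $\gamma D^{2\zeta}=c_\mathrm{lr}N_bD^{-1}$ gives order $D^{-1}$, which does not balance.

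The volatility term therefore only survives if the relevant time/space scale is chosen appropriately. I would first check the scaling bookkeeping: whether $c_\Sigma$ arises from the $\mathcal O(D^{-1/2})$ initial fluctuations being magnified by $D^\zeta$, or from the SGD noise. Then I would fix the time normalization so that both the drift and the noise are $\Theta(1)$. The off-diagonal covariances across $(h,f)$ are computed at $m\approx0$, $r\approx I$, where by symmetry they are proportional to the identity in $f$. For the other coordinates ($r,b,v$), $\gamma JVJ^\top=o(1)$ exactly as before, so they remain deterministic.

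Finally, Theorem 2.3 yields the stated SDE for $\tilde m$, coupled to ODEs for $r,b,v$.
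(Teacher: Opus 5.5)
Your architecture is the paper's: re-verify $\gamma$-localizability for $(\tilde m,r,b,v)$, then read off the drift and $\Sigma=\lim\gamma JVJ^\top$ from Theorem~2.3 of \cite{arous2023highdimensionallimittheoremssgd}. But Step~3 is where it stops being a proof. The whole content of the lemma is the nondegenerate term $c_\Sigma\,dB_t$, and you end by finding $\gamma JVJ^\top=\mathcal O(D^{-1})$ and deferring the issue. Your computation is correct, and the obstruction is genuine, not a bookkeeping slip.

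The $\tilde m$-block of $JVJ^\top$ is $D^{2\zeta}$ times the covariance of the projected batch noise $\langle k_f^*,\nabla_{k_h}H\rangle$. The per-sample gradient is $\sum_{\ell'}a_{\ell'}X_{\ell'}$ with $a_{\ell'}=\mathcal O(1)$, the form used in Lemma~\ref{resApp:LossDeviationMoments}. Its projection $\sum_{\ell'}a_{\ell'}\chi^*_{f\ell'}$ is therefore an $\mathcal O(1)$ variable, and the batch covariance is $\Theta(N_b^{-1})$. This gives $\gamma D^{2\zeta}\Theta(N_b^{-1})=\Theta(c_\mathrm{lr}D^{-1})\to0$.

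The paper's proof gets ``$\gamma JVJ^\top=\mathcal O(1)$'' only by reusing the entrywise estimate of Lemma~\ref{resApp:high_dim_limit}. That estimate counts $D^2$ entries of $V$ of size $N_b^{-1}$ against entries of $J$ of size $D^{-1/2}$, i.e.\ $\mathcal O(D/N_b)$. It is loose by a factor $D$, because only the diagonal entries of $V$ are of size $N_b^{-1}$. The paper's Taylor expansion then shows $\Sigma$ is independent of $\mathbf u$, but that constant is $0$. So $c_\Sigma>0$ is not established there either.

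Neither repair you float can work at the stated scaling:
\begin{itemize}
\item Magnifying the $\mathcal O(\eta D^{-1/2})$ initial overlaps by $D^\zeta$ yields only a random initial condition $\tilde m(0)=\mathcal O(\eta D^{\zeta-1/2})\to0$, not a $dB_t$ term.
\item A change of time multiplies the drift and the diffusion rate by the same factor. It therefore cannot bring a $\Theta(1)$ linearized drift and a $\Theta(D^{-1})$ variance to the same order.
\end{itemize}
A nondegenerate limit needs $\gamma N_b^{-1}\asymp D^{-2\zeta}$. Localizability of $r$ requires $\gamma D/N_b=\mathcal O(1)$, since its It\^o correction $\gamma\mathcal L_D r$ is $\Theta(\gamma D/N_b)$; with $\gamma N_b^{-1}\asymp D^{-2\zeta}$ this forces $\zeta\geq\tfrac12$. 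A bounded $\tilde m(0)$ forces $\zeta\leq\tfrac12$. Together these give $\zeta=\tfrac12$ and $\gamma N_b^{-1}=c/D$, the critical regime. There the It\^o correction no longer vanishes and alters the $r$-equations.

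So the statement itself has to change. With $\gamma N_b^{-1}=c_\mathrm{lr}D^{-1-2\zeta}$, your computation actually proves $\Sigma=0$, i.e.\ a deterministic escape driven by the initial condition.

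A smaller issue in Step~2: Theorem~2.3 leaves no freedom to rescale time. The drift is $-D^{\zeta}\partial_{m_{hf}}\Phi$ evaluated at $m=D^{-\zeta}\tilde m$. This converges to $-(\nabla^2_m\Phi\,\tilde m)_{hf}$ at $m=0$ only if $\nabla_m\Phi$ vanishes at $m=0$, e.g.\ $\mathbb E\theta=0$, or after recentering at the unspecialized point. Read literally, your ``$-\partial_{m_{hf}}\Phi$ evaluated at $m=D^{-\zeta}\tilde m$'' tends to the constant $-\partial_{m_{hf}}\Phi|_{m=0}$.
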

\begin{proof}
Let $\gamma N_b^{-1} = c_\text{lr}D^{-1-2\zeta}$ for some constant $c_\text{lr}$, we rescale $m_{hf} \rightarrow \tilde{m}_{hf} = D^{\zeta}m_{hf}$, so that the sufficient statistics are still $\gamma$-localizable. 

After rescaling $m_{hf} \rightarrow \tilde{u} = D^{\zeta}m_{hf}$, the only property that might break is Property (3), which we avoid by setting learning rate $\gamma N_b = c_\text{lr}D^{-1-2\zeta}$ so that it cancels the rescaling coefficient.

Due to the rescaling, for the block of parameters $\{m_{hf}\}_{h\in[H], f\in[F]}$ we get $\gamma JVJ^T = O(1)$

Using Taylor expansion of $V$:
\begin{align}
    &V_{k_{hd}, k_{h'd'}} = \mathbb{E}[\partial_{k_{hd}}H(\mathbf{u})\partial_{k_{h'd'}}H(\mathbf{u})] = \mathbb{E}[\partial_{k_{hd}}H(D^{-\zeta}\tilde{m}, r, b, v)\partial_{k_{h'd'}}H(D^{-\zeta}\tilde{m}, r, b, v)] \\ &=\mathbb{E}\left[\left(\partial_{k_{hd}}H(0) + D^{-\zeta}\scprod{\tilde{m}}{\nabla_{\tilde{m}}\partial_{k_{hd}}H(m_1)}\right)\left(\partial_{k_{h'd'}}H(0) + D^{-\zeta}\scprod{\tilde{m}}{\nabla_{\tilde{m}}\partial_{k_{h'd'}}H(m_2)}\right)\right]\\
    &=\mathbb{E}[\partial_{k_{hd}}H(0)\partial_{k_{h'd'}}H(0)] \\
    &+ D^{-2\zeta}\mathbb{E}\partial_{k_{h'd'}}H(0)\scprod{\tilde{m}}{\nabla_{m}\partial_{k_{hd}}H(m_1)} + D^{-2\zeta}\mathbb{E}\partial_{k_{hd}}H(0)\scprod{\tilde{m}}{\nabla_{m}\partial_{k_{h'd'}}H(m_2)}\\
    &+ D^{-4\zeta}\mathbb{E}\scprod{\tilde{m}}{\nabla_{m}\partial_{k_{hd}}H(m_1)}\scprod{\tilde{m}}{\nabla_{m}\partial_{k_{h'd'}}H(m_2)}
\end{align}
It's easy to see that $\mathbb{E}\partial_{k_{hd}}H(0)\scprod{\tilde{m}}{\nabla_{m}\partial_{k_{h'd'}}H(m_2)}$ and $\mathbb{E}\scprod{\tilde{m}}{\nabla_{m}\partial_{k_{hd}}H(m_1)}\scprod{\tilde{m}}{\nabla_{m}\partial_{k_{h'd'}}H(m_2)}$ are of order $O(N_b^{-1})$ on the compact $E_K$. Indeed, $\|\tilde{m}\|$ is bounded by constant on $E_K$ and $\nabla_{m}\partial_{k_{hd}}H(m_1)$ is a random variable of the similar form as $\partial_{k_{hd}}H$, consisting of the sum of products of bounded functions of random variables and normal random variables $X_{ld}$.

We have $V_{k_{hd}, k_{h'd'}} = \mathbb{E}[\partial_{k_{hd}}H(0)\partial_{k_{h'd'}}H(0)] + O(D^{-2\zeta}N_b^{-1})$

Now, explicitly writing $\gamma JVJ^T$, we get:
\begin{align}
    \gamma(JVJ^T)_{m_{hf}, m_{h'f'}} &= \gamma D^{2\zeta}\sum_{k_{hd}, k_{h'd'}} V_{k_{hd}, k_{h'd'}} k^*_{fd}k^*_{f'd'} \\
    &= \gamma D^{2\zeta}\sum_{k_{hd}, k_{h'd'}} \mathbb{E}[\partial_{k_{hd}}H(0)\partial_{k_{h'd'}}H(0)] k^*_{fd}k^*_{f'd'} + \gamma O(D^{1}N_b^{-1}) \\
    &= \gamma D^{2\zeta}\sum_{k_{hd}, k_{h'd'}} \mathbb{E}[\partial_{k_{hd}}H(0)\partial_{k_{h'd'}}H(0)] k^*_{fd}k^*_{f'd'} + o(1).
\end{align}
Therefore, we get that $\mathbf{\Sigma}$ is constant matrix, independent of $\mathbf{u}$. 

Placing it all together, we have a system 
\begin{align}
    d\tilde{m}_{hf} &= -\partial_{m_{hf}}\Phi(\mathbf{u})dt + c_{\Sigma} dB_t \\
    \dot{r}_{hh'} &= - \partial_{r_{hh'}}\Phi(\mathbf{u})\\
    \dot{b}_{h} &= -\partial_{b_{h}}\Phi(\mathbf{u}) \\
    \dot{v} &= -\partial_v\Phi(\mathbf{u})
\end{align}
\end{proof}

\paragraph{Specialization phase Prop. \ref{res:2ndPhase}.}
To obtain the final sample complexity, we rely on Lemma \ref{res:hessI} on the Hessian of the loss at small $m$ and $r$ and unspecialized $b$. This lemma applies even after initialization, as we justify by the following assumptions.

We assume that $||\mathbb E\theta||_2$ is small enough so during the unspecialized phase the growth of $m$ remains bounded and we can consider the loss at $m\approx 0$. We moreover consider a small enough initialization $\eta$ so $r\approx 0$. By Lemma \ref{resApp:gradRI} on the gradient, $r=0$ is a fixed-point of the dynamics, and by Lemma \ref{res:hessI} on the Hessian it is a stable fixed-point for $m$ small enough. Consequently $r\approx 0$ holds even after initialization, until $m$ starts growing during the specialization phase. Numerically, we observe that $r=0$ is a stable fixed-point whose basin of attraction encompasses values of $m$ and $r$ of order one. Last, $b$ stays unspecialized until $m$ grows and specializes, because it corresponds to a stable or flat point by Lemma \ref{res:hessI}.

Lemma \ref{res:hessI} gives that there exists directions orthogonal to $\mathbb E\theta$ in the space of the features, that are descent directions and where the Hessian is not degenerated. More precisely, assuming the heads split evenly and that $m^\top\mathds{1}_H\approx0$, the descent directions are the eigenvectors of $\cov\theta$ projected in the orthogonal space, and the Hessian has strictly negative eigenvalues in all these directions.

The heads specialize because they tend to evolve orthogonally to $\mathds{1}_H$. In the space where $m^\top\mathds{1}_H\neq 0$ the Hessian has strictly larger eigenvalues and thus it requires more time to move in this direction.

\begin{lemma}[Sample complexity of weak recovery with SGD]
    Starting from initialization $m_{hf}=0$ when initial gradient $\nabla_m\Phi(m=0) = 0$, the sample complexity of reaching $m_{hf}=O(1)$ independent of $D$ with hight probability is $O(D^{1+2\zeta}\ln D)$ for arbitrary small $\zeta > 0$.
\end{lemma}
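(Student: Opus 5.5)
We derive the sample complexity from the SDE limit of Lemma~\ref{resApp:sgdLimit} together with the local structure of the loss given by Lemma~\ref{res:hessI}. With step size $\gamma N_b^{-1}=c_\text{lr}D^{-1-2\zeta}$, the rescaled magnetization $\tilde m=D^{\zeta}m$ follows
\begin{align*}
d\tilde m=-\nabla_m\Phi(\mathbf u)\,dt+c_\Sigma\,dB_t .
\end{align*}
Since $\nabla_m\Phi(0)=0$, we linearize the drift around $m=0$ using the Hessian of Lemma~\ref{res:hessI}. Writing $m=D^{-\zeta}\tilde m$, the drift in the $\tilde m$ variables is $-2A\tilde m+\mathcal O(D^{-2\zeta}\|\tilde m\|^3)$, where $A$ is the quadratic form $\mathds{1}_H\mathds{1}_H^\top\otimes(c_1^{(2)}I_F+(c_2^{(2)}+c_4^{(2)})\cov\theta)-(c_3^{(2)}+c_4^{(2)})I_H\otimes\cov\theta$. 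The non-degeneracy assumption on $P_{\bot\mathbb E\theta}\cov(\theta)P_{\bot\mathbb E\theta}^\top$ gives $A$ a strictly negative eigenvalue $-\lambda<0$. Its eigenvectors lie in $\mathds{1}_H^\perp\otimes(\text{top eigenvector of }\cov\theta)$.

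\emph{Stage 1 (diffusive escape).} Near $\tilde m=0$ the process is approximately an Ornstein--Uhlenbeck process with unstable direction $u$. The projection $\langle u,\tilde m\rangle$ evolves as
\begin{align*}
dY=2\lambda Y\,dt+c_\Sigma\,dW .
\end{align*}
Its solution is $Y_t=e^{2\lambda t}\bigl(Y_0+c_\Sigma\int_0^t e^{-2\lambda s}dW_s\bigr)$, and the stochastic integral converges almost surely to a Gaussian $G$ with nonzero variance. Hence $|Y_t|\ge c$ with probability tending to one as soon as $t\ge t_0=\Theta(1)$, uniformly in $D$. The noise only seeds a nonzero component along $u$; no large deviations are needed.

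\emph{Stage 2 (deterministic growth).} Once $|Y|=\Theta(1)$ in the $\tilde m$ scale, the drift dominates the noise. We compare $Y$ with the linear ODE via Gronwall on the window where $\|m\|\le\delta$, so that the cubic remainder $\mathcal O(\|m\|^3)$ is controlled by $\delta^2\|m\|$. The stable directions stay $\mathcal O(1)$ in $\tilde m$, i.e. $\mathcal O(D^{-\zeta})$ in $m$. Hence $m$ reaches size $\delta=\Theta(1)$ at time
\begin{align*}
t_1\approx\frac{\zeta\log D}{2\lambda}+\mathcal O(1).
\end{align*}
Because $u\perp\mathds{1}_H$, this is also where $\|m_h-m_{h'}\|=\Theta(1)$ for some $h\neq h'$. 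Converting to samples, SGD steps satisfy $t_\text{steps}=\tau/\gamma$ and $N=t_\text{steps}N_b=\tau N_b/\gamma$. Time units in Lemma~\ref{resApp:sgdLimit} correspond to $\gamma N_b^{-1}\cdot(\text{steps})=\Theta(1)$, so each unit of $\tau$ costs $D^{1+2\zeta}$ samples. This yields $N=\mathcal O(D^{1+2\zeta}\log D)$.

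The main obstacle is justifying that the SDE approximation and the quadratic expansion hold uniformly over a time horizon of order $\log D$. Lemma~\ref{resApp:sgdLimit} gives convergence on compact time intervals, but here the horizon diverges. To handle this, I would split the dynamics. On $[0,t_0]$, with $t_0$ fixed, I would use the SDE limit directly. After $t_0$, I would drop the $\tilde m$ rescaling and control the martingale part of the original SGD by Doob's inequality: its variance over a $\log D$ horizon is $\mathcal O(D^{-2\zeta}\log D)$ in the $m$ scale, which is negligible compared with the exponentially growing signal. The deterministic part is then handled by the Gronwall comparison with $e^{2\lambda t}$ described in Stage 2.

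I would also check two consistency conditions along the trajectory. First, $r$ and $b$ must stay near their unspecialized fixed points, so that $A$ remains the correct Hessian; this follows from Lemma~\ref{resApp:gradRI} and the stability argument given before Lemma~\ref{res:hessI}. Second, the compact $E_K$ must exclude $r=0$, which requires $r$ to remain bounded below by $1/K$ when $\eta$ is small. If this becomes delicate, I would instead work with $q$ directly rather than with $r$.
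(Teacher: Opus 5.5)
Your skeleton is the paper's: the rescaled SDE of Lemma~\ref{resApp:sgdLimit}, the drift linearized with the Hessian of Lemma~\ref{res:hessI}, projection on a negatively curved eigendirection to get a repelling Ornstein--Uhlenbeck process, exponential growth over a time $\Theta(\log D)$, and $D^{1+2\zeta}$ samples per unit of time. Where you deviate, you improve on the paper.

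\begin{itemize}
\item The paper seeds the escape with Paley--Zygmund. That only gives a constant probability; its proof says so, despite ``high probability'' in the statement.
\item It then passes to finite $D$ by weak convergence at $t=\frac{2\zeta}{c_v}\ln D$. This is a $D$-dependent time, which the compact-horizon limit of Lemma~\ref{resApp:sgdLimit} does not cover.
\item The factor $2$ there is also inconsistent with $\tilde m=D^{\zeta}m$: one needs $e^{c_vt}=D^{\zeta}$, which is your $\frac{\zeta}{2\lambda}\log D$.
\end{itemize}
Your almost-sure limit of $e^{-2\lambda t}Y_t$ gives probability arbitrarily close to one after an $\mathcal O(1)$ time. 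You also correctly identify the diverging horizon as the real difficulty.

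The bridge you propose for that difficulty does not close as written. At the start of Stage~2 the unstable component is only $|Y_{t_0}|D^{-\zeta}$ in the $m$ scale. The raw martingale you control by Doob has standard deviation of order $D^{-\zeta}\sqrt{\log D}$ over the window, so it is not negligible. Moreover, a Gronwall comparison with the linear ODE amplifies $\sup_s\|M_s\|$ at the Lipschitz rate $2\|A\|_{\mathrm{op}}\geq 2\lambda$. This rate is strictly larger when the stable eigenvalue along $\mathds{1}_H$ dominates. The resulting error bound $D^{-\zeta}\sqrt{\log D}\,e^{2\|A\|_{\mathrm{op}}(t-t_0)}$ therefore swamps the signal $|Y_{t_0}|D^{-\zeta}e^{2\lambda(t-t_0)}$. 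For the same reason, with $c_\Sigma=\Theta(1)$ the drift does not dominate the noise at $|Y|=\Theta(1)$, only at $|Y|\geq C$ with $C$ large.

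The repair is variation of constants in the eigenbasis of $A$. Along $u$,
\begin{align*}
\langle u,m_t\rangle=e^{2\lambda(t-t_0)}\Big(\langle u,m_{t_0}\rangle+\int_{t_0}^{t}e^{-2\lambda(s-t_0)}\big(dM^u_s+R^u_s\,ds\big)\Big).
\end{align*}
The discounted martingale has quadratic variation $\mathcal O(D^{-2\zeta})$ uniformly in $t$. By Doob it therefore stays below $\frac12|\langle u,m_{t_0}\rangle|$ over the whole $\Theta(\log D)$ window, with probability $1-\mathcal O(C^{-2})$, once $t_0$ is chosen with $|Y_{t_0}|\geq C$. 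Your Stage~1 supplies such a $t_0$, since $|Y_t|\to\infty$ almost surely. The remaining components are handled the same way with their own rates, and $R$ is controlled up to the stopping time $\|m\|=\delta$.

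One caveat applies equally to the paper's proof. The single-sample gradient projected on $k_f^*$ is a combination of the $\chi^*_{f\ell}$ with $\mathcal O(1)$ coefficients. The volatility of $\tilde m$ is then $\gamma N_b^{-1}D^{2\zeta}\,\mathcal O(1)=\mathcal O(D^{-1})$, not $\Theta(1)$, so the seed is really the initial overlap $m(0)=\Theta(\eta D^{-1/2})$ rather than the Brownian term. Running the repaired Stage~2 from $t=0$ gives escape at $t\approx\frac{1}{2\lambda}\log(\sqrt D/\eta)$. This is still $\Theta(\log D)$, so the sample complexity $\mathcal O(D^{1+2\zeta}\log D)$ survives.
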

\begin{proof}
By lemma \ref{resApp:sgdLimit}, for $\tilde{m}_{hf}$ around zero, we can write 
$$d\tilde{m} \simeq -\nabla^2_{m}\Phi(0)\tilde{m}dt + c_{\Sigma}\mathbb{1}dB_t,$$
and projecting on the eigenvectors of $\nabla^2_{m}\Phi(0)$ with strictly negative eigenvalues (given by Lemma \ref{res:hessI}), we get for some $c_{v}>0$
$$d\scprod{v}{\tilde{m}} = c_{v}\scprod{v}{\tilde{m}}dt + c_{\Sigma, v}dBt.$$
This is a mean-repellent process, such that $\scprod{v}{\tilde{m}}(t) = c_{\Sigma, v}\int_{0}^te^{c_v(t-s)}dB_s$, i.e. $\mathbb{E}|\scprod{v}{\tilde{m}}(t)|=\Theta(e^{c_vt})$ and $\mathbb{E}\scprod{v}{\tilde{m}}(t)^2=\Theta(e^{2c_vt})$.

Now, by Paley–Zygmund inequality: $$\mathbb{P}(|\scprod{v}{\tilde{m}}(t)| > 0.5\mathbb{E}|\scprod{v}{\tilde{m}}(t)|)\geq 0.25\frac{\mathbb{E}[|\scprod{v}{\tilde{m}}(t)|^2]}{\mathbb{E}[|\scprod{v}{\tilde{m}}(t)|]^2} = \Theta(1).$$

Using weak convergence of sufficient statistics $m^{(D)}_{hf}(t)$ to the limit with continuous cumulative distribution function, we get uniform convergence of CDFs, and setting $t=\frac{2\zeta}{c_v}\ln D$ we get for $D$ large enough 
$$\mathbb{P}(|\scprod{v}{\tilde{m}_D}(t)| > 0.5\mathbb{E}|\scprod{v}{\tilde{m}}(t)|) =\mathbb{P}(D^{2\zeta}|\scprod{v}{m_D}(t)| > 0.5\Theta(D^{2\zeta}))  \geq \Theta(1).$$

Thus, we get that it takes $O(\ln D)$ to escape from the fixed point with constant probability. And the total time complexity is $O(D^{1+2\zeta}\ln D)$ for arbitrary small $\zeta > 0$. 
\end{proof}

\subsubsection{Derivatives of the loss}
\label{secApp:derivatives}
We derive the technical results about the gradient, the hessian and the 4th order derivative of the loss $\tilde{\mathcal{E}}$ in the space of the order parameters.
For the derivative of the activation function we use the notation
\begin{align}
  \partial_{h'\ell '}\sigma(\chi,b,v;h)_\ell = \frac{\partial}{\partial \chi_{h'\ell'}} \sigma(\chi,b,v;h)_\ell\ .
\end{align}

\begin{lemma}[Invariance of the unspecialized manifold by gradient descent.]
\label{resApp:invUnspMan}
Consider the unspecialized manifold $\mathcal{M}_\mathrm{u}$, where the heads are not specialized, defined by $m=\mathds{1}_H\tilde m^\top$ with $\tilde m\in\mathbb R^F$, $r=\tilde r_1I_H+\tilde r_2\mathds{1}_H\mathds{1}_H^\top$ with $\tilde r_1>0,\tilde r_1+H\tilde r_2>0$ and $b=\tilde b\mathds{1}_H$ with $\tilde b\in\mathbb R$. $\mathcal{M}_\mathrm{u}$ is invariant by the gradient descent eq.~\eqref{eq:dynEff}.
\end{lemma}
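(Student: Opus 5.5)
The invariance will follow from a symmetry argument: the reparametrized loss $\tilde{\mathcal E}_\sigma$ of Prop.~\ref{res:paramLoss} is invariant under permutations of the heads, and $\mathcal M_\mathrm{u}$ is exactly the set of fixed points of this group action, restricted to the parameter blocks $(m,r,b)$. For a permutation matrix $P\in\mathbb R^{H\times H}$ associated with $\pi\in S_H$, define the action $(m,r,b,v)\mapsto(Pm,PrP^\top,Pb,v)$.

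The first step is to check that $\tilde{\mathcal E}_\sigma(Pm,PrP^\top,Pb,v)=\tilde{\mathcal E}_\sigma(m,r,b,v)$. Under the action, the scores become $\chi'_{h\ell}=\sum_f m_{\pi(h)f}\chi^*_{f\ell}+\sum_{h'}r_{\pi(h)\pi(h')}\xi_{h'\ell}$. Reindexing $\xi'_{h'}=\xi_{\pi(h')}$, which has the same law $\mathcal N(0,I_H)$ independently of $\chi^*$, gives $\chi'_{h\ell}\overset{d}{=}\chi_{\pi(h)\ell}$ jointly in $(h,\ell)$ and jointly with $\epsilon$. For each of the three activations one then verifies $\sigma(\chi',Pb,v;h)=\sigma(\chi,b,v;\pi(h))$:
\begin{itemize}
\item softmax and softmax-1 are head-wise functions of $(\chi_h,b_h,v)$, so this is immediate;
\item for B-softmax the denominator $\frac1H\sum_{h'}\sum_{\ell'}e^{\chi_{h'\ell'}+b_{h'}}$ is permutation invariant, so the identity also holds.
\end{itemize}
Since the loss involves only $\frac1H\sum_h\sigma(\cdot;h)_\ell$, which is permutation invariant, the expectation is unchanged.

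Next, I would differentiate the invariance identity. Writing $g=\nabla\tilde{\mathcal E}_\sigma$, the chain rule gives
\[
g_m(Pm,\dots)=Pg_m(m,\dots),\qquad g_r(\cdots)=Pg_rP^\top,\qquad g_b=Pg_b,\qquad g_v \text{ unchanged}.
\]
The action is orthogonal on each block, so its adjoint is the inverse action. At a point of $\mathcal M_\mathrm{u}$ the parameters are fixed by every $P$, so the gradient is fixed as well. Concretely, $Pg_m=g_m$ for all $P$ forces all rows of $g_m$ to be equal, so $g_m=\mathds 1_H\tilde g^\top$ for some $\tilde g\in\mathbb R^F$. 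Likewise $g_b\propto\mathds 1_H$. Finally $Pg_rP^\top=g_r$ for all $P$ forces $g_r$ to have a common diagonal value and a common off-diagonal value, i.e.\ $g_r=\alpha I_H+\beta\mathds 1_H\mathds 1_H^\top$, the standard characterisation of the commutant of the permutation representation. The vector field \eqref{eq:dynEff} is therefore tangent to $\mathcal M_\mathrm{u}$ at every point of it.

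To conclude, I restrict the flow to the ODE in the reduced variables $(\tilde m,\tilde r_1,\tilde r_2,\tilde b,v)$, with right-hand side the projected gradient. By the tangency just established, a solution of this reduced ODE, lifted back to the full space, solves \eqref{eq:dynEff}. Uniqueness of solutions (the loss is smooth in the open region $\tilde r_1>0$, $\tilde r_1+H\tilde r_2>0$, where $r$ is positive definite) then shows the full trajectory stays in $\mathcal M_\mathrm{u}$. The positivity constraints persist at least up to the exit time from the region where $r\in\mathcal S_+^H$, which is how I interpret invariance.

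The step I expect to need the most care is the $r$ block. Since $r$ is constrained to the symmetric positive cone, the gradient must be taken in the symmetric-matrix space, and the reparametrization $r=(q-mp^{-1}m^\top)^{1/2}$ must stay well defined; I would handle this by working directly with $r$ as the free variable, as in Prop.~\ref{res:sgd}. Smoothness of $\tilde{\mathcal E}_\sigma$, needed for uniqueness, follows from dominated convergence because all activations are bounded with bounded derivatives at fixed $b$ and $v$.
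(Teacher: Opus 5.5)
Your proposal is correct and is the same argument as the paper's, which simply observes that $\tilde{\mathcal{E}}_\sigma$ is invariant under permutations of the heads, so that on $\mathcal{M}_\mathrm{u}$ the components $\nabla_{m_h}\tilde{\mathcal{E}}_\sigma$, $\partial_{r_{hh}}\tilde{\mathcal{E}}_\sigma$, $\partial_{r_{h\neq h'}}\tilde{\mathcal{E}}_\sigma$ and $\partial_{b_h}\tilde{\mathcal{E}}_\sigma$ do not depend on $h$; you additionally make explicit the equivariance of each activation, the commutant description of the $r$-block, and the uniqueness step that turns tangency into invariance. Your hedge about leaving the positive cone can be dropped: $r$ enters only through the law $\mathcal N(0,rr^\top)$ of $r\xi_{:\ell}$, so Stein's lemma gives $\nabla_r\tilde{\mathcal{E}}_\sigma=A\,r$ with $A$ symmetric and of the form $\alpha I_H+\beta\mathds{1}_H\mathds{1}_H^\top$ on $\mathcal{M}_\mathrm{u}$, hence the eigenvalues $\tilde r_1$ and $\tilde r_1+H\tilde r_2$ of $r$ evolve multiplicatively and cannot vanish in finite time.
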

\begin{proof}
The loss is invariant by permutation of the heads and therefore on $\mathcal{M}_\mathrm{u}$ $\nabla_{m_h}\tilde{\mathcal{E}}_\sigma$, $\partial_{r_{hh}}\tilde{\mathcal{E}}_\sigma$, $\partial_{r_{h\neq h'}}\tilde{\mathcal{E}}_\sigma$ and $\partial_{b_h}\tilde{\mathcal{E}}_\sigma$ do not depend on $h$.
\end{proof}

\begin{lemma}[Gradient of the loss at initialization and in the unspecialized phase]
\label{resApp:gradI}
Let $\mathbb E\theta=(\mathbb E_{P_\theta}\,\theta_f)_{f\in[F]}\in\mathbb R^F$ be the mean of the signal. Take $m$, $r$ and $b$ on the unspecialized manifold $\mathcal{M}_\mathrm{u}$. Take all the heads aligned with $\mathbb E\theta$, i.e. take $x\in\mathbb R$ and $m_h=x\mathbb E\theta$ for all $h$. There is $c^{(1)}(x,r,b,v)\in\mathbb R$ such that for all $h$
\begin{align}
\nabla_{m_h}\tilde{\mathcal{E}}_\sigma(m,r,b,v) &= -c^{(1)}(x,r,b,v)\mathbb E\theta\ .
\end{align}
Moreover at initialization $c^{(1)}(0,r,0,1)>0$.
\end{lemma}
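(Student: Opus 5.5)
The plan is to differentiate the reparametrized loss of Proposition~\ref{res:paramLoss} under the expectation. Writing $S_\ell=\frac{1}{H}\sum_{h'}\sigma(\chi,b,v;h')_\ell$, the derivative is
\begin{align}
\nabla_{m_h}\tilde{\mathcal{E}}_\sigma = -2\,\mathbb E\Big[\sum_{\ell,\ell'}\big(\delta_{\ell,\epsilon}-S_\ell\big)\frac{1}{H}\sum_{h'}\partial_{h\ell'}\sigma(\chi,b,v;h')_\ell\,\chi^*_{\ell'}\Big]\ ,
\end{align}
where $\chi^*_{\ell'}\in\mathbb R^F$. Differentiating under $\mathbb E$ is legitimate since $\sigma$ and its derivatives are bounded (for fixed $b,v$) and $\chi^*$ has Gaussian moments.

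The first step removes the dependence on $h$. On $\mathcal M_\mathrm{u}$ with $m_h=x\mathbb E\theta$, all heads are exchangeable, because the loss is permutation invariant (Lemma~\ref{resApp:invUnspMan}), so the gradient is the same vector $g$ for every $h$.

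The second step shows that $g$ is parallel to $\mathbb E\theta$. I will decompose $\chi^*_{\ell'}=\delta_{\ell',\epsilon}\theta+z_{\ell'}$ with $z_{\ell'}\sim\mathcal N(0,I_F)$. Every score $\chi_{h\ell}=x\,\mathbb E\theta^\top\chi^*_\ell+(r\xi_\ell)_h$ depends on $\chi^*$ only through its projection onto $\mathbb E\theta$. Let $P_\perp$ be the projection orthogonal to $\mathbb E\theta$. Then $P_\perp z_{\ell'}$ is independent of everything else in the integrand and centered, so it contributes zero.

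The remaining term $\delta_{\ell',\epsilon}P_\perp\theta$ is the delicate one: $\theta$ enters the scores through $\mathbb E\theta^\top\theta$, so $P_\perp\theta$ need not be independent of the integrand. I will first establish the claim under a mild symmetry assumption. If the conditional law of $P_\perp\theta$ given $\mathbb E\theta^\top\theta$ is centered, the term vanishes after conditioning. This holds for the flipping spike, where $\theta_2$ is symmetric and independent of $\theta_1$, and for centered Gaussians, where $\mathbb E\theta=0$ and the statement is trivial. I expect this step to be the main obstacle. For a general $P_\theta$, such as the $F>2$ flipping spike with $\mathbb E\theta=\sqrt\nu\,\mathds 1/F$, the permutation symmetry of the coordinates still forces a conditional mean of the form $P_\perp(\text{multiple of }\mathds 1)=0$, so I would argue case by case through the symmetries of $P_\theta$ and state that hypothesis explicitly. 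With it, $g=-c^{(1)}\mathbb E\theta$, and $c^{(1)}$ depends only on $(x,r,b,v)$.

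The third step evaluates the sign at initialization, $x=0$ and $b=0$. There the scores are $\chi_{h\ell}=(r\xi_\ell)_h$, which are independent of $\epsilon$, $\theta$ and $\chi^*$. So $\sigma$, $S$ and $\partial\sigma$ do not depend on the signal, and the expectation factorizes. The $z$-part again vanishes, and what remains is
\begin{align}
g=-2\,\mathbb E\theta\ \mathbb E_{\epsilon,\xi}\Big[\sum_\ell(\delta_{\ell,\epsilon}-S_\ell)\frac1H\sum_{h'}\partial_{h\epsilon}\sigma(h')_\ell\Big]\ .
\end{align}
Averaging over $\epsilon$ uniformly and using exchangeability of the tokens under $\xi$ reduces this to the average of $\sum_{h'}\partial_{hk}\sigma(h')_k$ minus a term involving $S$.

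For softmax, $\partial_{hk}\sigma(h)_\ell=\sigma_\ell(\delta_{\ell k}-\sigma_k)$, and $\sum_\ell\partial_{hk}\sigma(h)_\ell=0$. The $S$-term is then $\frac1L\sum_k\sum_\ell S_\ell\,\partial_{hk}\sigma(h)_\ell$. After averaging over tokens this becomes the scalar
\begin{align}
c^{(1)}=\frac{2}{HL}\,\mathbb E\Big[\sum_\ell\sigma_\ell(1-\sigma_\ell)-\dots\Big]\ ,
\end{align}
which should reduce to $\frac{2}{HL}\mathbb E\big[1-\sum_\ell\sigma_\ell^2\big]$ up to a nonnegative correction. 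This is strictly positive because $\sum_\ell\sigma_\ell^2<1$ almost surely when $L\ge2$.

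I will verify the analogous positivity for softmax-1 and B-softmax, using the explicit derivatives and the initial values $b=0$, $v=1$. The intuition is the same in each case: raising the score of the relevant token increases the weight placed on $X_\epsilon$, which decreases the loss.
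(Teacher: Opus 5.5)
Your route is the same as the paper's: differentiate under the expectation, use head-permutation invariance on $\mathcal M_\mathrm{u}$, discard the components orthogonal to $\mathbb E\theta$, then factorize at $x=0$ and check a sign. At the collinearity step you are more careful than the paper. The paper asserts that, for $w\perp\mathbb E\theta$, $w^\top\chi^*_{:\ell'}$ is independent of the scores, and concludes from $\mathbb E\,w^\top\chi^*_{:\ell'}=w^\top\mathbb E\theta=0$. That independence holds only conditionally on $\theta$. There, the conditional mean of $w^\top\chi^*_{:\epsilon}$ is $w^\top\theta$, which is coupled to $\chi_{h\epsilon}$ through $x\,\mathbb E\theta^\top\theta$; this is exactly the term you isolate.

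Your conditional-centering hypothesis cannot be dropped when $x\neq0$. Take $F=2$ and $\theta$ uniform on $\{(0,-1),(2,1)\}$, so that $\mathbb E\theta=e_1$ and $\theta_2=\theta_1-1$. Consider softmax at $r=0$ with all $m_h=xe_1$, and write $\sigma_\ell=e^{x\chi^*_{1\ell}}/\sum_{\ell'}e^{x\chi^*_{1\ell'}}$. The $e_2$-component of $\nabla_{m_h}\tilde{\mathcal E}_\sigma$ is then
\begin{align*}
-\frac{1}{H}\big(G_x(2)-G_x(0)\big)&=-\frac{2(L-1)(L-3)}{HL^3}\,x+\mathcal O(x^2)\ ,\\
G_x(s)&=\mathbb E\Big[\sigma_\epsilon\Big((1-\sigma_\epsilon)^2+\sum_{\ell\neq\epsilon}\sigma_\ell^2\Big)\,\Big|\,\theta_1=s\Big]\ .
\end{align*}
This is nonzero for small $x\neq0$ whenever $L\neq3$, and by continuity also for small $r$ on $\mathcal M_\mathrm{u}$. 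So the first claim of the lemma genuinely needs your assumption. It holds unconditionally only at $x=0$, as your third step shows. The paper's examples do satisfy it; for the $F>2$ flipping spike the simplest reason is that $\mathbb E\theta^\top\theta=\nu/F$ is deterministic.

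For the sign at initialization, your hedge is unnecessary. For softmax the $S$-term vanishes identically, since $\sum_k\sum_\ell S_\ell\sigma_\ell(\delta_{\ell k}-\sigma_k)=\langle S,\sigma\rangle(1-\sum_k\sigma_k)=0$. Hence $c^{(1)}=\frac{2}{HL}\mathbb E_\xi\big[1-\sum_\ell\sigma(h)_\ell^2\big]>0$ exactly.

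For softmax-1 and B-softmax this cancellation fails. Rather than averaging over tokens, use the paper's pointwise sign argument, which covers all three activations at once. At $b=0$, $v=1$, for every realization one has $S_\epsilon<1$, $\sum_{h'}\partial_{h\epsilon}\sigma(h')_\epsilon>0$, $S_\ell>0$, and $\sum_{h'}\partial_{h\epsilon}\sigma(h')_\ell<0$ for $\ell\neq\epsilon$. Both contributions to $c^{(1)}$ are therefore positive.
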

\begin{proof}
We compute the gradient of the loss in the space of the order parameters. We remind that the reparameterized loss is
\begin{align}
\tilde{\mathcal{E}}_\sigma(m,r,b,v) &= \mathbb E_{\epsilon,\theta,\chi^*,\xi}\left[\sum_\ell^L\left(\delta_{\ell,\epsilon}-\frac{1}{H}\sum_{h'}^H\sigma(\chi,b,v;h')_\ell\right)^2\right] \\
\chi_{h\ell} &= \sum_f^Fm_{hf}\chi_{f\ell}^*+\sum_{h'}^Hr_{hh'}\xi_{h'}\ ,\quad h\in[H], \ell\in[L]\ .
\end{align}
with $\epsilon\sim\unif(\{1,\ldots,L\})$, $\theta\sim P_\theta$ and conditionally on $\epsilon$ and $\theta$, $\chi_{:,\ell}^*\sim\mathcal N(\delta_{\ell,\epsilon}\theta,I_F)$ and $\xi_{:,\ell}\sim\mathcal N(0,I_H)$ for $\ell\in[L]$. The gradient is
\begin{align}
\nabla_{m_h}\tilde{\mathcal{E}}_\sigma(m,r,b,v) &= 2\mathbb E_{\epsilon,\theta,\chi^*,\xi}\sum_\ell^L\left(\frac{1}{H}\sum_{h'}^H\sigma(\chi,b,v;h')_\ell-\delta_{\ell,\epsilon}\right) \sum_{h',\ell'}^{H,L}\partial_{h\ell '}\sigma(\chi,b,v;h')_\ell \chi_{:, \ell'}^*.
\end{align}

We show the gradient is collinear to $\mathbb E\theta$ if all heads align with $\mathbb{E}\theta$, i.e. $m_h = x\mathbb{E} \theta$ for all $h\in [H]$. Let $w\in\mathbb R^F$ be orthogonal to $\mathbb E\theta$.
\begin{align}
w^\top\nabla_{m_h}\tilde{\mathcal{E}}_\sigma(m,r,b,v) &= 2\mathbb E_{\epsilon,\theta,\chi,\xi}\sum_\ell^L\left(\frac{1}{H}\sum_{h'}^H\sigma(\chi,b,v;h')_\ell-\delta_{\ell,\epsilon}\right) \sum_{h',\ell'}^{H,L}\partial_{h\ell '}\sigma(\chi,b,v;h')_\ell w^\top\chi_{:,\ell'}^*
\end{align}
Then by orthogonality $w^\top\chi_{:\ell'}^*$ and $\chi_{h\ell}=x(\mathbb E\theta)^\top\chi_{:\ell}^*+\ldots$ are independent Gaussian random variables for all $\ell, \ell'$. Thus, we can factorize the expectation. Since $\mathbb E\,w^\top\chi_{:\ell'}^*=w^\top\mathbb E\theta=0$, we have $w^\top\nabla_{m_h}\tilde{\mathcal{E}}_\sigma(m,r,b,v)=0$. Moreover we consider the unspecialized manifold, and so there is a same $c^{(1)}(x,r,b,v)\in\mathbb R$ for all $h$ such that $\nabla_{m_h}\tilde{\mathcal{E}}_\sigma(m,r,b,v) = -c^{(1)}(x,r,b,v)\mathbb E\theta$.

At initialization $b=0$, $v=1$ and $m=0$, and we have $\chi_{h'}$ independent of $\{\chi^*_h\}_{h\in[H]}$ for all $h'$; thus
\begin{align}
\nabla_{m_h}\tilde{\mathcal{E}}_\sigma(0,r,0,1) &= 2\mathbb E_{\epsilon,\xi}\sum_\ell^L\left(\frac{1}{H}\sum_{h'}^H\sigma(\chi,0,1;h')_\ell-\delta_{\ell,\epsilon}\right) \sum_{h',\ell'}^{H,L}\partial_{h\ell '}\sigma(\chi,0,1;h')_\ell \mathbb E_{\theta,\chi}\chi_{:, \ell'}^* \\
&= 2\mathbb E_{\epsilon,\xi}\sum_\ell^L\left(\frac{1}{H}\sum_{h'}^H\sigma(\chi,0,1;h')_\ell-\delta_{\ell,\epsilon}\right) \sum_{h'}^{H}\partial_{h\epsilon}\sigma(\chi,0,1;h')_\ell \mathbb E\theta \\
&= 2\mathbb E_{\epsilon,\xi}\left(\frac{1}{H}\sum_{h'}^H\sigma(\chi,0,1;h')_\epsilon-1\right) \sum_{h'}^{H}\partial_{h\epsilon}\sigma(\chi,0,1;h')_\epsilon \mathbb E\theta \\
&\qquad {}+ 2\mathbb E_{\epsilon,\xi}\sum_{\ell\neq\epsilon}^L\frac{1}{H}\sum_{h'}^H\sigma(\chi,0,1;h')_\ell \sum_{h'}^{H}\partial_{h\epsilon}\sigma(\chi,0,1;h')_\ell\mathbb E\theta \nonumber
\end{align}
The two pre-factors in front of $\mathbb E\theta$ are negative because
\begin{align}
\frac{1}{H}\sum_{h'}^H\sigma(\chi,0,1;h')_\epsilon-1 &< 0 & \sum_{h'}^{H}\partial_{h\epsilon}\sigma(\chi,0,1;h')_\epsilon &> 0  \\
\frac{1}{H}\sum_{h'}^H\sigma(\chi,0,1;h')_\ell &> 0 & \sum_{h'}^{H}\partial_{h\epsilon}\sigma(\chi,0,1;h')_\ell &<0
\end{align}
for all $\epsilon\neq\ell$, $h$, $\chi$ and for the three different activation functions $\sigma$. Consequently $c^{(1)}(0,r,0,1)>0$.
\end{proof}

\begin{lemma}[Gradient of the loss with respect to $r$ at small $r$]
\label{resApp:gradRI}
Consider $r=0$, then for all $m$
\begin{align}
 \nabla_r\tilde{\mathcal{E}}_\sigma(m,0,b,v) = 0 .
\end{align}
\end{lemma}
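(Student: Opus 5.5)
The claim is that $r=0$ is a critical point of $\tilde{\mathcal E}_\sigma$ in the $r$-directions for every $m,b,v$. I will prove it with a symmetry argument rather than by computing the gradient in closed form. The loss depends on $r$ only through the noise term $\sum_{h'} r_{hh'}\xi_{h'\ell}$ in $\chi_{h\ell}$. For a fixed row $h$, this term is, in law, the Gaussian $\mathcal N(0,(rr^\top)_{hh})$ jointly over heads, with covariance $rr^\top$ across heads. Note that $\xi_{h'}$ is a vector over $\ell$, so $\xi_{h'}$ in the statement of Proposition~\ref{res:paramLoss} means $\xi_{h'\ell}$. The key observation is that $\xi\mapsto-\xi$ leaves the law of $\xi$ invariant, and that the loss is a smooth function of the scores.

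\textbf{Step 1 (differentiate under the expectation).} By the chain rule,
\begin{align}
\partial_{r_{hh'}}\tilde{\mathcal E}_\sigma(m,r,b,v)=2\,\mathbb E\sum_\ell\Big(\frac1H\sum_{h''}\sigma(\chi,b,v;h'')_\ell-\delta_{\ell,\epsilon}\Big)\sum_{h'',\ell'}\partial_{h\ell'}\sigma(\chi,b,v;h'')_\ell\,\xi_{h'\ell'} .
\end{align}
Interchanging derivative and expectation is justified because $\sigma$ and its $\chi$-derivatives are bounded for fixed $b,v$ (the same boundedness used in Lemma~\ref{resApp:LossDeviationMoments}), while $\xi$ has all moments. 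Dominated convergence therefore applies.

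\textbf{Step 2 (evaluate at $r=0$).} At $r=0$ the scores reduce to $\chi_{h\ell}=\sum_f m_{hf}\chi^*_{f\ell}$, which do not depend on $\xi$. Hence the whole prefactor multiplying $\xi_{h'\ell'}$ is a function of $(\epsilon,\theta,\chi^*)$ only, and $\xi$ is independent of these variables. Conditioning on $(\epsilon,\theta,\chi^*)$ factorizes the expectation, leaving a factor $\mathbb E\,\xi_{h'\ell'}=0$. So every partial derivative vanishes, for all $m,b,v$ and for each of the activations considered (softmax, softmax-1, softmax-v, B-softmax), since the argument never used the specific form of $\sigma$.

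\textbf{Main obstacle.} The one subtlety is the parametrization: $r=(q-mp^{-1}m^\top)^{1/2}$ is defined as a matrix square root, and its derivative is singular at $r=0$. I will avoid this by treating $r$ as the free matrix variable appearing in Proposition~\ref{res:paramLoss}, which is the variable whose gradient enters the effective dynamics of Proposition~\ref{res:sgd}. In that variable the loss is smooth at $r=0$.

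An alternative that bypasses differentiability issues entirely is to observe that $\tilde{\mathcal E}_\sigma(m,r,b,v)=\tilde{\mathcal E}_\sigma(m,-r,b,v)$, by the symmetry $\xi\to-\xi$. The loss is therefore an even, differentiable function of $r$, and its gradient at $r=0$ must vanish.
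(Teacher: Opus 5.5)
Your proof is correct. The paper's proof is exactly your closing alternative: $\tilde{\mathcal E}_\sigma$ is invariant under $(r,\xi)\mapsto(-r,-\xi)$, so it is even in $r$ and its gradient vanishes at $r=0$.

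Your main route is instead a direct first-moment computation. You differentiate under the expectation, note that at $r=0$ the scores $\chi_{h\ell}=\sum_f m_{hf}\chi^*_{f\ell}$ no longer involve $\xi$, and factor the expectation against $\mathbb E\,\xi_{h'\ell'}=0$. The two arguments buy slightly different things. The direct computation uses only that $\xi$ is centered and independent of $(\epsilon,\theta,\chi^*)$, not the full sign-symmetry of its law. It also makes explicit the interchange of derivative and expectation, via boundedness of $\sigma$ and its $\chi$-derivatives for fixed $b,v$; the paper's one-liner silently assumes this when it treats $\tilde{\mathcal E}_\sigma$ as differentiable at $r=0$. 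The symmetry argument is shorter and never needs the chain rule.

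Both arguments implicitly extend $\tilde{\mathcal E}_\sigma$ from $\mathcal S_+^H$ to a full neighbourhood of $r=0$, since $-r\notin\mathcal S_+^H$ for $r\neq 0$. Your remark about treating $r$ as a free matrix variable covers this.

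One harmless slip: your chain-rule formula drops the factor $1/H$ that comes from the $\frac{1}{H}\sum_{h''}$ inside the square. The paper's analogous formula in Lemma~\ref{resApp:gradI} drops it too. Since the whole expression vanishes, this changes nothing.
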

\begin{proof}
The loss is a symmetric function of $r$: it is invariant by the change of variables $(r,\xi)\mapsto(-r,-\xi)$. Therefore its gradient is null at $r=0$.
\end{proof}

\begin{lemma}[Gradient of the loss with respect to $b$ and $v$ at small $m$ and $r$ for the softmax-1]
\label{resApp:gradBVI}
Consider $m=0$, $r=0$ and $\sigma$ to be the softmax-1. Take unspecialized heads i.e. $b=\tilde b\mathds{1}_H$ for $\tilde b\in\mathbb R$. Then
\begin{align}
\nabla_{\tilde b}\tilde{\mathcal{E}}_\sigma(0,0,b,v) &= -2\left(\frac{Lv}{L+e^{\tilde b}}-1\right) \frac{v}{H}\frac{e^{\tilde b}}{(L+e^{\tilde b})^2} \\
\nabla_{v}\tilde{\mathcal{E}}_\sigma(0,0,b,v) &= 2\left(\frac{Lv}{L+e^{\tilde b}}-1\right)\frac{1}{L+e^{\tilde b}}
\end{align}
The fixed-points of this system satisfy $Lv=L+e^{\tilde b}$ and they are attractive. Last at initialization $b=0$ and $v=1$ and
\begin{align}
\nabla_{\tilde b}\tilde{\mathcal{E}}_\sigma(0,0,0,1) &>0 \\
\nabla_{v}\tilde{\mathcal{E}}_\sigma(0,0,0,1) &<0
\end{align}
\end{lemma}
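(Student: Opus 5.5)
At $m=0$ and $r=0$ the reparametrization of Proposition~\ref{res:paramLoss} gives $\chi_{h\ell}=0$ for every $h,\ell$, so the expectation in $\tilde{\mathcal E}_\sigma$ is only over $\epsilon$. The plan is therefore to evaluate the loss in closed form as a function of $(b,v)$ near the unspecialized point, and then differentiate.

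\textbf{Closed form of the loss.} For the softmax-1 with $\chi=0$ and general biases $b_1,\dots,b_H$, each head gives $\sigma(0,b,v;h)_\ell = v/(e^{b_h}+L)$, independently of $\ell$. The aggregated attention on each token is then
\[
a(b,v)=\frac{1}{H}\sum_h \frac{v}{L+e^{b_h}} .
\]
Summing over $\ell$, with the term $\ell=\epsilon$ contributing $(1-a)^2$ and the $L-1$ other tokens contributing $a^2$ each, I expect
\[
\tilde{\mathcal E}_\sigma(0,0,b,v)=(1-a)^2+(L-1)a^2 = 1-2a+La^2 .
\]
This holds for any $\epsilon$, so the expectation is trivial.

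\textbf{Gradients.} By the chain rule, $\partial \tilde{\mathcal E}/\partial a = 2(La-1)$. The partial derivatives of $a$ are $\partial_v a = 1/(L+e^{b_h})$ averaged over heads, and
\[
\partial_{b_h} a = -\frac{v}{H}\,\frac{e^{b_h}}{(L+e^{b_h})^2}.
\]
Evaluating at $b=\tilde b\mathds 1_H$, where $a=v/(L+e^{\tilde b})$, should reproduce exactly the two stated formulas. The formula for $\tilde b$ is to be read as the derivative with respect to each $b_h$; differentiating along $\mathds 1_H$ would only multiply it by $H$, and the sign conclusions are unchanged. I will also note that the $b_h$-gradients are equal across heads, so the flow stays on $b=\tilde b\mathds 1_H$, consistent with Lemma~\ref{resApp:invUnspMan}.

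\textbf{Fixed points and attractivity.} Both gradient components carry the factor $La-1$. The remaining factors, $1/(L+e^{\tilde b})$ and $v e^{\tilde b}/(L+e^{\tilde b})^2$, never vanish simultaneously because the first is always positive. Hence the fixed points are exactly $La=1$, that is $Lv=L+e^{\tilde b}$. For attractivity, I will show that $g=La-1$ obeys along the flow
\[
\dot g = L\,(\partial_v a\,\dot v+\textstyle\sum_h\partial_{b_h}a\,\dot b_h) = -2L\,g\,\bigl(|\partial_v a|^2+\textstyle\sum_h|\partial_{b_h}a|^2\bigr).
\]
The bracket is at least $(L+e^{\tilde b})^{-2}>0$, so $g$ decays to $0$. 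The fixed set is thus attracting, in the transverse direction; the flow is neutral along the one-dimensional curve of minimizers. Equivalently, $\tilde{\mathcal E}$ is a convex function of $a$ minimized at $a=1/L$.

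\textbf{Signs at initialization.} With $b=0$ and $v=1$ we have $a=1/(L+1)$, so $La-1=-1/(L+1)<0$. Substituting, $\nabla_{\tilde b}$ is $-2\cdot(\text{negative})\cdot(\text{positive})>0$ and $\nabla_v$ is $2\cdot(\text{negative})\cdot(\text{positive})<0$. Intuitively, the attention initially underweights tokens, so the flow increases $v$ and decreases $b$.

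\textbf{Main subtlety.} There is no real analytic obstacle. The delicate points are bookkeeping: the interpretation of $\nabla_{\tilde b}$ (per-head versus along $\mathds 1_H$, which differ by a factor $H$), and the fact that "attractive" can only mean transversally attractive, since the fixed points form a curve.
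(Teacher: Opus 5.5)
Your proposal is correct and takes the same route as the paper, whose proof is only ``a straightforward computation of the derivatives''. At $m=r=0$ the loss reduces to $1-2a+La^2$ with $a=\frac{1}{H}\sum_h v/(L+e^{b_h})$, and your per-head reading of $\nabla_{\tilde b}$, the fixed-point set $La=1$ and the signs at $b=0$, $v=1$ all check out. Your transversal-attractivity argument via $\dot g=-2Lg(\cdots)$ is more explicit than the paper's. One small fix: concluding $g\to 0$ needs the bracket bounded below along the trajectory. That holds locally near the fixed curve, and from $b=0,v=1$ it holds because $g<0$ and $v\geq 1$ force $\tilde b$ to decrease monotonically.
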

\begin{proof}
The proof is a straightforward computation of the derivatives.
\end{proof}

\begin{lemma}[Hessian of the loss at small $m$ and $r$]
\label{resApp:hessI}
Consider $m\in\mathbb R^{H\times F}$ in the space orthogonal to $\mathbb E\theta$ i.e. $m_h^\top\mathbb E\theta=0$ for all $h$. Take $r\in\mathcal S_+^H$. Assume that $b$ is not specialized, i.e. $b_h=\tilde b$ for all $h$. For the softmax-1 assume that $b$ and $v$ reached the fixed-point described by Lemma \ref{resApp:gradBVI} i.e. $Lv=L+e^{\tilde b}$. The loss around $m\approx 0$, $r\approx 0$, $b$ and $v$ can be expanded as
\begin{align}
& \tilde{\mathcal{E}}_\sigma(m,r,b+\bar b,v) = \tilde{\mathcal{E}}_\sigma(0,0,b,v) + c_1^{(2)} \sum_{h,h'}^H(r^2)_{hh'} + c_5^{(2)}(\bar b^\top\mathds{1}_H)^2 \\
&\qquad {}+ \left(\mathds{1}_H\mathds{1}_H^\top\otimes(c_1^{(2)}I_F+(c_2^{(2)}+c_4^{(2)})\cov\theta)-(c_3^{(2)}+c_4^{(2)})I_H\otimes\cov\theta\right)\cdot(m,m) \nonumber \\
&\qquad {}+ \mathcal O((||r||_F^2+||b||_2^2+||m||_F^2)^2) \nonumber
\end{align}
with $\otimes$ the tensorial product between the spaces $\mathbb R^H$ and $\mathbb R^F$, and
$c_1^{(2)}, c^{(2)}_2, c^{(2)}_3\in\mathbb R$ strictly positive and $c^{(2)}_4\in\mathbb R, c^{(2)}_5\in\mathbb R$ positive for all $L\geq 3$.
\end{lemma}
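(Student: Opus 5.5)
We will Taylor-expand the reparametrized loss of Proposition~\ref{res:paramLoss} to second order around the point $(m,r,\bar b)=(0,0,0)$, with $b=\tilde b\mathds{1}_H$ fixed. Write $\chi_{h\ell}=\tilde b\,[\text{if relevant}]+u_{h\ell}$ with $u_{h\ell}=m_h^\top\chi^*_{:\ell}+(r\xi_{:\ell})_h$, where the bias perturbation $\bar b$ enters additively in the scores for softmax-1 and B-softmax. At $u=0$ all heads have uniform attention $\sigma_\ell=s_0$ (equal to $1/L$ for softmax and B-softmax, and to $v/(L+e^{\tilde b})=1/L$ for softmax-1 at the fixed point of Lemma~\ref{resApp:gradBVI}). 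We expand $\frac1H\sum_h\sigma(\chi;h)_\ell=\frac1L+\sum_{h,\ell'}A_{\ell\ell'}u_{h\ell'}+\frac12\sum B\,u u+\dots$, using the explicit first and second derivatives of each activation at the uniform point. The loss is $\mathbb E\sum_\ell(\delta_{\ell\epsilon}-\frac1L-\Delta_\ell)^2$, so the quadratic part is
\[
\mathbb E\sum_\ell\Delta^{(1)2}_\ell-2\,\mathbb E\sum_\ell(\delta_{\ell\epsilon}-\tfrac1L)\Delta^{(2)}_\ell .
\]

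The next step is to compute the Gaussian moments. Conditionally on $(\epsilon,\theta)$, $\chi^*_{:\ell}$ has mean $\delta_{\ell\epsilon}\theta$ and identity covariance, and $\xi$ is independent and centered. Because $m\mathbb E\theta=0$, all terms linear in $m$ vanish, and the cross terms between $m$ and $r$ vanish by the symmetry $(r,\xi)\to(-r,-\xi)$ (as in Lemma~\ref{resApp:gradRI}). The term $\mathbb E\sum_\ell\Delta^{(1)2}_\ell$ involves only the sum over heads $\sum_h u_{h\ell}$, because at the uniform point the first-order response is head-symmetric; for B-softmax this holds up to the cross-head normalization, which is also symmetric. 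Its noise part gives $\|\sum_h m_h\|^2+\sum_{hh'}(r^2)_{hh'}$ times a constant, which produces the $c_1^{(2)}\mathds{1}\mathds{1}^\top\otimes I_F$ term and the $r$ term with the same constant. Its signal part gives $\mathbb E(\sum_h m_h^\top\theta)^2=(\sum m_h)^\top\cov\theta(\sum m_h)$, which contributes to $c_2^{(2)}$. The term involving $(\delta_{\ell\epsilon}-1/L)\Delta^{(2)}$ picks up $\mathbb E[u_{h\epsilon}u_{h'\epsilon}]-\mathbb E[u_{h\ell}u_{h'\ell}]$, which equals $m_h^\top\cov\theta\, m_{h'}$ since $\mathbb E\theta\theta^\top=\cov\theta$ on this subspace. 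This splits into a diagonal part (self-head second derivative of the softmax), giving $-c_3^{(2)}I_H\otimes\cov\theta$, and an off-diagonal part (present only for B-softmax and through cross terms), which appears in the combination $c_4^{(2)}(\mathds{1}\mathds{1}^\top-I_H)\otimes\cov\theta$. The constant $c_5^{(2)}$ comes from $\mathbb E\sum_\ell\Delta_\ell^2$ with $\Delta$ linear in $\bar b$. Only the total $\bar b^\top\mathds{1}$ changes the total mass (softmax-1) or does anything at all (B-softmax, where a uniform shift of all heads cancels), which explains the form $(\bar b^\top\mathds{1}_H)^2$ and why $c_5^{(2)}\ge0$ with equality for B-softmax and softmax. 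Cross terms between $\bar b$ and $m$ or $r$ vanish at linear order by centering.

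Finally we verify the signs. The coefficients of the quadratic term $\mathbb E\Delta^{(1)2}$ are manifestly positive, which gives $c_1^{(2)},c_2^{(2)}>0$. For $c_3^{(2)}$ we need the second-order softmax response $\partial^2_{\ell\ell}\sigma_\ell-\partial^2_{\ell'\ell'}\sigma_\ell$ at the uniform point, which is of the form $\frac1L(1-\frac1L)(1-\frac2L)$ minus a similar term. We expect this to give positivity exactly when $L\ge3$, and that is where the hypothesis $L\ge 3$ enters. We expect this explicit sign bookkeeping, together with correctly handling the softmax-1 fixed-point rescaling $v$ and the cross-head denominator of B-softmax, to be the main obstacle. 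We will do it by writing the three activations in the common form $\sigma_h=a\,e^{\chi_h}/(c+\text{normalizer})$ and computing the derivatives once. The remainder is of order $\mathcal O((\|r\|^2+\|b\|^2+\|m\|^2)^2)$ because the odd-order terms vanish by the symmetries above and the derivatives of $\sigma$ are bounded.
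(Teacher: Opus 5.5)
Your proposal is correct and follows essentially the same route as the paper's proof: a second-order expansion of the head-averaged activation around the uniform value $1/L$ (for softmax-1 this is where the fixed point $Lv=L+e^{\tilde b}$ is used), Gaussian moments in which linear and cross terms vanish by $\xi$-centering and $m\mathbb E\theta=0$, removal of the token-homogeneous second-derivative terms via $\sum_\ell(\delta_{\ell\epsilon}-\tfrac1L)=0$, and the same identification of $c_1^{(2)},\dots,c_5^{(2)}$; the paper simply tabulates the values you leave as ``expected'' (e.g.\ $c_3^{(2)}=H^{-1}(L-1)L^{-2}(1-2L^{-1})$ for softmax, and $c_4^{(2)}=2(L-1)L^{-1}(HL)^{-2}\geq 0$ for B-softmax, a sign you did not check). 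One caveat, which the paper's proof does not address either: your claim that odd orders vanish by symmetry is false in general, so both your argument and the paper's establish only the quadratic form with an $\mathcal O(\|\cdot\|^3)$ remainder, which is all the Hessian statement needs. Cubic terms carrying third moments of $m_h^\top\theta$ survive for asymmetric $P_\theta$ (e.g.\ the $F>2$ flipping spike, where $\mathbb E(m_h^\top\theta)^3\propto\sum_f m_{hf}^3\neq 0$ for generic $m_h\perp\mathds{1}_F$), as do terms cubic in $\bar b$ and mixed $\bar b\,m\,m$ terms for softmax-1.
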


\begin{proof}
$\nabla^2_{r,m}\tilde{\mathcal{E}}_\sigma(0,0,b,v)=0$ because the gradient w.r.t. $r$ brings a $\xi$ while the gradient w.r.t. $m$ brings a $\chi^*$. $\xi$ is centered and independent of $\chi^*$ thus the expectation is null. The same reasoning holds for $\nabla^2_{r,b}\tilde{\mathcal{E}}_\sigma(0,0,b,v)=0$. We also have $\nabla^2_{m,b}\tilde{\mathcal{E}}_\sigma(0,0,b,v) \cdot (m,b)=0$ because we consider $m\mathbb E\theta=0$.

For the Hessian w.r.t. $r$ we expand the loss around $\chi=r\xi\approx 0$ :
{\small
\begin{align}
& \tilde{\mathcal{E}}_\sigma(0,r,b,v) = \mathbb E_{\epsilon,\xi}\left[\sum_\ell^L\left(\delta_{\ell,\epsilon}-\frac{1}{H}\sum_{h}^H\sigma(r\xi,b,v;h)_\ell\right)^2\right] \\
&\quad = \mathbb E_{\epsilon,\xi}\left[\sum_\ell^L\left(\delta_{\ell,\epsilon}-\frac{1}{H}\sum_{h}^H\left(\sigma(0,b,v;h)_\ell+\sum_{h',\ell'}\partial_{h'\ell'}\sigma(0,b,v;h)_\ell r_{h'}^\top\xi_{:\ell'} \right.\right.\right.\\
&\qquad \left.\left.\left.{}+\frac{1}{2}\sum_{h',\ell',h'',\ell''}\partial^2_{h'\ell',h''\ell''}\sigma(0,b,v;h)_\ell r_{h'}^\top\xi_{:\ell'}r_{h''}^\top\xi_{:\ell''}\right)\right)^2\right] \nonumber \\
&\quad = \mathbb E_{\epsilon,\xi}\left[\sum_\ell^L\left(\delta_{\ell,\epsilon}-H^{-1}\sum_h\sigma(0,b,v;h)_\ell\right)^2 - \sum_\ell^L\frac{1}{H}\left(\delta_{\ell,\epsilon}-H^{-1}\sum_h\sigma(0,b,v;h)_\ell\right) \right.\\
&\quad \times \mkern-6mu\sum_{h,h',\ell',h'',\ell''}\partial^2_{h'\ell',h''\ell''}\sigma(0,b,v;h)_\ell r_{h'}^\top\xi_{:\ell'}r_{h''}^\top\xi_{:\ell''} + \left.\sum_\ell^L\frac{1}{H^2}\left(\sum_{h,h',\ell'}\partial_{h'\ell'}\sigma(0,b,v;h)_\ell r_{h'}^\top\xi_{:\ell'}\right)^2\right] \nonumber \\
&\quad = \tilde{\mathcal{E}}_\sigma(0,0,b,v) + \sum_\ell^L\frac{1}{H}\left(H^{-1}\sum_h\sigma(0,b,v;h)_\ell-\delta_{\ell,1}\right)\sum_{h,h',h'',\ell'}\partial^2_{h'\ell',h''\ell'}\sigma(0,b,v;h)_\ell r_{h'}^\top r_{h'} \nonumber\\
&\qquad {}+ \sum_\ell^L\frac{1}{H^2}\sum_{h_1,h_1',h_2,h_2',\ell'}\partial_{h_1'\ell'}\sigma(0,b,v;h_1)_\ell\partial_{h_2'\ell'}\sigma(0,b,v;h_2)_\ell r_{h_1'}^\top r_{h_2'} \\
&\quad = \tilde{\mathcal{E}}_\sigma(0,0,b,v) + \frac{1}{H^2}\sum_{h',h''}\sum_{\ell,\ell'}\left(\sum_{h}\partial_{h'\ell'}\sigma(0,b,v;h)_\ell\right)\left(\sum_{h}\partial_{h''\ell'}\sigma(0,b,v;h)_\ell\right) r_{h'}^\top r_{h''}
\end{align}
}
where we took the expectation over $\xi_\ell\sim\mathcal N(0,I_H)$ and discarded the term of order one in $r$ with null expectation. We simplified $\sum_\ell^L\left(H^{-1}\sum_h\sigma(0,b,v;h)_\ell-\delta_{\ell,1}\right)\times(\mathrm{function\ independent\ of\ }\ell)=0$, using that $b$ is not specialized and using the fixed-point condition for $b$ and $v$ for the softmax-1. Consequently,
\begin{align}
\tilde{\mathcal{E}}_\sigma(0,r,b,v) &= \tilde{\mathcal{E}}_\sigma(0,0,b,v) + c_1^{(2)}\sum_{h',h''}(r^2)_{h'h''}
\end{align}
with the constant
\begin{align}
c_1^{(2)} &= \frac{1}{H^2}\sum_{\ell,\ell'}\left(\sum_{h}\partial_{1\ell'}\sigma(0,b,v;h)_\ell\right)^2 >0\ .
\end{align}
We chose the particular index 1 for the derivative because of the permutation invariance with respect to the heads.

For the Hessian w.r.t. $m$ we perform a similar derivation, expanding the loss around $\chi=m\chi^*\approx 0$. We take in account the fact that $\chi^*_{f\ell}\sim\mathcal N(\delta_{\ell,\epsilon}\theta_f,1)$.
{\small
\begin{align}
& \tilde{\mathcal{E}}_\sigma(m,0,b,v) = \mathbb E_{\epsilon,\theta,\chi^*}\left[\sum_\ell^L\left(\delta_{\ell,\epsilon}-\frac{1}{H}\sum_{h}^H\sigma(m\chi^*,b,v;h)_\ell\right)^2\right] \\
& = \mathbb E_{\epsilon,\theta,\chi^*}\left[\sum_\ell^L\left(\delta_{\ell,\epsilon}-\frac{1}{H}\sum_{h}^H\left(\sigma(0,b,v;h)_\ell+\sum_{h',\ell'}\partial_{h'\ell'}\sigma(0,b,v;h)_\ell m_{h'}^\top\chi^*_{:\ell'}\right.\right.\right. \\
&\quad\left.\left.\left.{}+\frac{1}{2}\sum_{h',\ell',h'',\ell''}\partial^2_{h'\ell',h''\ell''} \sigma(0,b,v;h)_\ell m_{h'}^\top\chi^*_{:\ell'}m_{h''}^\top\chi^*_{:\ell''}\right)\right)^2\right] \nonumber \\
& = \mathbb E_{\epsilon,\theta,\chi^*}\left[\sum_\ell^L\left(\delta_{\ell,\epsilon}-H^{-1}\sum_h\sigma(0,b,v;h)_\ell\right)^2 + \sum_\ell^L\frac{1}{H^2}\left(\sum_{h,h',\ell'}\partial_{h'\ell'}\sigma(0,b,v;h)_\ell m_{h'}^\top\chi^*_{:\ell'}\right)^2\right. \nonumber \\
&\quad\left.{}- 2\sum_\ell^L\frac{1}{H}\left(\delta_{\ell,\epsilon}-H^{-1}\sum_h\sigma(0,b,v;h)_\ell\right)\sum_{h,h',\ell'}\partial_{h'\ell'}\sigma(0,b,v;h)_\ell m_{h'}^\top\chi^*_{:\ell'}  \right. \\
&\quad\left. {}- \sum_\ell^L\frac{1}{H}\left(\delta_{\ell,\epsilon}-H^{-1}\sum_h\sigma(0,b,v;h)_\ell\right)\sum_{h,h',\ell',h'',\ell''}\partial^2_{h'\ell',h''\ell''}\sigma(0,b,v;h)_\ell m_{h'}^\top\chi^*_{:\ell'}m_{h''}^\top\chi^*_{:\ell''} \right] \nonumber \\
& = \tilde{\mathcal{E}}_\sigma(0,0,b,v) + 2\sum_\ell^L\frac{1}{H}\left(H^{-1}\sum_h\sigma(0,b,v;h)_\ell-\delta_{\ell,1}\right)\sum_{h,h'}\partial_{h'1}\sigma(0,b,v;h)_\ell m_{h':}^\top\mathbb E\theta \\
&{}+ \sum_\ell^L\frac{1}{H}\mkern-5mu \left(\mkern-5mu H^{-1}\sum_h\sigma(0,b,v;h)_\ell-\delta_{\ell,1}\mkern-5mu \right)\mkern-8mu \sum_{h,h',\ell',h''}\mkern-15mu \partial^2_{h'\ell',h''\ell'}\sigma(0,b,v;h)_\ell \sum_{f,f'}^Fm_{h'f}m_{h''f'}(\delta_{f,f'}+\delta_{\ell',1}\mathbb E\theta_f\theta_{f'}) \nonumber \\
&{}+ \sum_\ell^L\frac{1}{H^2}\sum_{h_1,h_1',h_2,h_2',\ell'}\partial_{h_1'\ell'}\sigma(0,b,v;h_1)_\ell\partial_{h_2'\ell'}\sigma(0,b,v;h_2)_\ell \sum_{f,f'}^Fm_{h_1'f} m_{h_2'f'}(\delta_{f,f'}+\delta_{\ell',1}\mathbb E\theta_f\theta_{f'}) \nonumber \\
& = \tilde{\mathcal{E}}_\sigma(0,0,b,v) + \sum_\ell^L\frac{1}{H}\mkern-5mu \left(\mkern-5mu H^{-1}\sum_h\sigma(0,b,v;h)_\ell-\delta_{\ell,1}\mkern-5mu \right)\mkern-8mu \sum_{h,h',h''}\mkern-8mu \partial^2_{h'1,h''1}\sigma(0,b,v;h)_\ell \sum_{f,f'}^Fm_{h'f}m_{h''f'}\mathbb E\theta_f\theta_{f'} \\
&{}+ \frac{1}{H^2}\sum_{h,h'}\sum_{\ell,\ell'}\left(\sum_h\partial_{h'\ell'}\sigma(0,b,v;h)_\ell\right)\left(\sum_h\partial_{h''\ell'}\sigma(0,b,v;h)_\ell\right) \sum_{f,f'}^Fm_{h'f} m_{h''f'}(\delta_{f,f'}+\delta_{\ell',1}\mathbb E\theta_f\theta_{f'}) \nonumber
\end{align}
}
We consider the space orthogonal to $\mathbb E\theta$, i.e. $m_{h:}^\top\mathbb E\theta=0$ for all $h$. We simplified $\sum_\ell^L\left(H^{-1}\sum_h\sigma(0,b,v;h)_\ell-\delta_{\ell,1}\right)\times(\mathrm{function\ independent\ of\ }\ell)=0$. We introduce the covariance of $\theta$. Consequently,
\begin{align}
\tilde{\mathcal{E}}_\sigma(m,0,b,v) &= \tilde{\mathcal{E}}_\sigma(0,0,b,v) + c^{(2)}_1\sum_{h',h''}m_{h'}^\top m_{h''} + (c^{(2)}_2+c^{(2)}_4)\sum_{h',h''}m_{h'}^\top\cov(\theta) m_{h''} \nonumber\\
&\qquad {}- (c^{(2)}_3+c^{(2)}_4)\sum_{h'}m_{h'}^\top\cov(\theta) m_{h'}\\
c_1^{(2)} &= \frac{1}{H^2}\sum_{\ell,\ell'}\left(\sum_{h}\partial_{1\ell'}\sigma(0,b,v;h)_\ell\right)^2 >0\\
c^{(2)}_2 &= \frac{1}{H^2}\sum_{\ell,\ell'}\left(\sum_{h}\partial_{11}\sigma(0,b,v;h)_\ell\right)^2 >0 \\
c^{(2)}_3 &= -\frac{1}{H}\sum_\ell^L\left(H^{-1}\sum_h\sigma(0,b,v;h)_\ell-\delta_{\ell,1}\right)\sum_{h}\partial^2_{11,11}\sigma(0,b,v;h)_\ell \\
c^{(2)}_4 &= \frac{1}{H}\sum_\ell^L\left(H^{-1}\sum_h\sigma(0,b,v;h)_\ell-\delta_{\ell,1}\right)\sum_{h}\partial^2_{11,21}\sigma(0,b,v;h)_\ell
\end{align}
so the loss is
\begin{align}
& \tilde{\mathcal{E}}_\sigma(m,0,b,v) = \tilde{\mathcal{E}}_\sigma(0,0,b,v) \\
& \qquad\qquad{}+\left(\mathds{1}_H\mathds{1}_H^\top\otimes(c_1^{(2)}I_F+(c_2^{(2)}+c_4^{(2)})\cov\theta)-(c_3^{(2)}+c_4^{(2)})I_H\otimes\cov\theta\right)\cdot(m,m)\ .\nonumber
\end{align}
Between $c_3^{(2)}$ and $c_4^{(2)}$ we distinguished the cases $h'=h''$ and $h'\neq h''$. We compute these two constants for the different activation functions. We have\\
\begin{center}
\begin{tabular}{ccc}
\hline\hline
$\sigma$ & $c_3^{(2)}$ & $c_4^{(2)}$ \\
\hline
softmax & $H^{-1}(L-1)L^{-2}(1-2L^{-1})$ & 0 \\
softmax-1 & $vH^{-1}(L-1)L^{-1}(L+e^{\tilde b})^{-1}(1-2(L+e^{\tilde b})^{-1})$ & 0 \\
B-softmax & $H^{-1}(L-1)L^{-2}(1-2(HL)^{-1})$ & $(L-1)L^{-1}2(HL)^{-2}$ \\
\hline\hline
\end{tabular}
\end{center}
Consequently for all $L\geq 3$ one has $c_3^{(2)}>0$ and $c_4^{(2)}\geq 0$.

For the Hessian w.r.t. $b$ we compute that for the softmax-1
\begin{align}
\partial^2_{b_h,b_h'}\tilde{\mathcal{E}}_\sigma(0,0,b,v) &= 2\frac{v^2L}{H^2}\frac{e^{2\tilde b}}{(L+e^{\tilde b})^4}
\end{align}
and for the B-softmax
\begin{align}
\partial^2_{b_h,b_h'}\tilde{\mathcal{E}}_\sigma(0,0,b,v) &= 0\ .
\end{align}
\end{proof}

\begin{lemma}[Hessian of the loss at small but finite $m$]
\label{resApp:diff4}
We take $\sigma$ softmax. We assume that $\mathbb E\theta=0$ and that the $\theta_f$ are independent. Let $n$ be an integer, $\bar f_1,\ldots,\bar f_n\in[F]^{n}$ all different, $\bar m\in\mathbb R^{H\times F}$ and assume that $\bar m_{:f}=0$ for all $f\notin\{\bar f_1,\ldots,\bar f_n\}$. Pick $f\notin\{\bar f_1,\ldots,\bar f_n\}$; then the Hessian of the loss is
\begin{align}
\nabla_{m_{:f},m_{:f}}^2\tilde{\mathcal{E}}_\sigma(\bar m,0,0,0) &= \nabla_{m_{:f},m_{:f}}^2\tilde{\mathcal{E}}_\sigma(0,0,0,0) +
\sum_i^nc_{1,i}^{(4)}||\bar m_{:\bar f_i}||_2^2 I_H \\
& {}+\sum_i^nc^{(4)}_{2,i}\mathrm{Diag}(\bar m_{:\bar f_i}^{\odot 2}) + \sum_i^nc^{(4)}_{3,i}\bar m_{:\bar f_i}\bar m_{:\bar f_i}^\top + M+\mathcal O(||\bar m||_F^4) \nonumber
\end{align}
where $M$ is a quadratic form that cancels when $m_{:f}^\top\mathds{1}_H=0$ and $\bar m^\top\mathds{1}_H=0$, and where $c^{(4)}_{1,i}>0, c^{(4)}_{2,i}\in\mathbb R, c^{(4)}_{3,i}>0$ for all $L\geq 3$ does not depend on $\bar m$.
\end{lemma}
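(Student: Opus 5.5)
We will Taylor expand the reparametrized loss of Proposition~\ref{res:paramLoss} to fourth order in $m$ at $r=0$, $b=0$ (softmax, so $b,v$ play no role), and isolate the part that is quadratic in $m_{:f}$ and quadratic in $\bar m$. Write the total magnetization as $\bar m+m_{:f}e_f^\top$. Then
\[
\chi_{h\ell}=\sum_i\bar m_{h\bar f_i}\chi^*_{\bar f_i\ell}+m_{hf}\chi^*_{f\ell}.
\]
The loss is a smooth function of $\chi$, so we expand $\sigma(\chi;h)_\ell$ around $\chi=0$ up to fourth order in $\chi$. Then we take the expectation over $\chi^*$. Conditionally on $\epsilon,\theta$, the $\chi^*_{f\ell}$ are independent Gaussians with mean $\delta_{\ell\epsilon}\theta_f$. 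Using $\mathbb E\theta=0$ and the independence of the $\theta_f$, every monomial with an odd total power in the feature-$f$ variables vanishes, and so does every monomial with an odd total power in the variables of any single $\bar f_i$. Hence the $m_{:f}$–Hessian at $\bar m$ equals the Hessian at $0$ plus terms of the form (quadratic in $m_{:f}$)$\times$(quadratic in $\bar m_{:\bar f_i}$ for a single $i$), plus $\mathcal O(\|\bar m\|_F^4)$. Cross terms between two different $\bar f_i,\bar f_j$ appear only as products $\bar m_{:\bar f_i}\bar m_{:\bar f_j}$ with odd powers in each feature, so they drop. This gives the sum over $i$ with no mixing.

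Next we classify the head-index structure of each surviving fourth-order term $T_{h_1h_2h_3h_4}\,m_{h_1f}m_{h_2f}\bar m_{h_3\bar f_i}\bar m_{h_4\bar f_i}$. Each head $h$ enters the loss only through its own softmax $\sigma(\chi;h)$, and the output is $\frac1H\sum_h\sigma(\chi;h)$. The tensor $T$ therefore comes from two kinds of terms.
\begin{itemize}
\item The cross term $(\frac1H\sum_h\sigma_h-\delta)\cdot\frac1H\sum_h\partial^4\sigma_h$ involves a single head, so it is supported on $h_1=h_2=h_3=h_4$.
\item The square term $\frac1{H^2}\sum\,\partial^a\sigma_h\,\partial^b\sigma_{h'}$ with $a+b=4$ gives pairings of the heads.
\end{itemize}
By permutation symmetry of the heads (as in Lemma~\ref{resApp:invUnspMan}), $T$ is a combination of the invariant tensors $\delta_{h_1h_2h_3h_4}$, $\delta_{h_1h_2}\delta_{h_3h_4}$, $\delta_{h_1h_3}\delta_{h_2h_4}$ (and its symmetrization), and tensors containing an all-ones factor in some index. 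Contracting against $m_{:f}$, these produce respectively $\mathrm{Diag}(\bar m_{:\bar f_i}^{\odot2})$, $\|\bar m_{:\bar f_i}\|_2^2I_H$, $\bar m_{:\bar f_i}\bar m_{:\bar f_i}^\top$, and terms containing $m_{:f}^\top\mathds 1_H$ or $\bar m^\top\mathds 1_H$. We collect the last group into $M$, which vanishes under the two stated orthogonality conditions.

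The remaining task is to compute $c^{(4)}_{1,i}$ and $c^{(4)}_{3,i}$ explicitly and show that they are positive for $L\ge3$. We use the softmax derivatives at $\chi=0$: $\partial_\ell\sigma_{\ell'}=L^{-1}(\delta_{\ell\ell'}-L^{-1})$, with higher derivatives given by the standard cumulant-like polynomials in $L^{-1}$. We also use Gaussian moments: $\mathbb E\chi^{*2}_{f\ell}=1+\delta_{\ell\epsilon}\nu_f$ and $\mathbb E\chi^{*4}=3+6\delta\nu+\delta\,\mathbb E\theta^4$, and we average over $\epsilon$. The pairing coefficients come mainly from $\frac1{H^2}(\sum\partial^2\sigma)(\sum\partial^2\sigma)$ and $\frac1{H^2}(\partial\sigma)(\partial^3\sigma)$ with heads split across the two factors. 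These should reduce to $L^{-k}$-polynomials times $\mathbb E\theta_f^2\,\mathbb E\theta_{\bar f_i}^2$ plus noise-only parts. The main obstacle is this sign check: the single-head quartic term also feeds into $c_{2,i}$, which we allow to have any sign. For $c_{1,i}$ and $c_{3,i}$ we would first verify positivity symbolically in $L$, as was done for the table in Lemma~\ref{resApp:hessI}, by collecting the contributions with $\ell'=\epsilon$ versus $\ell'\neq\epsilon$. If a closed-form sign proof proves messy, we fall back on showing the polynomial in $1/L$ is positive for $L\ge3$ by bounding the leading term.
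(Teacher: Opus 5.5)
Your argument is essentially the paper's: a fourth-order expansion (the paper goes to second order in $m_{:f}$ around $\bar m$, then second order in $\bar m$), then mean zero plus feature independence to kill the terms linear in $\bar m_{:\bar f_i}$ and the cross terms between different $\bar f_i$, then sorting by head structure with every $\mathds{1}_H$-contraction sent to $M$. Your classification via head-permutation-invariant tensors is a tidier substitute for the paper's explicit bookkeeping. It is complete, since the only partitions of the four head slots without a singleton block are $\{1,2,3,4\}$, $\{1,2\}\{3,4\}$, $\{1,3\}\{2,4\}$ and $\{1,4\}\{2,3\}$.

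Two points in your plan need fixing.

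First, the sign check you call the main obstacle is immediate, but you have its sources partly wrong. Write $\frac1H\sum_h\sigma(\chi;h)_\ell=\sigma(0)_\ell+\frac1H\sum_{k\ge1}G^{(k)}_\ell$, with $G^{(k)}_\ell$ homogeneous of degree $k$ in $\chi$. Then $G^{(1)}_\ell=\nabla\sigma(0)_\ell^\top\sum_h\chi_h$ is a combination of $m_{:f}^\top\mathds{1}_H$ and $\bar m_{:\bar f_i}^\top\mathds{1}_H$, so the $(\partial\sigma)(\partial^3\sigma)$ products land entirely in $M$. The single-head $\partial^4$ term feeds only $\mathrm{Diag}$. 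Hence $c^{(4)}_{1,i}$ and $c^{(4)}_{3,i}$ come solely from $(G^{(2)}_\ell)^2$, where $G^{(2)}_\ell=\frac12\sum_h\chi_h^\top A_\ell\chi_h$ and $A_\ell=\nabla^2\sigma(0)_\ell$. Let $a=\chi^*_{f:}$ and $b=\chi^*_{\bar f_i:}$. Given $\epsilon$ these are independent, with second-moment matrices $\Sigma_a=I_L+\var(\theta_f)e_\epsilon e_\epsilon^\top$ ($e_\epsilon\in\mathbb R^L$) and $\Sigma_b$ analogously. Using also $\mathrm{tr}\,A_\ell=0$, the coefficients of $\|m_{:f}\|_2^2\|\bar m_{:\bar f_i}\|_2^2$ and of $(m_{:f}^\top\bar m_{:\bar f_i})^2$ in the loss are
\[
\frac{\var(\theta_f)\var(\theta_{\bar f_i})}{2H^2}\,\mathbb E_\epsilon\sum_\ell (A_\ell)_{\epsilon\epsilon}^2
\quad\text{and}\quad
\frac{1}{H^2}\,\mathbb E_\epsilon\sum_\ell \|\Sigma_b^{1/2}A_\ell\Sigma_a^{1/2}\|_F^2 .
\]
Both are manifestly nonnegative. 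They are strictly positive for $L\ge3$ (given nonzero variances), since $A_\ell\neq0$ and $(A_\ell)_{\epsilon\epsilon}$ is a nonzero multiple of $1-2/L$; at $L=2$ the softmax is a logistic function and $A_\ell=0$. No fourth moments of $\chi^*$ enter, because each feature appears exactly twice. No fallback bound on a polynomial in $1/L$ is needed.

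Second, $\mathbb E\theta=0$ removes only the terms linear in $\bar m_{:\bar f_i}$. The cubic terms carry $\mathbb E\theta_{\bar f_i}^3$. So your claim that all odd powers in a single $\bar f_i$ vanish requires, e.g., symmetric $\theta_{\bar f_i}$, and so does the $\mathcal O(\|\bar m\|_F^4)$ remainder, which the paper also asserts without justification. Otherwise the remainder is only $\mathcal O(\|\bar m\|_F^3)$.
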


\begin{proof}
The proof is a Taylor expansion of the loss to the 2nd order in $m_f$ around $\bar m$ and to the 2nd order in $\bar m$ around 0. We take $m\in\mathbb R^{H\times F}$ with $m_{\bar f_i}=0$ for all $\bar f_i$; it encompasses the case of a matrix where only the $f$-th column is not null equal to $m_f$. Since we consider $\sigma$ softmax, to lighten the notation we write $\sigma(\chi_h)$ for $\sigma(\chi,b,v;h)$.
{\small
\begin{align}
& \tilde{\mathcal{E}}_\sigma(m+\bar m,0,0,0) = \mathbb E_{\epsilon,\theta,\chi^*}\left[\sum_\ell^L\left(\delta_{\ell,\epsilon}-\frac{1}{H}\sum_{h}^H\sigma(m_{h}^\top\chi^*)_\ell\right)^2\right] \\
& = \mathbb E_{\epsilon,\theta,\chi^*}\left[\sum_\ell^L\left(\delta_{\ell,\epsilon}-\frac{1}{H}\sum_{h}^H\left(\sigma(\bar m_{h}^\top\chi^*)_\ell+\sum_{\ell'}\partial_{\ell'}\sigma(\bar m_{h}^\top\chi^*)_\ell m_{h}^\top\chi^*_{:\ell'} \right.\right.\right.\\
&\qquad\left.\left.\left.{}+\frac{1}{2}\sum_{\ell',\ell''}\partial^2_{\ell'\ell''}\sigma(\bar m_{h}^\top\chi^*)_\ell m_{h}^\top\chi^*_{:\ell'}m_{h}^\top\chi^*_{:\ell''}\right)\right)^2\right] \nonumber \\
& = \mathbb E_{\epsilon,\theta,\chi^*}\left[\sum_\ell^L\left(\delta_{\ell,\epsilon}-\frac{1}{H}\sum_{h}^H\sigma(\bar m_{h}^\top\chi^*)_\ell\right)^2 + \sum_\ell^L\frac{1}{H^2}\left(\sum_{h,\ell'}\partial_{\ell'}\sigma(\bar m_{h}^\top\chi^*)_\ell m_{h}^\top\chi^*_{:\ell'}\right)^2 \right. \\
&\quad {}- 2\sum_\ell^L\left(\delta_{\ell,\epsilon}-\frac{1}{H}\sum_{h}^H\sigma(\bar m_{h}^\top\chi^*)_\ell\right)\frac{1}{H}\sum_{h,\ell'}\partial_{\ell'}\sigma(\bar m_{h}^\top\chi^*)_\ell m_{h}^\top\chi^*_{:\ell'} \nonumber \\
&\quad \left.{}- \sum_\ell^L\left(\delta_{\ell,\epsilon}-\frac{1}{H}\sum_{h}^H\sigma(\bar m_{h}^\top\chi^*)_\ell\right)\frac{1}{H}\sum_{h,\ell',\ell''}\partial^2_{\ell'\ell''}\sigma(\bar m_{h}^\top\chi^*)_\ell m_{h}^\top\chi^*_{:\ell'}m_{h}^\top\chi^*_{:\ell''} \right] \nonumber \\
& = \tilde{\mathcal{E}}_\sigma(\bar m,0,0,0) + -2\mathbb E_{\epsilon,\theta,\chi^*}\sum_\ell^L\left(\delta_{\ell,\epsilon}-\frac{1}{H}\sum_{h}^H\sigma(\bar m_{h}^\top\chi^*)_\ell\right)\frac{1}{H}\sum_{h}\partial_{1}\sigma(\bar m_{h}^\top\chi^*)_\ell \underbrace{m_{h}^\top\mathbb E\theta}_{=0} \\
&\quad \underbrace{{}-\mathbb E_{\epsilon,\theta,\chi^*}\sum_\ell^L\left(\delta_{\ell,\epsilon}-\frac{1}{H}\sum_{h}^H\sigma(\bar m_{h}^\top\chi^*)_\ell\right)\frac{1}{H}\sum_{h,\ell'}\partial^2_{\ell'\ell'}\sigma(\bar m_{h}^\top\chi^*)_\ell\sum_{f,f'}^Fm_{hf}m_{hf'}(\delta_{f,f'}+\delta_{\ell',1}\mathbb E\theta_f\theta_{f'})}_{(1)} \nonumber \\
&\quad {}+ \underbrace{\mathbb E_{\epsilon,\theta,\chi^*}\sum_\ell^L\frac{1}{H^2}\sum_{h,h',\ell'}\partial_{\ell'}\sigma(\bar m_{h}^\top\chi^*)_\ell\partial_{\ell'}\sigma(\bar m_{h'}^\top\chi^*)_\ell\sum_{f,f'}^Fm_{hf} m_{h'f'}(\delta_{f,f'}+\delta_{\ell',1}\mathbb E\theta_f\theta_{f'})}_{(2)} \nonumber
\end{align}
}
where we used the independence of the $\theta_f$ to factorize the expectation. We expand with respect to $\bar m$, discarding the 1st order terms because $\mathbb E\theta=0$.
{\small
\begin{align}
(1) &= c_2^{(2)}\sum_hm_{h}^\top\cov(\theta)m_{h} \\
&\quad {}+ \mathbb E_{\epsilon,\theta,\chi^*}\sum_\ell^L\frac{1}{2H}\mkern-3mu\sum_{h',\ell'',\ell'''}\mkern-6mu\partial^2_{\ell''\ell'''}\sigma(0)_\ell\bar m_{h'}^\top\chi^*_{:\ell''}\bar m_{h'}^\top\chi^*_{:\ell'''}\frac{1}{H}\sum_{h,\ell'}\partial^2_{\ell'\ell'}\sigma(0)_\ell m_{h}^\top(I_F+\delta_{\ell',1}\cov(\theta))m_{h} \nonumber \\
&\quad {}+\mathbb E_{\epsilon,\theta,\chi^*}\sum_\ell^L\frac{1}{H}\underbrace{\sum_{h',\ell'''}\partial_{\ell'''}\sigma(0)_\ell\bar m_{h'}^\top}_{=0}\chi^*_{:\ell'''}\frac{1}{H}\sum_{h,\ell',\ell''}\partial^3_{\ell'\ell'\ell''}\sigma(0)_\ell\bar m_{h}^\top\chi^*_{:\ell''} m_{h}^\top(I_F+\delta_{\ell',1}\cov(\theta))m_{h} \nonumber \\
&\quad {}+\mathbb E_{\epsilon,\theta,\chi^*}\sum_\ell^L(\sigma(0)_\ell-\delta_{\ell,1})\frac{1}{2H}\sum_{h,\ell',\ell'',\ell'''}\mkern-10mu \partial^4_{\ell'\ell'\ell''\ell'''}\sigma(0)_\ell\bar m_{h}^\top\chi^*_{:\ell''}\bar m_{h}^\top\chi^*_{:\ell'''} m_{h}^\top(I_F+\delta_{\ell',1}\cov(\theta))m_{h} \nonumber \\
&= c_2^{(2)}\sum_hm_{h}^\top\cov(\theta)m_{h} \\
&\quad{}+ \sum_\ell^L\frac{1}{2H^2}\sum_{h',\ell''}\partial^2_{\ell''\ell''}\sigma(0)_\ell\bar m_{h'}^\top(I_F+\delta_{\ell'',1}\cov(\theta))\bar m_{h'}\sum_{h,\ell'}\partial^2_{\ell'\ell'}\sigma(0)_\ell m_{h}^\top(I_F+\delta_{\ell',1}\cov(\theta))m_{h} \nonumber \\
&\quad {}+\mkern-5mu\sum_\ell^L(\sigma(0)_\ell-\delta_{\ell,1})\mkern-5mu\sum_{\ell',\ell''}\frac{1}{2H}\mkern-3mu\sum_{h}\partial^4_{\ell'\ell'\ell''\ell''}\sigma(0)_\ell \bar m_{h}^\top(I_F+\delta_{\ell'',1}\cov(\theta))\bar m_{h} m_{h}^\top(I_F+\delta_{\ell',1}\cov(\theta))m_{h} \nonumber \\
&= c_2^{(2)}\sum_hm_{h}^\top\cov(\theta)m_{h} + \sum_\ell^L\frac{1}{2H^2}\sum_{h'}\partial^2_{1,1}\sigma(0)_\ell\bar m_{h'}^\top\cov(\theta)\bar m_{h'}\sum_{h}\partial^2_{1,1}\sigma(0)_\ell m_{h}^\top\cov(\theta)m_{h} \nonumber \\
&\quad {}+\sum_\ell^L(\sigma(0)_\ell-\delta_{\ell,1})\sum_{\ell',\ell''}\frac{1}{2H}\sum_{h}\partial^4_{\ell'\ell'\ell''\ell''}\sigma(0)_\ell \bar m_{h}^\top(I_F+\delta_{\ell'',1}\cov\theta)\bar m_{h} m_{h}^\top(I_F+\delta_{\ell',1}\cov\theta)m_{h}
\end{align}
}
where we discarded a term because of $\bar m_{:f}^\top\mathds{1}_H=0$ and used that $\sum_{\ell'}\partial^2_{\ell'\ell'}\sigma(0)_\ell=0$ for all $\ell$. Consequently there are constants $\tilde c_{1,i}^{(4)}>0, \tilde c^{(4)}_{2,i}, \tilde c^{(4)}_{3,i}\in\mathbb R$ for all $L\geq 3$ independent of $\bar m$ such that
\begin{align}
(1) &= c_2^{(2)} (I_H\otimes\cov\theta) \cdot(m, m) + \sum_i^n\tilde c_{1,i}^{(4)}|\bar m_{:\bar f_i}||_2^2 \left(I_H\otimes\cov\theta\right) \cdot(m, m) \nonumber \\
& \quad {}+\sum_i^n\mathrm{Diag}(\bar m_{:\bar f_i}^{\odot 2})\otimes\left(\tilde c^{(4)}_{2,i}I_F+\tilde c^{(4)}_{3,i}\cov\theta\right)\cdot(m, m)\ .
\end{align}
Turning to the second term,
\begin{align}
(2) &= c_1^{(2)}\sum_{h,h'}m_{h}^\top m_{h'}+c_3^{(2)}\sum_{h,h'}m_{h}^\top\cov(\theta)m_{h'} \\
& {}+\sum_\ell^L\frac{1}{H^2}\mkern-6mu\sum_{h,h',\ell',\ell''}\mkern-6mu\partial^2_{\ell'\ell''}\sigma(0)_\ell\partial^2_{\ell'\ell''}\sigma(0)_\ell\bar m_{h}^\top(I_F+\delta_{\ell'',1}\cov(\theta))\bar m_{h'} m_{h}^\top(I_F+\delta_{\ell',1}\cov(\theta))m_{h'} \nonumber
\end{align}
where we kept only terms with even number of $h$ and $h'$ because of $m_{:f}^\top\mathds{1}_H=0$ and took the expectation. Consequently there are constants $\tilde c^{(4)}_{4,i}>0, \tilde c^{(4)}_{5,i}>0$ for all $L\geq 2$ independent of $\bar m$ such that
\begin{align}
(2) &= \mathds{1}_H\mathds{1}_H^\top\otimes(c_1^{(2)}I_F+c_3^{(2)}\cov\theta)\mkern-2mu\cdot\mkern-2mu(m,m) + \sum_i^n\bar m_{:\bar f_i}\bar m_{:\bar f_i}^\top\otimes\left(\tilde c^{(4)}_{4,i}I_F+\tilde c^{(4)}_{5,i}\cov\theta\right)\mkern-2mu\cdot\mkern-2mu(m, m)
\end{align}
Assembling the two parts together we obtain
\begin{align}
& \tilde{\mathcal{E}}_\sigma(m+\bar m,0,0,0) = \tilde{\mathcal{E}}_\sigma(\bar m,0,0,0)+\tilde{\mathcal{E}}_\sigma(m,0,0,0)-\tilde{\mathcal{E}}_\sigma(0,0,0,0) \\
&\quad {}+\sum_i^n\tilde c_{1,i}^{(4)}||\bar m_{:\bar f_i}||_2^2 \left(I_H\otimes\cov\theta\right) \mkern-2mu\cdot\mkern-2mu(m, m) + \sum_i^n\mathrm{Diag}(\bar m_{:\bar f_i}^{\odot 2})\otimes\left(\tilde c^{(4)}_{2,i}I_F+\tilde c^{(4)}_{3,i}\cov\theta\right)\mkern-2mu\cdot\mkern-2mu(m, m) \nonumber \\
&\quad {}+ \sum_i^n\bar m_{:\bar f_i}\bar m_{:\bar f_i}^\top\otimes\left(\tilde c^{(4)}_{4,i}I_F+\tilde c^{(4)}_{5,i}\cov\theta\right)\cdot(m, m) + \mathcal O(||\bar m||_F^4, ||m||_F^4)\ . \nonumber
\end{align}
For a particular feature $f$ and a particular magnetization $m_{:f}$ we extract the corresponding term to obtain
\begin{align}
\nabla_{m_f,m_f}^2\tilde{\mathcal{E}}_\sigma(\bar m,0,0,0) &= \nabla_{m_f,m_f}^2\tilde{\mathcal{E}}_\sigma(0,0,0,0) +
\sum_i^nc_{1,i}^{(4)}||\bar m_{:\bar f_i}||_2^2 I_H \\
& {}+\sum_i^nc^{(4)}_{2,i}\mathrm{Diag}(\bar m_{:\bar f_i}^{\odot 2}) + \sum_i^nc^{(4)}_{3,i}\bar m_{:\bar f_i}\bar m_{:\bar f_i}^\top + M+\mathcal O(||\bar m||_F^4) \nonumber
\end{align}
with $c_{1,i}^{(4)}=\var(\theta_f)\tilde c_{1,i}$, $c^{(4)}_{2,i}=\tilde c^{(4)}_{2,i}+\tilde c^{(4)}_{3,i}\var(\theta_f)$ and $c^{(4)}_{3,i}=\tilde c^{(4)}_{4,i}+\tilde c^{(4)}_{5,i}\var(\theta_f)$.
\end{proof}

\subsection{Expressiveness and performances of different activation functions}
\label{sec:BayesProofs}
In this part we derive the results about the Bayes risk and the expressivity of the softmax and softmax-1 activation functions.

\begin{proof}[Proof of Proposition \ref{res:BayesRisk}]
The Bayes-optimal estimator is the posterior mean under our probabilistic model. It is given by the conditional probability $P(\epsilon=\ell|X, \{k_f^*\}_{f\in[F]})$:
\begin{align}
\hat{y}_\mathrm{Bayes}(X, k^*) &= \sum_{\ell}^L P(\epsilon=\ell|X, k^*)X_\ell\ .
\end{align}
We can compute it using the Bayes formula:
\begin{align}
    P(\epsilon=\ell|X, \{k_f^*\}_{f\in[F]}) = \frac{P(\epsilon=\ell, X| \{k_f^*\}_{f\in[F]})}{P(X| \{k_f^*\}_{f\in[F]})}\ .
\end{align}

The numerator is computed as
\begin{align}
    P(\epsilon=\ell, X| \{k_f^*\}_{f\in[F]}) &= \int_\theta P(\epsilon=\ell, \theta, X| \{k_f^*\}_{f\in[F]})\ d\theta \\
    &= \int_\theta P(X|\epsilon=\ell, \theta, \{k_f^*\}_{f\in[F]})P(\epsilon=\ell)P(\theta)\ d\theta\ ,
\end{align}
where
\begin{align}
    P(X|\epsilon=\ell, \theta, \{k_f^*\}_{f\in[F]}) &= \frac{1}{Z}e^{-\frac{\|X_{\ell} - \sum_{f}\theta_f k^*_f\|^2_2}{2}} \prod_{\ell'\neq \ell}^L e^{-\frac{\|X_{\ell'}\|^2_2}{2}} \\ 
    &= \exp{\left(-\frac{\|\sum_{f}\theta_f k^*_f\|^2_2 - 2 X_\ell^T(\sum_{f}\theta_f k^*_f)}{2}\right)} \frac{1}{Z}\prod_{\ell'}^L e^{-\frac{\|X_{\ell'}\|^2_2}{2}} \\
&= \exp{-\frac{\|\theta\|^2_2}{2}} \exp{X_\ell^T(\hat{k}(\theta))} \cdot \frac{1}{Z}\prod_{\ell'}^L e^{-\frac{\|X_{\ell'}\|^2_2}{2}}\ .
\end{align}

And denominator is a sum over $\ell'$ of the conditional $P(\epsilon=\ell, X|\{k^*_f\}_{f\in[F]})$. Combining it all together, we get 
\begin{align}
    P(\epsilon=\ell|X, \{k_f^*\}_{f\in[F]}) = \frac{\int_\theta \exp{-\frac{\|\theta\|^2_2}{2}} \exp{X_\ell^T(\hat{k}(\theta))}P(\theta) d\theta}{\sum_{\ell'}^L \int_\theta \exp{-\frac{\|\theta\|^2_2}{2}} \exp{X_{\ell'}^T(\hat{k}(\theta))}P(\theta) d\theta}.
\end{align}
\end{proof}

\begin{proof}[Proof of Proposition \ref{res:exprSoftmax}]
We show that the softmax (and softmax-v) attention is not well specified for our data model, while the softmax-1 is well specified.

The intuition is that for softmax one head cannot be good at the same time on a spike $\hat k$ and on the opposite $-\hat k$, and has to return noise in some cases. Assume that the softmax(-v) attention is well-specified, i.e. the reparameterized loss is $\tilde{\mathcal{E}}_\sigma(m,r,0,0)\approx 0$. Take disjoint $S\subset\mathbb R^F$ and $\bar S=\{-\theta, \theta\in S\}$ such that $P_\theta(S)>0$ and $P_\theta(\bar S)>0$. Recall that by our characterization

\begin{align}
\tilde{\mathcal{E}}_\sigma(m,r,0,0) &= \mathbb E_{\epsilon,\theta,\chi,\xi}\left[\sum_\ell^L\left(\delta_{\ell,\epsilon}-\sum_h^Hv_h\sigma(\chi_h)_\ell\right)^2\right] \approx 0 \\
\chi_{h\ell} &= \sum_f^Fm_{hf}\chi_{f\ell}^*+\sum_{h'}^Hr_{hh'}\xi_{h'}\ ,
\end{align}
where $v_h=\frac{1}{H}$ for softmax or some arbitrary constants for softmax-v attention.
Since we assumed that the model is well-specified, we get $\sum_h^Hv_h\sigma(\chi_h)_\ell\approx \delta_{\ell,\epsilon}$, and summing this expressions for all $\ell$:
\begin{align}
    1 = \sum_\ell\delta_{\ell,\epsilon}\approx \sum_\ell\sum_h^Hv_h\sigma(\chi_h)_\ell = \sum_h^Hv_h\sum_\ell\sigma(\chi_h)_\ell = \sum_h^Hv_h.
\end{align}
Now, let's fix the randomness in $\xi_{h'}$ and $\chi^*_f$ and only switch between $\theta$ and $-\theta$, we will denote the switched version of the random variable ${\chi}_{h}$ $\tilde{\chi}_{h}$. Since the error should be a.s. approximately 0, in both cases, we get
\begin{align}
\label{eqApp:softmax_sum}
    \sum_h v_h\left(\frac{e^{\chi_{h\ell}}}{e^{\chi_{h\epsilon}} + Z} - \frac{e^{\tilde{\chi}_{h\ell}}}{e^{\tilde{\chi}_{h\epsilon}} + \tilde{Z}}\right) &\approx 0.
\end{align}
Consider $\ell=\epsilon$ and denote $|\theta_h| = |\sum_f m_{hf}\theta_f|$ and $Z' = Z\exp{(-\sum_{f}m_{hf}(\chi^*_{f\epsilon}-\theta_f) - \sum_{h'}r_{hh'}\xi_h')}$, we get
\begin{align}
    \sum_h v_h\left(\frac{e^{|\theta_h|}}{e^{|\theta_h|} + Z'} - \frac{e^{-|\theta_h|}}{e^{-|\theta_h|} + Z'}\right).
\end{align}
It is left to notice that when $|\theta_h|$ is bounded away from 0, the difference $\left(\frac{e^{|\theta_h|}}{e^{|\theta_h|} + Z'} - \frac{e^{-|\theta_h|}}{e^{-|\theta_h|} + Z'}\right)$ is also bounded from zero by some constant $c_{|\theta|}$ for all $h\in[H]$, therefore taking all possible values of $|\theta_h|$ over the set $S$ we can find such constant $c_{|\theta|}$, that it bounds the difference. Combining it together with $\sum_h v_h\approx 1$, we get
\begin{align}
    \sum_h v_h\left(\frac{e^{|\theta_h|}}{e^{|\theta_h|} + Z'} - \frac{e^{-|\theta_h|}}{e^{-|\theta_h|} + Z'}\right) &\approx c_{|\theta|} > 0.
\end{align}
We obtain a contradiction with \ref{eqApp:softmax_sum}.

Instead, the softmax-1 is well specified, because it allows a head not to return noise in case it is not well aligned with any token. Consider a large signal $||\theta||_2>B$, $B\to\infty$, and take $H=2$ opposite heads $m_1\in\mathbb R^F$ and $m_2=-m_1$. Assume that $m_1$ is chosen such that the hyperplane $m_1^\top\theta=0$ has a null probability. Take $b_1=b_2=B^{\sfrac{3}{2}}$ the biases of the softmax-1, $v=H$, and scale $m_1$ and $m_2$ as $B$. By symmetry assume that $m_1^\top\theta>0$. Then one has the scalings
\begin{align}
\chi_{1\epsilon} &= m_1^\top\theta+\mathcal O(B)=\Theta(B^2) & \chi_{1,\ell\neq\epsilon} &= m_1^\top\chi_\ell^*+\mathcal O(1)=\Theta(B) \\
\chi_{2\epsilon} &= m_2^\top\theta+\mathcal O(B)\ll -1 & \chi_{2,\ell\neq\epsilon} &= m_2^\top\chi_\ell^*+\mathcal O(1)=\Theta(B)
\end{align}
Consequently the attention scores are
\begin{align}
\sigma(\chi_1)_\epsilon &= \frac{e^{\Theta(B^2)}}{e^{\Theta(B^{\sfrac{3}{2}})}+e^{\Theta(B^2)}+e^{\Theta(B)}}\to 1 & \sigma(\chi_1)_{\ell\neq\epsilon} &= \frac{e^{\Theta(B)}}{e^{\Theta(B^{\sfrac{3}{2}})}+e^{\Theta(B)}}\to 0 & \sigma(\chi_2)_\ell &\to 0 \\
&\frac{1}{H}\sum_h^H\sigma(\chi_h)_\epsilon \to 1 & &\frac{1}{H}\sum_h^H\sigma(\chi_h)_{\ell\neq\epsilon} \to 0
\end{align}
and $\tilde{\mathcal{E}}_\sigma(m,r,b,v)=0$.

\end{proof}

\newpage

\section{Dynamical phase transition}
\label{secApp:phaseTransition}
We provide a heuristic argument and numerical evidence for the presence of the dynamical phase transition stated in Conjecture~\ref{conj:phase}. The specialization dynamics is controlled by the escape from the unspecialized saddle point of the loss. Having $m(\tau^\mathrm{u})$ the alignment at the unspecialized saddle point, expanding $m$ around it, according to Prop.~\ref{res:sgd} the dynamics is
\begin{align}
\frac{\partial}{\partial\tau}m(\tau) &= -\nabla_m\tilde{\mathcal{E}}_\sigma(m(\tau)) \\
&= -\nabla^2_{m,m}\tilde{\mathcal{E}}_\sigma(m(\tau^\mathrm{u}))\cdot(m(\tau)-m(\tau^\mathrm{u}))\ .
\end{align}
We integrate it to obtain the exponential escape
\begin{align}
m(\tau) &= e^{-(\tau-\tau^\mathrm{u})\nabla^2_{m,m}\tilde{\mathcal{E}}_\sigma(m(\tau^\mathrm{u}))}\cdot m(\tau^\mathrm{u})\ .
\end{align}
We now consider the eigendirection $s\in\mathbb R^{H\times F}$ of minus the Hessian $-\nabla^2_{m,m}\tilde{\mathcal{E}}_\sigma(m(\tau^\mathrm{u}))$ with most positive eigenvalue $c$. As we show in Lemma~\ref{resApp:hessI}, $s$ corresponds to a direction of specialization of the heads. We call $m_s$ the projection of $m$ onto $s$. The dynamics is then, up to terms that take longer to grow
\begin{align}
m_s(\tau) &= e^{c(\tau-\tau^\mathrm{u})}m_s(\tau^\mathrm{u})\ .
\end{align}
For $\delta>0$ small enough (independently of $D$, $\eta$) we define the specialization time $\tau^\mathrm{s}$ to be such that $||m_s(\tau^\mathrm{s})-m_s(\tau^\mathrm{u})||_F=\delta$. We have
\begin{align}
\tau^\mathrm{s} &= \tau^\mathrm{u}+\frac{1}{c}\left(\log(\delta+||m_s(\tau^\mathrm{u})||_F)-\log ||m_s(\tau^\mathrm{u})||_F\right)\ .
\end{align}
The initial specialized component $m_s$ is a random variable that depends on the random initialization of~$k^\star$. We consider the limits $D\to\infty$, and later $\eta\to 0$. Its amplitude scales like $m_s=D^{-\sfrac{1}{2}}\eta\tilde m_s\to 0$ with $\tilde m_s=\Theta_{D,\eta}(1)$. We rescale the time as $\tilde\tau=\tau\log(\sqrt D/\eta)^{-1}$. According to Prop.~\ref{res:1stPhase} the unspecialized time is constant $\tau^\mathrm{u}=\Theta_{D,\eta}(1)$. We obtain
\begin{align}
\tilde\tau^\mathrm{s} &= \frac{1}{c}+\frac{1}{c\log(\sqrt D/\eta)}\left(c\tau^\mathrm{u}+\log\delta-\log ||\tilde m_s(\tau^\mathrm{u})||_F\right)\ . \label{eq:tildeTauSpe}
\end{align}
This expression shows that $\tilde\tau^\mathrm{s}$ concentrates to a deterministic quantity $1/c$ and that the specialization transition is sharp at $\tilde\tau^\mathrm{s}$. Indeed, in the limit $D\to\infty$ we have that $\tilde\tau^\mathrm{s}\to 1/c$. $c$ is a random quantity that depends on the realization of the data and of $k^*$; its variance is $1/\sqrt D$ and thus it concentrates. Moreover, $\tilde\tau^\mathrm{s}$ does not depend on $\delta$ in the leading order and the transition is thus sharp.

We can consider a more common scenario where the dataset is given and $D$ is fixed. In this case, one can still take the limit of small initialization $\eta\to 0$. The same analysis leading to Eq.~\eqref{eq:tildeTauSpe} holds. The transition is still sharp at $\tilde\tau^\mathrm{s}=1/c$ but $c$ now admits random fluctuations from one realization of the data to another. We illustrate this in Fig.~\ref{fig:phaseTransition} for the flipping spike at $F=H=2$. We consider several independent realizations of the data and the SGD. We show that for various $D$, for $\eta$ going to 0, the empirical means of $\tilde\tau^\mathrm{s}$ collapse to the same value as $\log(\sqrt D/\eta)\to 0$. The empirical variance of $\tilde\tau^\mathrm{s}$ decreases as $\log(\sqrt D/\eta)\to 0$ but go to 0 only when $D\to\infty$.

\begin{figure}[h!]
    \centering
    \includegraphics[width=0.9\linewidth]{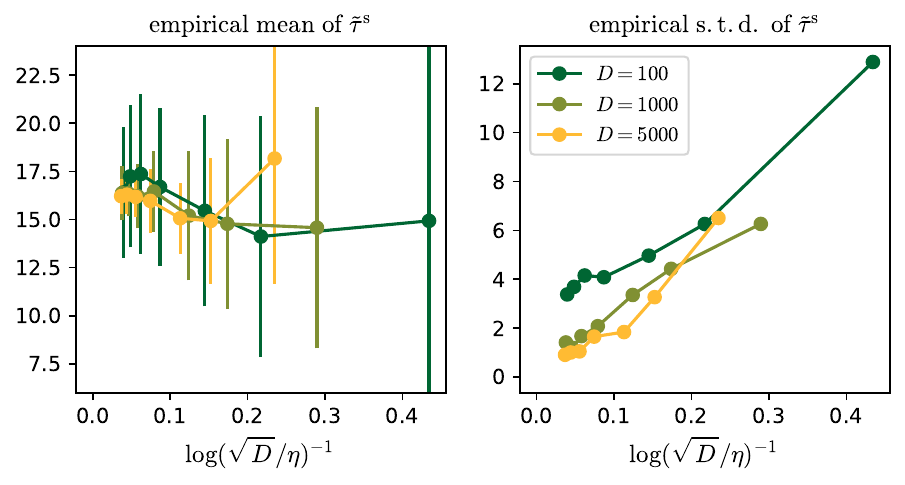}
    \caption{Phase transition: concentration of the specialization time $\tilde\tau^\mathrm{s}$. Numerical simulations of SGD. We consider sequence length $L=5$, $H=2$ heads with $\sigma$ softmax attention, $F=2$ features, $\theta$ drawn from the flipping spike distribution, with signal strengths $\nu_1=\nu_2=2$. The threshold to determine the specialization is $\delta=0.2$. The means and variances are empirically computed over 100 independent runs for each point.}
    \label{fig:phaseTransition}
\end{figure}

We show that the insights of our analysis extend to semi-realistic data, considering sequences based on the MNIST detection task described in Appendix~\ref{secApp:mnist}. $D=\tilde D=784$ is fixed and we take the limit of small initializations $\eta\to 0$. We define each feature/direction $f$ to be the average digit $f$, for $f=0,\ldots,9$. In Fig.~\ref{fig:phaseTransition_mnist} we see that the rescaled specialization times for each feature concentrate to a deterministic value, independent of the run and the initial condition, different for each $f$.

\begin{figure}[h!]
    \centering
    \includegraphics[width=0.9\linewidth]{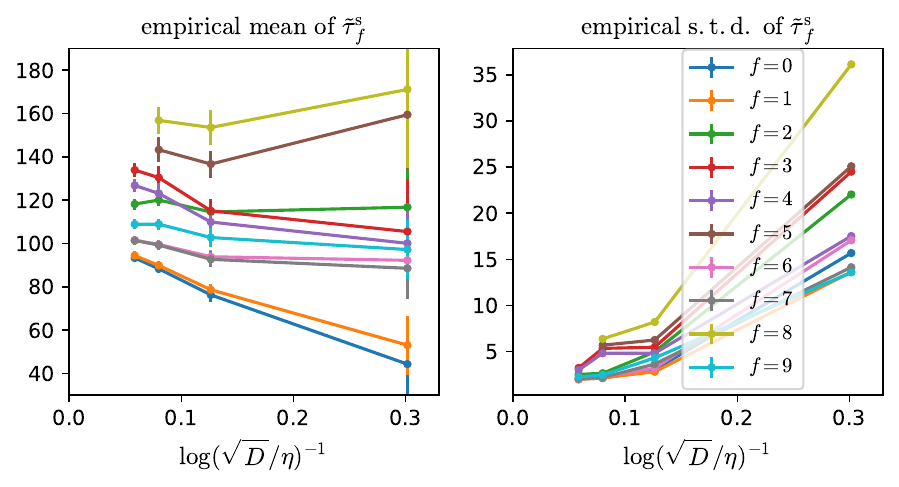}
    \caption{Phase transition: concentration of the specialization time $\tilde\tau_f\tau^\mathrm{s}$. Numerical simulations of SGD on the MNIST detection task App.~\ref{secApp:mnist}. $H=10$ heads with $\sigma$ softmax attention. The threshold to determine the specialization time $\tau_f^\mathrm{s}$ in each direction $f$ is $||m_{:,f}-H^{-1}\sum_h^Hm_{h,f}\mathds{1}_H||_2>1$. The means and variances are empirically computed over 100 independent runs for each point.}
    \label{fig:phaseTransition_mnist}
\end{figure}

\clearpage
\newpage

\section{MNIST digits detection}
\label{secApp:mnist}
To extend the scope of our theoretical predictions to the case of more complex data, we conducted experiments where the model is trained to detect a handwritten digit from a sequence of patches where the other patches are pure noise. For each sequence we take the relevant token (patch) to be uniformly sampled from the MNIST dataset \cite{deng2012mnist}. We denote the dataset $\mathcal S=\{\tilde X_\mu\in\mathbb R^{\tilde D}\}_{\mu\in[\tilde N]}$, $\tilde N=10^4$, $\tilde D=784$. The MNIST dataset is normalized in the following way: the values of the pixels are first rescaled as $\tilde X_\mu\mapsto\tilde X_\mu/255\in[0,1]$. The mean direction is then removed according to $\tilde X_\mu\mapsto\tilde X_\mu-\tilde N^{-1}\sum_{\mu'}^{\tilde N}\tilde X_{\mu'}$. Each sequence is constructed as
\begin{align}
X_\ell\sim\mathcal N(\nu\delta_{\ell,\epsilon}\hat k,I_D)\ ,\qquad \hat k\sim\mathrm{unif}(\mathcal S)\ .
\end{align}
The number of patches is $L=5$; the signal strength is $\nu=0.3$. The attention model is the same as in the main eq.~\eqref{eq:attention_model}.

\section{Numerical simulation details}
\label{secApp:numerical}
For the reparametrized population loss $\tilde{\mathcal E}$ estimation we discretize the flow with a step size $\delta=0.02$. We use Monte Carlo integration with $10^5$ samples to compute the expectations. We keep the same samples for each step of the gradient descent, which doesn't affect the behavior of the model as can be seen in Fig. \ref{6_fig:app_dif_magnetization}.

For the initialization, when not comparing to SGD, we add initial noise $\mathcal N(0,10^{-4})$ to $m$ and $r$, which allows to break the initial symmetry in the parameters. When comparing to SGD we initialize $m$ and $r$ to their empirical values. For SGD and the loss $\mathcal E$, we take $N_b=D$ and learning rate $\gamma=0.02$.

We provide the code to run our predictions in the supplementary material. Running one gradient descent takes a few minutes to half an hour and a few GBs on a local GPU. Overall we ran a thousand of descents.

\begin{figure}[h!]
    \centering
    \includegraphics[width=0.5\linewidth]{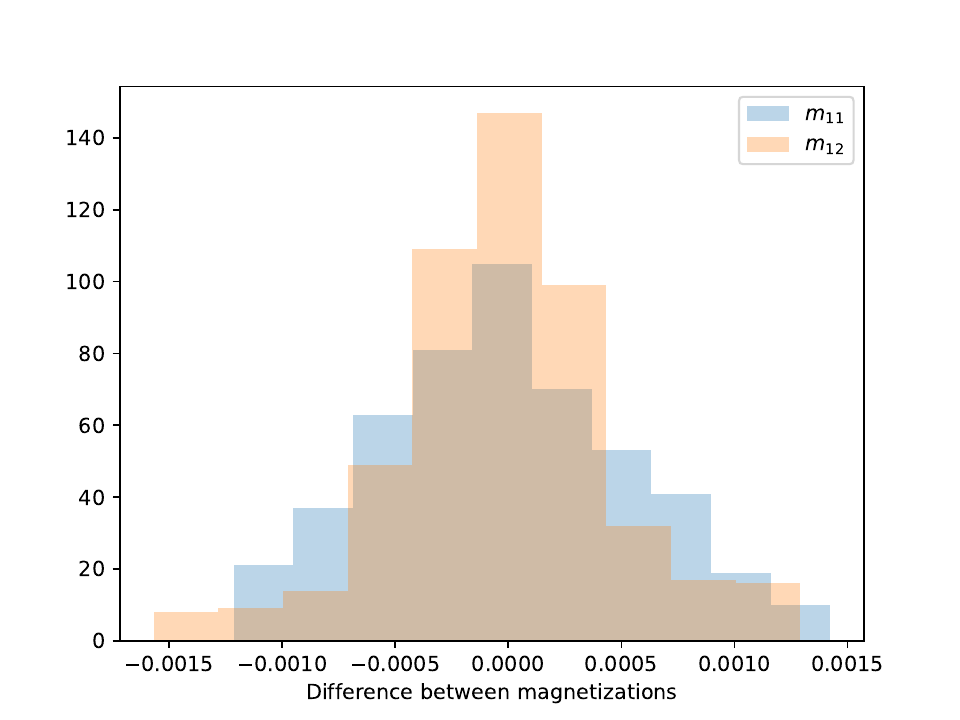}
    \caption{Histogram of the differences between magnetizations of the first head in the model trained with changed MC samples and the same MC samples at each step of the training. The data are sampled from the flipping spike distribution, $H=2, F=2, \nu_1=\nu_2=2, L=8$.}
    \label{6_fig:app_dif_magnetization}
\end{figure}